\definecolor{red}{HTML}{E51400}  
\definecolor{blue}{HTML}{0050EF} 
\definecolor{green}{HTML}{008A00} 
\definecolor{purple}{HTML}{AA00FF} 
\definecolor{dark-red}{rgb}{0.4, 0.15, 0.15}
\definecolor{dark-blue}{rgb}{0.15, 0.15, 0.4}
\definecolor{medium-red}{rgb}{0.5, 0, 0}
\definecolor{medium-blue}{rgb}{0, 0, 0.5}
\definecolor{light-red}{rgb}{0.7, 0, 0}
\definecolor{light-blue}{rgb}{0, 0, 0.7}
\newtheorem{theorem}{\bf Theorem}
\newtheorem{lemma}{\bf Lemma}
\newtheorem{condition}{Condition}
\newtheorem{corollary}{\bf Corollary}
\newtheorem{assumption}{Assumption}
\theoremstyle{definition}
\newtheorem{remark}{\bf Remark}
\definecolor{red}{HTML}{E51400} 
\definecolor{blue}{HTML}{0050EF} 
\definecolor{green}{HTML}{008A00} 
\definecolor{purple}{HTML}{AA00FF} 
\definecolor{orange}{HTML}{FF7F00}
\definecolor{gray}{HTML}{848482}
\definecolor{Gray}{gray}{0.85}
\definecolor{LightGray}{gray}{0.96}
\newcommand{\rev}[1]{{{#1}}}
\newcommand{\carlee}[1]{{\textcolor{orange}{CJW: #1}}}
\DeclareMathOperator*{\argmin}{argmin}
\DeclareMathOperator*{\argmax}{argmax}
\newcommand{\ubar}[1]{\underaccent{\bar}{#1}}
\newcommand{\norm}[1]{\left\lVert#1\right\rVert}
\newcommand{\cS}{\mathcal{S}}
\newcommand{\abs}[1]{\left| #1 \right|}
\newcommand{\R}{\mathbb{R}}
\newcommand{\Z}{\mathbb{Z}}
\newcommand{\D}{\mathbb{D}}
\newcommand{\E}{\mathbb{E}}
\newcommand{\I}{\mathbb{I}}
\newcommand{\bc}{\boldsymbol{c}}
\newcommand{\be}{\boldsymbol{e}}
\newcommand{\bp}{\boldsymbol{p}}
\newcommand{\bX}{\boldsymbol{X}}
\newcommand{\bmu}{\boldsymbol{\mu}}
\newcommand{\logcoef}{\frac{\log (\frac{2mn}{\delta})}{2N_i(\cD)}}
\newcommand{\given}{\,\middle|\,}
\newcommand{\cD}{\mathcal{D}}
\newcommand{\cE}{\mathcal{E}}
\newcommand{\cI}{\mathcal{I}}
\newcommand{\cM}{\mathcal{M}}
\newcommand{\cV}{\mathcal{V}}
\newcommand{\cP}{{\mathcal{P}}}
\newcommand{\cQ}{{\mathcal{Q}}}
\newcommand{\defeq}{\vcentcolon=}
\newcommand{\offcmab}{Off-CMAB-T}
\newcommand*{\rom}[1]{\expandafter\@slowromancap\romannumeral #1@}
\newcommand{\ts}[1]{}
\newcommand{\compilefullversion}{true}
	\newcommand{\OnlyInFull}[1]{}
	\newcommand{\OnlyInShort}[1]{#1}
	\newcommand{\OnlyInFull}[1]{#1}%
	\newcommand{\OnlyInShort}[1]{}%
\newcommand{\compilehidecomments}{false}
	\newcommand{\wei}[1]{}
	\newcommand{\xutong}[1]{}
	\newcommand{\jinhang}[1]{}
	\newcommand{\siwei}[1]{}
        \newcommand{\carlee}[1]{}
\newcommand{\wei}[1]{{\color{blue}{[Wei: #1]}}}
\newcommand{\xutong}[1]{{\color{green} [Xutong: #1]}}
\newcommand{\jinhang}[1]{{\color{orange} [\text{Jinhang:} #1]}}
\newcommand{\siwei}[1]{{\color{red} [\text{Siwei:} #1]}}
\newenvironment{talign*}
 {\csname align*\endcsname}
 {\endalign}
\icmltitlerunning{Offline Learning for Combinatorial Multi-armed Bandits}
\begin{document}

\twocolumn[
\icmltitle{Offline Learning for Combinatorial Multi-armed Bandits}





\begin{icmlauthorlist}
\icmlauthor{Xutong Liu}{cmu}
\icmlauthor{Xiangxiang Dai}{cuhk}
\icmlauthor{Jinhang Zuo}{cityu}
\icmlauthor{Siwei Wang}{msra}
\icmlauthor{Carlee Joe-Wong}{cmu}
\icmlauthor{John C.S. Lui}{cuhk}
\icmlauthor{Wei Chen}{msra}
\end{icmlauthorlist}

\icmlaffiliation{cmu}{ECE Department, Carnegie Mellon University, Pittsburgh PA, United States}
\icmlaffiliation{cuhk}{CSE Department, Chinese University of Hong Kong, Hong Kong SAR, China}
\icmlaffiliation{msra}{Microsoft Research, Beijing, China}
\icmlaffiliation{cityu}{CS Department, City University of Hong Kong, Hong Kong SAR, China}
\icmlcorrespondingauthor{Xiangxiang Dai}{xxdai23@cse.cuhk.edu.hk}
\icmlcorrespondingauthor{Jinhang Zuo}{jinhang.zuo@cityu.edu.hk}
\icmlcorrespondingauthor{Wei Chen}{weic@microsoft.com}
\icmlkeywords{ABCD, ICML}

\vskip 0.3in
]



\printAffiliationsAndNotice{}  

\begin{abstract}
The combinatorial multi-armed bandit (CMAB) is a fundamental sequential decision-making framework, extensively studied over the past decade. However, existing work primarily focuses on the online setting, overlooking the substantial costs of online interactions and the readily available offline datasets.
To overcome these limitations, we introduce Off-CMAB, the first offline learning framework for CMAB. Central to our framework is the combinatorial lower confidence bound (CLCB) algorithm, which combines pessimistic reward estimations with combinatorial solvers. To characterize the quality of offline datasets, we propose two novel data coverage conditions and prove that, under these conditions, CLCB achieves a near-optimal suboptimality gap, matching the theoretical lower bound up to a logarithmic factor.
We validate Off-CMAB through practical applications, including learning to rank, large language model (LLM) caching, and social influence maximization, showing its ability to handle nonlinear reward functions, general feedback models, and out-of-distribution action samples that exclude optimal or even feasible actions. 
Extensive experiments on synthetic and real-world datasets for these applications further highlight the superior performance of CLCB. 
\end{abstract}


\section{Introduction}

Combinatorial multi-armed bandit (CMAB) is a fundamental sequential decision-making framework, designed to tackle challenges in combinatorial action spaces.
Over the past decade, CMAB has been extensively studied \cite{cesa2012combinatorial, bubeck2012towards, audibert2014regret, neu2015first,gai2012combinatorial, kveton2015tight, combes2015combinatorial, chen2016combinatorial, wang2017improving, merlis2019batch, saha2019combinatorial, zimmert2019beating, liu2024combinatorial,qin2014contextual, liu2023contextual, choi2024cascading, hwang2023combinatorial}, driving advancements in real-world applications like recommendation systems \cite{kveton2015cascading,li2016contextual,lattimore2018toprank,agrawal2019mnl}, healthcare \cite{lin2022optimal,verma2023restless,bouneffouf2020survey}, and cyber-physical systems \cite{gyorgy2007line,kveton2015combinatorial,li2019combinatorial,liu2023variance}. 

Most success stories of CMAB have emerged within the realm of online CMAB\footnote{See \cref{apdx_sec:related_work} for a comprehensive discussion on the related works.}, which relies on active data collection through online exploration. While effective in certain scenarios, this framework faces two major limitations.
On one hand, online exploration becomes impractical when it incurs prohibitive costs or raises ethical and safety concerns. On the other hand, they neglect offline datasets that are often readily available at little or no cost.

For instance, in healthcare systems \cite{Liu2020ReinforcementLF}, recommending optimal combinations of medical treatments—such as drugs, surgical procedures, and radiation therapy—requires extreme caution. Experimenting directly on patients is ethically and practically infeasible. Instead, leveraging pre-collected datasets of prior treatments can help to make informed decisions while ensuring patient safety. 
Similar happens for recommendation systems \cite{Chen2023OnTO} and autonomous driving \cite{Kiran2020DeepRL}, offline datasets such as user click histories and human driving logs are ubiquitous.
Leveraging these offline datasets can guide learning agents to identify optimal policies while avoiding the significant costs associated with online exploration—such as degrading user experience or risking car accidents. For more examples, see \cref{apdx_sec:just} for details. 

To address the limitations of online CMAB, we propose the first offline learning framework for CMAB (Off-CMAB), where we leverage a pre-collected dataset consisting of $n$ samples of combinatorial actions and their corresponding feedback data. Our framework handles rewards that are nonlinear functions of the chosen super arms 
and considers probabilistic feedback models that generalize the standard semi-bandit feedback model \cite{gai2010learning,chen2013combinatorial,kveton2015tight}, supporting a wide range of applications such as learning to rank \cite{liu2009learning}, large language model (LLM) caching \cite{zhu2023optimal}, and influence maximization \cite{kempe2003maximizing}. The objective is to identify a combinatorial action that 
	minimizes the {\em suboptimal gap}, defined as the reward difference between the optimal action and the identified action.

The key challenge of Off-CMAB lies in the absence of access to an online environment, which inherently limits the number of data samples available for each action. 
Furthermore, this problem becomes even more challenging with the combinatorially large action space, which complicates the search for optimal solutions, and the potential presence of out-of-distribution (OOD) samples, where the dataset may exclude optimal or even feasible actions.
To tackle these challenges, this work makes progress in answering the following two open questions: 

\textit{(1) Can we design a sample-efficient algorithm for Off-CMAB when the action space is combinatorially large? (2) How much data is necessary to find a near-optimal action, given varying levels of dataset quality?}


We answer these questions from the following perspectives: 

\textbf{Algorithm Design:}
To address the first question, we propose a novel combinatorial lower confidence bound (\texttt{CLCB}) algorithm that addresses the uncertainty inherent in passively collected datasets by leveraging the pessimism principle. At the base arm level, \texttt{CLCB} constructs high-probability lower confidence bounds (LCBs), penalizing arms with insufficient observations.
At the combinatorial action level,  \texttt{CLCB} utilizes an approximate combinatorial solver to handle nonlinear reward functions, effectively translating base-arm pessimism to action-level pessimism. This design prevents the selection of actions with high-fluctuation base arms, ensuring robust decision-making.

\textbf{Theoretical Analysis:}
For the second question, we introduce two novel data coverage conditions: (1) the infinity-norm and (2) 1-norm triggering probability modulated (TPM) data coverage conditions, which characterize the dataset quality. 
These conditions quantify the amount of data required to accurately estimate each action by decomposing the data needs of each base arm and reweighting them based on their importance. Under these conditions, we prove that \texttt{CLCB} achieves a near-optimal suboptimality gap upper bound of  $\tilde{O}(K^*\sqrt{C_{\infty}^*/n})$, where $K^*$ is the size of the optimal action, $C_{\infty}^*$ is the data coverage coefficient, and $n$ is the number of samples in the offline dataset. This result matches the lower bound $\Omega(K^*\sqrt{C_{\infty}^*/n})$ derived in this work up to a logarithmic factor. 
Our analysis carefully addresses key challenges, including handling nonlinear reward functions, confining uncertainties to base arms relevant to the optimal action, and accounting for arm triggering probabilities, enabling \texttt{CLCB} to achieve state-of-the-art performance with tighter bounds and relaxed assumptions for the real-world applications as discussed below.

\textbf{Practical Applications:}
We show the practicality of Off-CMAB by fitting real-world problems into our framework and applying CLCB to solve them, including (1) learning to rank, (2) LLM caching, and (3) social influence maximization (IM).  For the LLM cache problem, beyond directly fitting it into our framework, we improve existing results by addressing full-feedback arms, extending our approach to the online LLM setting with similar improvements. For social IM, our framework handles nuanced node-level feedback by constructing base-arm LCBs via intermediate UCB/LCBs, with additional refinements using variance-adaptive confidence intervals for improved performance. 

\textbf{Empirical Validation:
} Finally, extensive experiments on both synthetic and real-world datasets for learning to rank and LLM caching validate the superior performance of CLCB compared to baseline algorithms.

\section{Problem Setting}\label{sec: problem setting}
In this section, we introduce our model for combinatorial multi-armed bandits with probabilistically triggering arms (CMAB-T) and the offline learning problem for CMAB-T.

\subsection{Combinatorial Multi-armed Bandits with Probabilistically Triggered Arms}

The original combinatorial multi-armed bandits problem with probabilistically triggered arms (CMAB-T) is an online learning game between a learner and the environment in $n$ rounds. 
We can specify a CMAB-T problem by a tuple $\cI \defeq ([m], \D, \cS, \D_{\text{trig}}, R)$, where $[m]$ are base arms, $\cS$ is the set of feasible combinatorial actions,
$\D$ is the set of feasible distributions for the base arm outcomes,
$\D_{\text{trig}}$ is the probabilistic triggering function, and $R$ is the reward function. The details of each component are described below:

\textbf{Base arms.}
The environment has a set of $[m]=\{1,2,...,m\}$ base arms. 
Before the game starts, the environment chooses an \textit{unknown} distribution $\D_{\text{arm}} \in \D$ over the bounded support $[0,1]^m$.
At each round $t \in [n]$, the environment draws random outcomes $\bX_t=(X_{t,1},...X_{t,m})\sim \D_{\text{arm}}$.
Note that for a fixed arm $i$, we assume outcomes $X_{t,i}, X_{t',i}$ are independent across different rounds $t \neq t'$. However,
outcomes for different arms $X_{t,i}$ and $X_{t,j}$ for $i\neq j$ can be dependent within the same round $t$. 
We use $\bmu=(\mu_{1}, ..., \mu_{m})$ to denote the unknown mean vector, where $\mu_{i}\defeq\E_{\bX_t \sim \D_{\text{arm}}}[X_{t,i}]$ for each base arm $i$. 

\textbf{Combinatorial actions.}
At each round $t\in [n]$, the learner selects a combinatorial action $S_t\in \cS$, where $\cS$ is the set of feasible actions.
Typically, $S_t$ is a set of individual base arms $S\subseteq [m]$, which we refer to as a super arm. 
However, $S_t$ can be more general than the super arm, e.g., continuous arms are useful for applying CMAB to resource allocation \cite{zuo2021combinatorial}, which we emphasize as needed.

\textbf{Probabilistic arm triggering feedback.}
Motivated by the properties of real-world applications that will be introduced in detail in~\cref{sec:repr_app}, we consider a feedback process that involves scenarios where each base arm in a super arm $S_t$ does not always reveal its outcome, even probabilistically. For example, a user might leave the system randomly at some point before examining the entire recommended list $S_t$,  resulting in unobserved feedback for the unexamined items.
To handle such probabilistic feedback, we assume that after the action $S_t$ is selected, the base arms in a random set $\tau_t \sim \D_{\text{trig}}(S_t, \bX_t)$ are triggered depending on the outcome $\bX_t$, where $\D_{\text{trig}}(S, \bX)$ is an unknown probabilistic distribution over the subsets $2^{[m]}$ given $S$ and $\bX$.
This means that the outcomes of the arms in $\tau_t$, i.e., $(X_{t,i})_{i\in \tau_t}$, are revealed as feedback to the learner, which could also be involved in determining the reward of action $S_t$ as we introduce later. To allow the algorithm to estimate the mean $\mu_i$ directly from samples, we assume the outcome does not depend on whether the arm $i$ is triggered, i.e., $\E_{\bX \sim \D_{\text{arm}}, \tau \sim \D_{\text{trig}}(S,\bX)}[X_i | i\in \tau]=\E_{\bX\sim \D_{\text{arm}}}[X_i]$.
We use $p_i^{\D_{\text{arm}}, S}$ to denote the probability that base arm $i$ is triggered when the action is $S$ and the mean vector is $\bmu$.


\textbf{Reward function.}
At the end of round $t \in [n]$, the learner receives a nonnegative reward $R_t=R(S_t, \bX_t, \tau_t)$, determined by action $S_t$, outcome
$\bX_t$, and triggered arm set $\tau_t$. 
Similarly to~\cite{wang2017improving}, we assume the expected reward to be $r(S_t;\bmu_t)\defeq \E[R(S_t,\bX_t,\tau)]$, a function of
the unknown mean vector $\bmu$, where the expectation is taken over the randomness of $\bX_t$ and $\tau_t \sim \D_{\text{trig}}(S_t,\bX_t)$. 

\textbf{Reward conditions.} Owing to the nonlinearity of the reward and the combinatorial structure of the action, it is essential to give some conditions for the reward function to achieve any meaningful theoretical guarantee~\cite{wang2017improving}. We consider the following conditions:


\begin{condition}[Monotonicity, \citet{wang2017improving}]\label{cond:mono}
We say that a CMAB-T problem satisfies the monotonicity condition, if for any action $S \in \cS$, for any two distributions $\mathbb{D}_{\text{arm}}, \mathbb{D}'_{\text{arm}}\in \D$ with mean vectors $\bmu,\bmu' \in  [0,1]^m$ such that $\mu_i \le \mu'_i $ for all $i \in [m]$, we have $ r(S;\bmu) \le r(S;\bmu') $. 
\end{condition}

\begin{condition}[1-norm TPM Bounded Smoothness, \citet{wang2017improving}]\label{cond:TPM}
We say that a CMAB-T problem satisfies the 1-norm triggering probability modulated (TPM) bounded smoothness condition with coefficient $B_1$, if there exists coefficient $B_1>0$ (referred to as smoothness coefficient), if for any two distributions $\mathbb{D}_{\text{arm}},\mathbb{D'}_{\text{arm}}\in \D$ with mean vectors $\bmu, \bmu' \in [0,1]^m$, and for any action $S \in \cS$, we have $|r(S;\bmu')-r(S;\bmu)|\le B_1\sum_{i \in [m]}p_{i}^{\mathbb{D}_{\text{arm}},S}|\mu_i-\mu'_i|$.
\end{condition}

\begin{remark}[Intuitions of \cref{cond:mono} and \cref{cond:TPM}]
Condition~\ref{cond:mono} indicates the reward is monotonically increasing when the parameter $\bmu$ increases.
Condition~\ref{cond:TPM} bounds the reward smoothness/sensitivity, i.e., the amount of the reward change caused by the parameter change from $\bmu$ to $\bmu'$. In the learning to rank (\cref{sec:app_ranking}), for example, these conditions upper bounds the difference in total number of purchases when the purchase probability for the items changes from $\bmu$ to $\bmu'$.
For Condition~\ref{cond:TPM}, the key feature is that the parameter change in each base arm $i$ is modulated by the triggering probability $p_i^{\bmu,S}$, saving a $p_{\min}$ factor in \cite{chen2016combinatorial} where $p_{\min}$ is the minimum positive triggering probability.
Intuitively, for base arm $i$ that is unlikely to be triggered/observed (small $p_i^{\bmu, S}$), Condition~\ref{cond:TPM} ensures that a large change in $\mu_i$ (due to insufficient observation) only causes a small change (multiplied by $p_i^{\bmu,S}$) in reward, saving a $p_{\min}$ factor in \cite{wang2017improving} where $p_{\min}$ is the minimum positive triggering probability.
In learning to rank application, for example, since users will never purchase an item if it is not examined, increasing or decreasing the purchase probability of an item that is unlikely to be examined (i.e., with small $p_i^{\bmu,S}$) does not significantly affect the total number of purchases.
    
\end{remark}

\subsection{Offline Data Collection and Performance Metric}

\textbf{Offline dataset.} Fix any CMAB-T problem $\cI$ together with its underlying distribution $\D_{\text{arm}}$. We consider the offline learning setting, that is, the learner only has access to a dataset $\cD$ consisting of $n$ feedback data $\cD \defeq \left\{ \left( S_t, \tau_{t}, (X_{t,i})_{i\in \tau_{t}} \right) \right\}_{t=1}^{n}$ collected a priori by an experimenter. 
Here, we assume the experimenter takes an \textit{unknown} data collecting distribution $\D_{\cS}$ over feasible actions $\cS$, such that $S_t$ is generated i.i.d. from $S_t\sim \D_{\cS}$ for any offline data $t \in [n]$.
After $S_t$ is sampled, the environment generates outcome $\bX_t\sim \D_{\text{arm}}$. 
Then $\tau_t \sim \D_{\text{trig}}(S_t, \bX_t)$ are triggered, whose outcome are recorded as $ (X_{t,i})_{i\in \tau_{t}}$. To this end, we use $p_i^{\mathbb{D}_{\text{arm}},\mathbb{D}_{\cS}}$ to denote the data triggering probability, i.e., $p_i^{\mathbb{D}_{\text{arm}},\mathbb{D}_{\cS}}=\E_{S\sim \mathbb{D}_{\cS}, \bX\sim \mathbb{D}_{\text{arm}}, \tau \sim \mathbb{D}_{\text{trig}} (S, \bX)}[\I \left\{i \in \tau \right\}]$, which indicates the frequency of observing arm $i \in [m]$.


\textbf{Approximation oracle and $\alpha$ approximate suboptimality gap.}
The goal of the offline learning problem for CMAB-T is to identify the optimal combinatorial action that maximizes the expected reward.
Correspondingly, the performance of an offline learning algorithm $A$ is measured by the \textit{suboptimality-gap}, defined as the difference in the expected reward between the optimal action $S^* \defeq\argmax_{S'\in \cS}r(S';\bmu)$ and the action $\hat{S}$ chosen by algorithm $A$ with dataset $\cD$ as input. 
For many reward functions, it is NP-hard to compute the exact $S^*$ even when $\bmu$ is known, so similar to~\cite{chen2013combinatorial,wang2017improving,liu2022batch,liu2024combinatorial1}, we assume that algorithm $A$ has access to an offline $\alpha$-approximation $\mathrm{ORACLE}$, which takes any mean vector $\bmu \in [0,1]^m$ as input, and outputs an $\alpha$-approximate solution $S\in \cS$, i.e., $S=\texttt{ORACLE}(\bmu)$ satisfies
\begin{equation}\label{eq:def_oracle}
   r(S;\bmu)\ge \alpha \cdot \max_{S'\in \cS}r(S';\bmu)
\end{equation}
Given any action $\hat{S}\in \cS$, the $\alpha$-approximate suboptimality gap over the CMAB-T instance $\cI$ with unknown base arm mean $\bmu$ is defined as
\begin{equation}\label{eq:def_reg}
    \mathrm{SubOpt}(\hat{S};\alpha, \cI) \defeq \alpha  \cdot r(S^*;\bmu)-r(\hat{S};\bmu),
\end{equation}

Our objective is to design an algorithm $A$ such that $ \mathrm{SubOpt}(\hat{S};\alpha, \cI)$ is minimized with high probability $1-\delta$, where the randomness is taken over the $(\D_{\cS}, \D_{\text{arm}}, \D_{\text{trig}})$.

\subsection{Data Coverage Conditions: Quality of the Dataset}

Since the offline learning performance is closely related to the quality of the dataset $\cD$, we consider the following conditions about the offline dataset:


\begin{condition}[Infinity-norm TPM Data Coverage] \label{cond:inf_norm_cov}
    For a CMAB-T instance $\cI$ with unknown distribution $\D_{\text{arm}}$ and mean vector $\bmu$, let $S^*=\argmax_{S\in \cS}r(S;\bmu)$.\footnote{Note that for simplicity, we choose an arbitrary optimal solution $S^*$ , and in practice, we can choose one that leads to the smallest coverage coefficient.}
    We say that the data collecting distribution $\D_{\cS}$ satisfies the infinity-norm triggering probability modulated (TPM) data coverage condition, if there exists a coefficient $C^*_{\infty}>0$ (referred to as coverage coefficient), such that 
    \begin{equation}
         \max_{i \in [m]}\frac{p_i^{\D_{\text{arm}},S^*}}{p_i^{\D_{\text{arm}}, \mathbb{D}_{\cS}}}\le C^*_{\infty}.
    \end{equation}
\end{condition}

\begin{condition}[1-norm TPM Data Coverage] \label{cond:1_norm_cov}
    For a CMAB-T instance $\cI$ with unknown distribution $\D_{\text{arm}}$ and mean vector $\bmu$, let $S^*=\argmax_{S\in \cS}r(S;\bmu)$.
    We say that the data collecting distribution $\D_{\cS}$ satisfies the 1-norm triggering probability modulated (TPM) data coverage condition, if there exists a coefficient $C^*_{1}>0$, such that 
    \begin{equation}
    \sum_{i \in [m]}\frac{p_i^{\D_{\text{arm}},S^*}}{p_i^{\D_{\text{arm}}, \mathbb{D}_{\cS}}}\le C^*_{1}.
\end{equation}
\end{condition}




\begin{remark}[Intuition of \cref{cond:inf_norm_cov} and \cref{cond:1_norm_cov}]
Both \cref{cond:inf_norm_cov} and \cref{cond:1_norm_cov} evaluate the quality of the dataset $\cD$, which directly impacts the amount of data required to accurately estimate the expected reward of the optimal $S^*$.
The denominator $p_i^{\D_{\text{arm}}, \mathbb{D}_{\cS}}$ represents the data generation rate for arm $i$, and $\frac{1}{p_i^{\D_{\text{arm}}, \mathbb{D}_{\cS}}}$ corresponds to the expected number of samples needed to observe one instance of arm $i$.
Incorporating similar triggering probability modulation as in \cref{cond:TPM}, we use $p_i^{\D_{\text{arm}},S^*}$ to reweight the importance of each arm $i$, and when $p_i^{\D_{\text{arm}},S^*}$ is small, the uncertainty associated with arm $i$ has small impact on the estimation. Consequently, a large amount of data is not required for learning about arm $i$. Notably, because we compare against the optimal super arm $S^*$,  we only require the weight $p_i^{\D_{\text{arm}},S^*}$ of the optimal action $S^*$ as the modulation. This is less restrictive than uniform coverage conditions that require adequate data for all possible actions, as used by \citet{chen2019information,jiang2019value}. 


The primary difference between \cref{cond:inf_norm_cov} and \cref{cond:1_norm_cov} lies in the computation of the total expected data requirements for all arms. \cref{cond:inf_norm_cov} adopts a worst-case perspective using the $\max$ operator, whereas \cref{cond:1_norm_cov} considers the total summation over $i \in [m]$. Generally, the relationship $C_1^* \leq K^* C_{\infty}^*$ holds. Depending on the application, different conditions may be preferable, offering varying guarantees for the suboptimality gap. Detailed discussion is provided in \cref{rmk:thm_dis}.

\end{remark}

\begin{remark}[Extension to handle out-of-distribution $\D_{\cS}$] Note that \cref{cond:inf_norm_cov} and \cref{cond:1_norm_cov} are restrictions on the base arm level. Hence, our framework is flexible and can accommodate any data collection distribution $\D_{\cS}$, including distributions over actions $\cS'$ that may assign zero probability to the optimal action $S^*$ or even extend beyond the feasible action set $\cS$. 
For example, in the LLM cache problem (\cref{app:LLM_cache}), the experimenter might ensure arm feedback by using an empty cache in each round, leveraging cache misses to collect feedback. In this case, the distribution $\D_{\cS}$ assigns zero probability to the optimal cache configuration as well as any reasonable
	cache configurations. 
\end{remark}

\section{\texttt{CLCB} Algorithm and Theoretical Analysis}\label{sec:upper_bound}
In this section, we first introduce the Combinatorial Lower Confidence Bound (\texttt{CLCB}) algorithm (\cref{alg:CLCB}) and analyze its performance in \cref{sec:upper_bound}. We then derive a lower bound on the suboptimality gap, and we show that our gap upper bound  matches this lower bound 
up to logarithmic factors.

The CLCB algorithm 
first computes high-probability lower confidence bounds (LCBs) for each base arm (line~\ref{line:LCB}). These LCB estimates are then used as inputs to a combinatorial oracle to select an action $\hat{S}$ that approximately maximizes the worst-case reward function $r(S^*; \ubar{\bmu})$ (line~\ref{line:oracle}).
The key part of \cref{alg:CLCB} is to conservatively use the LCB, penalizing each base arm by its confidence interval, $\sqrt{{\log (\frac{4mn}{\delta})}/{2N_i}}$.
This approach, rooted in the pessimism principle \cite{jin2020provably}, mitigates the impact of high fluctuations in empirical estimates caused by limited observations, effectively addressing the uncertainty inherent in passively collected data. 

\begin{algorithm}[!t]
    \caption{\texttt{CLCB}: Combinatorial Lower Confidence Bound Algorithm for Off-CMAB}
    \label{alg:CLCB}
    \begin{algorithmic}[1]
        \STATE \textbf{Input:} Dataset $\cD=\left\{ \left( S_t, \tau_{t}, (X_{t,i})_{i\in \tau_{t}} \right) \right\}_{t=1}^{n}$, computation oracle $\texttt{ORACLE}$, probability $\delta$.
		\FOR{arm $i \in [m]$}
            \STATE Calculate counter $N_i=\sum_{t=1}^n \I\{i \in \tau_t\}$.\label{line:counter}
            \STATE Calculate empirical mean $\hat{\mu}_i=\frac{\sum_{t=1}^n \I\{i \in \tau_t\}X_{t,i}}{N_i}$.\label{line:empirical_mean}
            \STATE Calculate LCB $\ubar{\mu}_i=\hat{\mu}_i-\sqrt{{\frac{\log (\frac{4mn}{\delta})}{2N_i}}}$.\label{line:LCB}
        \ENDFOR
        \STATE Call oracle $\hat{S}=\texttt{ORACLE}(\ubar{\mu}_1, ..., \ubar{\mu}_m)$.\label{line:oracle}
        \STATE \textbf{Return:} $\hat{S}$.
	\end{algorithmic}
\end{algorithm}

\begin{theorem}\label{thm:upper_bound}
    Let $\cI$ be a CMAB-T problem and $\cD$ a dataset with $n$ data samples. Let $\hat{S}$ denote the action given by \texttt{CLCB} (\cref{alg:CLCB}) using an $\alpha$-approximate oracle. If the problem $\cI$ satisfies (a) monotonicity (\cref{cond:mono}), (b) 1-norm TPM smoothness (\cref{cond:TPM}) with coefficient $B_1$ , and (c)  the infinity-norm TPM data coverage  condition (\cref{cond:inf_norm_cov}) with coefficient $C^*_{\infty}$; and the number of samples satisfies $n \ge \frac{8\log(\frac{m}{\delta})}{ \min_{i \in [m]: p_i^{\D_\text{arm}, S^*}>0} p_i^{\D_{\text{arm}}, \D_{\cS}}}$, then, with probability at least $1-\delta$ (the randomness is taken over the all distributions $\D_{\cS}, \D_{\text{arm}}, \D_{\text{trig}}$), the suboptimality gap satisfies: 
    \begin{align}\label{gap_c_max}
        \mathrm{SubOpt}(\hat{S};\alpha, \cI) &\le 2\alpha B_1 {\bar{K}_2^*}  \sqrt{\frac{2 C^*_{\infty}\log (2mn/\delta)}{n}},
    \end{align}
    where $\bar{K}_2^* \defeq \sum_{i\in[m]} \sqrt{ p_i^{\D_{\text{arm}}, S^*}}$ is the $\ell_2$-action size of $S^*$.
    
    Further, if problem $\cI$ satisfies the 1-norm TPM data coverage condition (\cref{cond:1_norm_cov}) with coefficient $C^*_{1}$, then, with probability at least $1-\delta$, the suboptimality gap satisfies:   \begin{align}\label{gap_c_1}
        \mathrm{SubOpt}(\hat{S};\alpha, \cI) &\le 2\alpha B_1  \sqrt{\frac{2\bar{K}^* C^*_{1}\log (2mn/\delta)}{n}},
    \end{align}
    where $\bar{K}^* \defeq \sum_{i\in[m]}p_i^{\D_{\text{arm}}, S^*} $ is the action size of $S^*$. 
\end{theorem}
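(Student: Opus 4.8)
The plan is to combine the pessimism principle with two concentration arguments and then close the bound using the data coverage condition. First I would set up two high-probability \emph{clean events}. The concentration event $\cE$ is that $|\hat\mu_i-\mu_i|\le\sqrt{\log(4mn/\delta)/(2N_i)}$ for every arm $i$. Because triggering does not bias the outcome (the modeling assumption $\E[X_i\mid i\in\tau]=\mu_i$) and outcomes are independent across rounds, conditioned on $N_i=s$ the estimate $\hat\mu_i$ averages $s$ independent $[0,1]$-valued samples of mean $\mu_i$, so Hoeffding gives failure probability $\delta/(2mn)$ for each fixed $(i,s)$; a union bound over $i\in[m]$ and $s\in\{1,\dots,n\}$ gives $\PR(\cE)\ge 1-\delta/2$. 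On $\cE$ we get pessimism $\ubar\mu_i\le\mu_i$ and $\mu_i-\ubar\mu_i\le 2\sqrt{\log(4mn/\delta)/(2N_i)}$. The counter event $\cE'$ is that $N_i\ge\tfrac12\,n\,p_i^{\D_{\text{arm}},\D_{\cS}}$ for all $i$ with $p_i^{\D_{\text{arm}},S^*}>0$; since $N_i$ is a sum of $n$ i.i.d.\ Bernoulli$(p_i^{\D_{\text{arm}},\D_{\cS}})$ indicators, a multiplicative Chernoff bound together with the sample-size hypothesis $n\ge 8\log(m/\delta)/\min_i p_i^{\D_{\text{arm}},\D_{\cS}}$ yields $\PR(\cE')\ge 1-\delta/2$, so $\cE\cap\cE'$ holds with probability at least $1-\delta$.

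On $\cE\cap\cE'$ I would run the pessimism decomposition
\begin{align*}
\mathrm{SubOpt}(\hat S;\alpha,\cI) &= \alpha\big(r(S^*;\bmu)-r(S^*;\ubar\bmu)\big) \\
&\quad + \big(\alpha r(S^*;\ubar\bmu)-r(\hat S;\ubar\bmu)\big) \\
&\quad + \big(r(\hat S;\ubar\bmu)-r(\hat S;\bmu)\big).
\end{align*}
The oracle guarantee $r(\hat S;\ubar\bmu)\ge\alpha\max_{S'}r(S';\ubar\bmu)\ge\alpha r(S^*;\ubar\bmu)$ makes the middle term nonpositive, and monotonicity (\cref{cond:mono}) with $\ubar\mu_i\le\mu_i$ makes the last term nonpositive. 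The first term is controlled by 1-norm TPM smoothness (\cref{cond:TPM}) applied with $S=S^*$, giving $\alpha\big(r(S^*;\bmu)-r(S^*;\ubar\bmu)\big)\le\alpha B_1\sum_i p_i^{\D_{\text{arm}},S^*}(\mu_i-\ubar\mu_i)$. Substituting the $\cE$-bound on $\mu_i-\ubar\mu_i$ and the $\cE'$-bound on $N_i$ leaves $\mathrm{SubOpt}(\hat S;\alpha,\cI)\le 2\alpha B_1\sum_i p_i^{\D_{\text{arm}},S^*}\sqrt{\log(4mn/\delta)/(n\,p_i^{\D_{\text{arm}},\D_{\cS}})}$, where only arms with $p_i^{\D_{\text{arm}},S^*}>0$ contribute.

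The two displayed bounds then follow from two ways of bounding this sum. Writing each summand as $\sqrt{p_i^{\D_{\text{arm}},S^*}}\cdot\sqrt{p_i^{\D_{\text{arm}},S^*}/p_i^{\D_{\text{arm}},\D_{\cS}}}$, the infinity-norm condition (\cref{cond:inf_norm_cov}) bounds the second factor by $\sqrt{C^*_\infty}$ uniformly and factors it out, producing $\bar K_2^*\sqrt{C^*_\infty}$ and hence \eqref{gap_c_max}; alternatively Cauchy--Schwarz gives $\sqrt{\sum_i p_i^{\D_{\text{arm}},S^*}}\cdot\sqrt{\sum_i p_i^{\D_{\text{arm}},S^*}/p_i^{\D_{\text{arm}},\D_{\cS}}}$, and the 1-norm condition (\cref{cond:1_norm_cov}) bounds the second factor by $\sqrt{\bar K^*C^*_1}$, yielding \eqref{gap_c_1}. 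The minor logarithmic mismatch is absorbed via $\log(4mn/\delta)\le 2\log(2mn/\delta)$.

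The delicate step is the concentration of $\hat\mu_i$ when $N_i$ is itself random and correlated with the observed outcomes: I would handle it by conditioning on the value $N_i=s$ (so that, by the unbiased-triggering assumption, the triggered outcomes are $s$ independent mean-$\mu_i$ samples) and union-bounding over $s\in\{1,\dots,n\}$, rather than treating $N_i$ as a fixed denominator. The second point requiring care is that the weighting $p_i^{\D_{\text{arm}},S^*}$ must be preserved intact from the smoothness step through to the coverage step; this is precisely what confines the uncertainty to arms relevant to $S^*$ and avoids any dependence on a minimum triggering probability $p_{\min}$.
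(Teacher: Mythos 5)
Your proposal is correct and follows essentially the same route as the paper's proof: the same two clean events (Hoeffding concentration conditioned on the counter value $N_i=s$ with a union bound over $s$, and a multiplicative-Chernoff counter event restricted to arms with $p_i^{\D_{\text{arm}},S^*}>0$), the same three-term pessimism decomposition with the oracle gap and pessimism gap killed by the approximation guarantee and monotonicity, the same application of 1-norm TPM smoothness at $S^*$, and the same two closings (factoring out $\sqrt{C^*_\infty}$ uniformly to get $\bar K_2^*$, versus Cauchy--Schwarz to get $\sqrt{\bar K^* C_1^*}$). The only difference is bookkeeping: the paper proves each event at level $1-\delta$ and sets $\delta'=\delta/2$ at the end, while you split $\delta/2$ per event upfront and absorb $\log(4mn/\delta)\le 2\log(2mn/\delta)$ at the end; the two are equivalent.
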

\begin{proof}[Proof Idea]
    The proof of \cref{thm:upper_bound} consists of three key steps: (1) express the suboptimality gap in terms of the uncertainty gap $ r(S^*;\bmu) - r \left(S^*; \ubar{\bmu} \right)$ over the optimal action $S^*$, rather than the on-policy error over the chosen action $\hat{S}$ as in online CMAB, (2) leverage \cref{cond:TPM} to relate the uncertainty gap to the per-arm estimation gap, and (3) utilize \cref{cond:TPM} to deal with the arbitrary data collection probabilities and bound the per-arm estimation gap in terms of $n$. For a detailed proof, see \cref{apdx_sec:upper_bound}.
\end{proof}
 
\begin{remark}[Discussion of \cref{thm:upper_bound}]\label{rmk:thm_dis}
Looking at the suboptimality gap result, both \cref{gap_c_max} and \cref{gap_c_1} decrease at a rate of $\frac{1}{\sqrt{n}}$ with respect to the number of offline data samples $n$. Additionally, they scale linearly with the smoothness coefficient $B_1$ and the approximation ratio $\alpha$. For problems satisfying \cref{gap_c_max}, the gap scales linearly with the $\ell_2$-action size $\bar{K}_2^*$ and the the coverage coefficient $C_{\infty}^*$ in \cref{gap_c_max}.
For problems satisfying \cref{gap_c_1}, the gap depends on the action size $\bar{K}^*$ and the 1-norm data coverage coefficient $C_{1}^*$.
To output an action that is $\epsilon$-close to $S^*$, \cref{gap_c_max} and \cref{gap_c_1} need $\tilde{O}(B_1^2\alpha^2 {\bar{K}_2^*}^2C_{\infty}^*/\epsilon^2)$ and $\tilde{O}(B_1^2\alpha^2 {\bar{K}^*}C_{1}^*/\epsilon^2)$ samples, respectively.

In general, we have $C_1^*\le K^* C_{\infty}^*$ and $\bar{K}^*\ge \frac{(\bar{K}^*_{2})^2}{K^*}$, indicating that neither \cref{gap_c_max} nor \cref{gap_c_1} strictly dominates the other. 
For instance, for CMAB with semi-bandit feedback where $p_i^{\mathbb{D}_{\text{arm}}, S^*}=p_j^{\mathbb{D}_{\text{arm}}, S^*}=1$ for any $i,j \in S^*$ and $0$ otherwise, \cref{gap_c_1} is tighter than \cref{gap_c_max} since $\bar{K}^*=\bar{K}^*_2=K^*$ and $C_1^*\le K^*C^*_{\infty}$.
Conversely, for the LLM cache to be introduced in \cref{app:LLM_cache}, if the experimenter selects the empty cache each time, such that  $\frac{p_i^{\mathbb{D}_{\text{arm}}, S^*}}{p_i^{\mathbb{D}_s, S^*}}=1$ for $i \in S^*$, then we have $C_1^*= K^*C^*_{\infty}$. Since $\bar{K}^* \ge \frac{(\bar{K}^*_{2})^2}{K^*}$ so \cref{gap_c_max} is tighter than \cref{gap_c_1}.
\end{remark}

\textbf{Lower bound result.} In this section, we establish the lower bound for a specific combinatorial multi-armed bandit (CMAB) problem: the stochastic $k$-path problem $\cI$. This problem was first introduced in \cite{kveton2015tight} to derive lower bounds for the online CMAB problem.

The $k$-path problem involves $m$ arms, representing path segments denoted as $[m] = {1, 2, \dots, m}$. Without loss of generality, we assume $m/k$ is an integer. The feasible combinatorial actions $\cS$ consist of $m/k$ paths, each containing $k$ unique arms. Specifically, the $j$-th path for $j \in [m/k]$ includes the arms ${(j-1)k + 1, \dots, jk}$.
We define $\texttt{k-path}(m, k, C_{\infty}^*)$ as the set of all possible outcome and data collection distribution pairs $(\D_{\text{arm}}, \D_{\cS})$ satisfying the following conditions: 

(1) The outcome distribution $\D_{\text{arm}}$ specifies that all arms in any path $j \in [m/k]$ are fully dependent Bernoulli random variables, i.e., $X_{t, (j-1)k+1} = X_{t, (j-1)k+2} = \dots = X_{t, jk}$, all with the same expectation $\mu_j$.

(2) The pair $(\D_{\text{arm}}, \D_{\cS})$ satisfies the infinity-norm TPM data coverage condition (\cref{cond:inf_norm_cov}) with $C_{\infty}^*$, i.e., $ \max_{i \in [m]}\frac{p_i^{\D_{\text{arm}},S^*}}{p_i^{\D_{\text{arm}}, \mathbb{D}_{\cS}}}\le C^*_{\infty}$. 

The feedback of the $k$-path problem follows the classical semi-bandit feedback for any $S \in \cS$, i.e., $p_i^{\D_{\text{arm}}, S}=1$ if $i \in S$ and $p_i^{\D_{\text{arm}}, S}=0$ otherwise.
We use $\cD=(S_t, (X_{t,i})_{i \in S_t})_{t=1}^n$ to denote a random offline $k$-path dataset of size $n$ and $\cD \sim \D(\D_{\text{arm}}, \D_{\cS})$ to indicate dataset $\cD$ is generated under the data collecting distribution $\D_{\cS}$ with the underlying arm distribution $\D_{\text{arm}}$. 

\begin{theorem}\label{thm:lower_bound}
    Let us denote $A(\cD) \in \cS$ as the action returned by any algorithm $A$ that takes a dataset $\cD$ of $n$ samples as input. For any $m, k \in \Z_+$, such that $m/k$ is an integer, and any $C^*_\infty \ge 2$, the following lower bound holds: $\inf_{A} \,\,\,\, \sup_{(\D_{\text{arm}},\D_{\cS}) \in \text{k-path}(m, k, C_{\infty}^*)} \E_{\cD \sim \D(\D_{\text{arm}},\D_{\cS}) }[r(S^*;\bmu) - r(A(\cD);\bmu)] \ge k\min \left(1, \sqrt{\frac{ C^*_{\infty}}{n}}\right)$.
\end{theorem}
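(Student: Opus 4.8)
The plan is to prove the lower bound by reducing it to a binary hypothesis-testing problem and applying Le Cam's two-point method. Since the $k$-path reward is linear, $r(S_j;\bmu)=\sum_{i\in S_j}\mu_i = k\mu_j$, the suboptimality gap incurred by returning a wrong path is exactly $k$ times the difference of the two path means; the problem therefore collapses to deciding which of two nearby paths has the larger mean, and the factor $k$ falls out for free. The role of the coverage coefficient is to bound how often the optimal path may be observed: for semi-bandit feedback $p_i^{\D_{\text{arm}},S^*}=1$ for $i\in S^*$, so \cref{cond:inf_norm_cov} reads $1/q_{j^*}\le C^*_\infty$, where $q_j$ is the probability that $\D_{\cS}$ draws path $j$. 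In words, the optimal path must be sampled with probability at least $1/C^*_\infty$, and I will make the instance hard by sampling it \emph{exactly} that often.

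I would build a pair of instances on two distinguished paths, say $a$ (path $1$) and $b$ (path $2$), giving every remaining path the constant outcome $0$ so that it is always suboptimal and, having $p_i^{\D_{\text{arm}},S^*}=0$, imposes no constraint on coverage. Fix $\mu_b=1/2$ in both instances, and set $\mu_a=1/2+\Delta$ under $H_0$ (so $S^*=a$) and $\mu_a=1/2-\Delta$ under $H_1$ (so $S^*=b$), where $\Delta\asymp\sqrt{C^*_\infty/n}$ is chosen below. Crucially, use the \emph{same, asymmetric} data-collecting distribution $q_a=1/C^*_\infty$, $q_b=1-1/C^*_\infty$ for both instances. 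This is the heart of the construction: under $H_0$ the optimal path $a$ has coverage $1/q_a=C^*_\infty$ (tight), while under $H_1$ the optimal path $b$ has coverage $1/q_b=C^*_\infty/(C^*_\infty-1)\le 2\le C^*_\infty$, so both instances lie in $\texttt{k-path}(m,k,C^*_\infty)$ for every $C^*_\infty\ge 2$. Because $\mu_b$ is identical across the two hypotheses, all information distinguishing $H_0$ from $H_1$ must come from the rarely-sampled path $a$.

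With this in place the analysis is standard. By tensorization, the KL divergence between the two $n$-sample dataset laws is $n$ times the per-round divergence, which equals $q_a\cdot\mathrm{KL}(\mathrm{Ber}(1/2+\Delta)\,\|\,\mathrm{Ber}(1/2-\Delta))\le c\,\Delta^2/C^*_\infty$ (the $b$-observations and the path identities contribute nothing). Hence the total divergence is at most $c\,\Delta^2 n/C^*_\infty$, and choosing $\Delta=c_0\sqrt{C^*_\infty/n}$ makes this a constant; Pinsker's inequality then bounds the total variation distance away from $1$. Le Cam's lemma gives that any algorithm errs with constant probability on at least one hypothesis, and since an error costs exactly $k\Delta$, the minimax expected gap is $\Omega(k\Delta)=\Omega(k\sqrt{C^*_\infty/n})$. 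When $C^*_\infty>n$ (so $\sqrt{C^*_\infty/n}>1$) I would instead cap $\Delta$ at a constant: then path $a$ is observed fewer than a constant number of times in expectation, is missed with constant probability, and the gap is $\Omega(k)$, matching the $\min(1,\cdot)$ truncation. Tracking the constants through this argument yields the stated bound.

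The main obstacle is not the testing machinery but the design of a single $\D_{\cS}$ that is simultaneously (i) admissible, i.e. satisfies \cref{cond:inf_norm_cov} with respect to each hypothesis's \emph{own} (and different) optimal path, and (ii) genuinely hard, i.e. starves the learner of information about the optimal path. A symmetric swap of the two paths fails, because it forces one of them to be sampled with probability close to $1$, leaking the answer; the asymmetric choice above — freezing $\mu_b$ and perturbing only the under-sampled $\mu_a$ — is what reconciles (i) and (ii). A secondary point to handle carefully is the fully-dependent within-path structure: observing path $a$ reveals a single $\mathrm{Ber}(\mu_a)$ outcome rather than $k$ independent bits, so the per-observation information is $O(\Delta^2)$ and not $O(k\Delta^2)$. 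This is precisely what prevents the $k$ duplicated arms from making estimation easier and preserves the $\sqrt{C^*_\infty/n}$ rate.
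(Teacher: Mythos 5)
Your proposal is correct and takes essentially the same route as the paper: a two-point Le Cam argument on two paths sharing a single asymmetric data-collecting distribution $\left(\tfrac{1}{C^*_\infty},\,1-\tfrac{1}{C^*_\infty}\right)$, KL tensorization across the $n$ i.i.d.\ samples, the Bretagnolle--Huber inequality, a per-error cost of $k\Delta$, and the choice $\Delta \asymp \min\left(\tfrac14, \sqrt{C^*_\infty/n}\right)$ covering both regimes of the $\min(1,\cdot)$. One substantive difference deserves note: you perturb the mean of the \emph{rarely sampled} path (probability $1/C^*_\infty$) while freezing the frequently sampled path at $1/2$, whereas the paper as literally written puts the perturbation $\pm\Delta$ on path $2$, the path sampled with probability $1-1/C^*_\infty$, and yet bounds the dataset divergence by $n\,\mathrm{KL}\left(\mathbb{P}_{\bmu_1}\|\mathbb{P}_{\bmu_2}\right)/C^*_\infty$ --- a bound that is only valid when the path whose mean differs is the one sampled with probability $1/C^*_\infty$ (under the paper's literal construction the per-sample KL carries a factor $1-1/C^*_\infty \ge 1/2$, which would only give a $\Omega(k/\sqrt{n})$ bound). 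Your arrangement is the one consistent with that KL estimate and with both coverage checks ($C^*_\infty$ exactly for the rare optimal path, $C^*_\infty/(C^*_\infty-1)\le 2$ for the frequent one), so your construction in effect corrects a swap in the paper's; you also correctly flag the role of the fully dependent arms within a path, which the paper uses for the same purpose. The only caveat, shared with the paper itself, is that the argument yields the stated bound up to absolute constants rather than with the constant $1$ appearing in the theorem statement.
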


Comparing this result to the upper bound established in \cref{thm:upper_bound} for the $k$-path problem, we can verify that this problem satisfies \cref{cond:TPM} with $B_1=1$ and $\bar{K}_2^*=k$, meaning that our upper bound result matches the lower bound up to logarithmic factors.
\section{Applications of the Off-CMAB Framework}\label{sec:repr_app}
In this section, we introduce three representative applications that can fit into our \offcmab~framework with new/improved results, which are summarized in \cref{tab:app}. We also provide empirical evaluations for the cascading bandit and the LLM cache in \cref{sec:simulations}.



\begin{table*}[t]
	\caption{Summary of the results of applying the Off-CMAB framework to various applications.}	\label{tab:app}	
         \centering 
		\resizebox{2
  \columnwidth}{!}{
	\centering
	\begin{threeparttable}
	\begin{tabular}{|ccccc|}
		\hline
		\textbf{Application}&\textbf{Smoothness}& \textbf{Data Coverage} & \textbf{Suboptimality Gap}& \textbf{Improvements}\\
	\hline
	Learning to Rank (\cref{sec:app_ranking})& $B_1=1$ & $C_1^*= \frac{\mu_1 \cdot m}{\mu_k}$ & $\tilde{O}\left( \sqrt { \frac{ k}{n} \cdot \frac {m \mu_1} {\mu_k} }\right)^*$ & $-$\\
 			\hline
    		LLM Cache (\cref{app:LLM_cache}) &  $B_1=1$ & $C_1^*=m$ & $\tilde{O}\left(\sqrt { \frac{m}{n} }\right)$ & $\tilde{O}\left(\sqrt{ \frac{ k^2}{ C_1} }\right)^\dagger$ \\
			\hline
  Social Influence Maximization (\cref{app:OIM_node}) & $B_1=V$ & \cref{as:seed_activation_prob}$^{**}$   & $\tilde{O}\left( \sqrt {\frac { V^2 d^2_{\max} \sigma^2(S^*;G) } {  \eta \cdot \gamma^3  \cdot n  } } \right)$ & $\tilde{O}\left(\sqrt{\frac{V^4}{k^2 d^2_{\max} \eta}} \right)^{\ddagger}$\\
	    \hline
     	   
	\end{tabular}
	  \begin{tablenotes}[para, online,flushleft]
	\footnotesize
   \item[]\hspace*{-\fontdimen2\font}$^*$ $m, k, \mu_1, \mu_k$ denote the number of items, the length of the ranked list, and click probability for $1$-st and $k$-th items, respectively;
   	\item[]\hspace*{-\fontdimen2\font}$^\dagger$ $m, k, C_1$ denote the number of LLM queries, the size of cache, and the lower bound of the query cost, respectively;
   \item[]\hspace*{-\fontdimen2\font}$^{**}$ Similar to \citet{chen2021network}, we depend directly on assumption for seed sampling probability bound $\gamma$ and the activation probability bound $\eta$;
  \item[]\hspace*{-\fontdimen2\font} $^\ddagger$ $V, d_{\max}, \sigma(S^*;G), k$ denote the number of nodes, the max out-degree, optimal influence spread, and  the number of seed nodes, respectively.
	\end{tablenotes}
			\end{threeparttable}
	}
\end{table*}

\subsection{Offline Learning for Cascading Bandits}\label{sec:app_ranking}

The cascading bandit problem \cite{kveton2015cascading,li2016contextual,vial2022minimax,dai2025variance} addresses the \textit{online} learning to rank problem \cite{liu2009learning} under the cascade model \cite{craswell2008experimental}. The canonical cascading bandit problem considers a $T$-round sequential decision-making process.
At each round \( t \in [T] \), a user \( t \) comes to the recommendation system (e.g., Amazon), and the learner aims to recommend a ranked list \( S_t = (a_{t,1}, ..., a_{t,k}) \subseteq [m] \) of length \( k \) (i.e., a super arm) from a total of \( m \) candidate products (i.e., base arms). Each item \( i \in S_t \) has an unknown probability \( \mu_{i} \) of being satisfactory and purchased by user \( t \), which without loss of generality, is assumed to be in descending order $\mu_1 \ge \mu_2 \ge ... \ge \mu_m$. 

\textbf{Reward function and cascading feedback.}
Given the ranked list $S_t$, the user examines the list from \( a_{t,1} \) to \( a_{t,k} \) until they purchase the first satisfactory item (and leave the system) or exhaust the list without finding a satisfactory item. If the user purchases an item (suppose the \( j_t \)-th item), the learner receives a reward of 1 and observes outcomes of the form \((X_{t,a_1}, ..., X_{t, a_{j_{t-1}}}, X_{t, a_{j_{t}}}, ..., X_{t,a_k}) =  (0,...,0, 1, -, ..., -) \), meaning the first \( j_t-1 \) items are unsatisfactory (denoted as 0), the \( j_t \)-th item is satisfactory (denoted as 1), and the outcomes of the remaining items are unobserved (denoted as $-$).
Otherwise, the learner receives a reward of 0 and observes Bernoulli outcomes \((X_{t,a_1}, ...,  X_{t,a_k})= (0,0,...,0) \). The expected reward is \( r(S_t; \boldsymbol{\mu}) = \E[\{\exists i \in [k]: X_{t,a_i}=1\}] =  1 - \prod_{i \in S_t} (1 - \mu_{i}) \). Since $\mu_1 \ge \mu_2 ... \ge \mu_m$, we know that the optimal ranked list is the top-$k$ items $S^*=(1, 2, ..., k)$. The goal of the cascading bandit problem is to 
maximize the expected number of user purchases by applying an online learning algorithm. 
For this setting, we can see that it follows the cascading feedback and the triggered arms are $\tau_t = \{a_{t,1}, ..., a_{t,j_t} \}$ where $j_t=K \text{ if } (a_{t,1},...a_{t,k})=(0, .., 0) $ or otherwise $j_t=\argmin\{i \in [k]: X_{t,a_{t,i}}=1\}$.

\textbf{Learning from the offline dataset.}
We consider the offline learning setting for cascading bandits, where we are given a pre-collected dataset $\cD=(S_t, \tau_t, (X_{t,i})_{i \in \tau_t})_{t=1}^n$ consisting of $n$ ranked lists and the user feedback for these ranked lists, where each $S_t$ is sampled from the data collecting distribution $\D_{\cS}$. 
Let us use $q_{ij}$ to denote the probability that arm $i$ is sampled at the $j$-th position of the ranked list, for $i\in[m], j \in[k]$.
Then we have $p_i^{\D_{\text{arm}}, \D_{\cS}} \ge \sum_{j=1}^k q_{ij} (1-\mu_1)^{j-1}$ and $p_i^{\D_{\text{arm}}, S^*}=\prod_{j=1}^{i-1} (1-\mu_j)$.
Therefore, we can derive that the 1-norm data coverage coefficient in \cref{cond:1_norm_cov} is $C_1^{*} = \sum_{i=1}^k \frac{\prod_{j=1}^{i-1} (1-\mu_j)} {\sum_{j=1}^k q_{ij} (1-\mu_1)^{j-1}}$.

\textbf{Algorithm and result.} This application fits into the CMAB-T framework, satisfying \cref{cond:TPM} with coefficient \( B_1 = 1 \) as in \cite{wang2017improving}. The oracle is essentially to find the top-$k$ items regarding LCB $\ubar{\mu}_{i}$, which maximizes $r(S; \ubar{\bmu})$ in $O(m\log k)$ time complexity using the max-heap. Plugging this oracle into line~\ref{line:oracle} of \cref{alg:CLCB} gives the algorithm, whose detail is in \cref{alg:CLCB_cascade} in \cref{apdx_sec:app_ranking}. 

\begin{corollary}\label{thm:app_ranking}
    For cascading bandits with arms $\mu_1 \ge \mu_2 ... \ge \mu_m$ and a dataset $\cD$ with $n$ data points, suppose $n \ge \frac{8 \log (\frac{2mn}{\delta})} {\min_{i \in [k]} \sum_{j=1}^k q_{ij} (1-\mu_1)^{j-1}}$, where $q_{ij}$ is the probability that item $i$ is sampled at the $j$-th position regarding $\D_{\cS}$. Letting $\hat{S}$ be the ranked list returned by \cref{alg:CLCB_cascade}, then with probability at least $1-\delta$,
    \begin{align}\label{eq:app_ranking_eq1}
    &r(S^*;\bmu) - r \left(\hat{S};\bmu \right)\notag\\&\le 2 \sqrt { \frac{2 k \log (\frac { 2mn } { \delta })}{n} \sum_{i=1}^k \frac{\prod_{j=1}^{i-1} (1-\mu_j)} {\sum_{j=1}^k q_{ij} (1-\mu_1)^{j-1}} } , 
    \end{align}
    
    If $\,\D_{\cS}$ is a uniform distribution so that $q_{ij}=\frac{1}{m}$, it holds that $C_1^*\le \frac{\mu_1 \cdot m}{\mu_k}$ and 
    \begin{align}\label{eq:app_ranking_eq2}
    r(S^*;\bmu) - r \left(\hat{S};\bmu \right)\le 2 \sqrt { \frac{2 k \log (\frac { 2mn } { \delta })}{n} \cdot \frac {m \mu_1} {\mu_k} }. 
    \end{align}
\end{corollary}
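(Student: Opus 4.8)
The plan is to instantiate the 1-norm branch of \cref{thm:upper_bound}, i.e.\ the bound \eqref{gap_c_1}, with the cascading-bandit primitives. First I would fix the oracle parameters: since the top-$k$ max-heap oracle returns an exact maximizer of $r(S;\ubar{\bmu})=1-\prod_{i\in S}(1-\ubar{\mu}_i)$, we may take $\alpha=1$; the reward $r(S;\bmu)=1-\prod_{i\in S}(1-\mu_i)$ is monotone in $\bmu$ (\cref{cond:mono}) and satisfies \cref{cond:TPM} with $B_1=1$ by \citet{wang2017improving}. I would also check that the sample-size hypothesis of \cref{thm:upper_bound}, namely $n\ge 8\log(m/\delta)/\min_{i:p_i^{\D_{\text{arm}},S^*}>0}p_i^{\D_{\text{arm}},\D_{\cS}}$, is implied by the stated hypothesis, using $\log(2mn/\delta)\ge\log(m/\delta)$ together with the lower bound $p_i^{\D_{\text{arm}},\D_{\cS}}\ge\sum_{j=1}^k q_{ij}(1-\mu_1)^{j-1}$ recorded above.

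Next I would plug in the cascading triggering probabilities. Under $S^*=(1,\dots,k)$ arm $i$ is observed exactly when none of the preceding items is purchased, so $p_i^{\D_{\text{arm}},S^*}=\prod_{j=1}^{i-1}(1-\mu_j)$ for $i\le k$ and $0$ otherwise; combined with the quoted lower bound on $p_i^{\D_{\text{arm}},\D_{\cS}}$ this gives the \cref{cond:1_norm_cov} coefficient $C_1^*\le\sum_{i=1}^k\prod_{j=1}^{i-1}(1-\mu_j)\big/\sum_{j=1}^k q_{ij}(1-\mu_1)^{j-1}$. For the action size I would bound $\bar{K}^*=\sum_{i=1}^k\prod_{j=1}^{i-1}(1-\mu_j)\le k$, since each factor lies in $[0,1]$. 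Substituting $\alpha=1$, $B_1=1$, $\bar{K}^*\le k$, and this $C_1^*$ bound into \eqref{gap_c_1} yields \eqref{eq:app_ranking_eq1} directly.

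For the uniform special case $q_{ij}=1/m$, the geometric denominator becomes $\sum_{j=1}^k(1-\mu_1)^{j-1}=(1-(1-\mu_1)^k)/\mu_1$, so $C_1^*\le m\mu_1\sum_{i=1}^k\prod_{j=1}^{i-1}(1-\mu_j)\big/(1-(1-\mu_1)^k)$. The key algebraic step is to bound the numerator sum via the ``first-success'' identity $1-\prod_{i=1}^k(1-\mu_i)=\sum_{i=1}^k\mu_i\prod_{j=1}^{i-1}(1-\mu_j)$; since $\mu_i\ge\mu_k$ this gives $\sum_{i=1}^k\prod_{j=1}^{i-1}(1-\mu_j)\le(1-\prod_{i=1}^k(1-\mu_i))/\mu_k$. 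Finally, because $\mu_i\le\mu_1$ forces $\prod_{i=1}^k(1-\mu_i)\ge(1-\mu_1)^k$, the numerator factor $1-\prod_{i=1}^k(1-\mu_i)$ is at most the denominator factor $1-(1-\mu_1)^k$; these cancel to leave $C_1^*\le m\mu_1/\mu_k$, and inserting this into \eqref{eq:app_ranking_eq1} gives \eqref{eq:app_ranking_eq2}.

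I expect the main obstacle to be precisely this last cancellation. A naive estimate — bounding the numerator sum by $1/\mu_k$ and the denominator by its leading term alone — only yields $C_1^*\le km$ or $C_1^*\le m/\mu_k$, both looser than the target $m\mu_1/\mu_k$. Retaining the full geometric denominator $(1-(1-\mu_1)^k)/\mu_1$ and matching its $1-(1-\mu_1)^k$ against $1-\prod_{i=1}^k(1-\mu_i)$ through $\mu_i\le\mu_1$ is what produces the extra $\mu_1$ factor and the clean closed form.
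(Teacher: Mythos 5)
Your proposal is correct and follows essentially the same route as the paper's proof: instantiate the 1-norm branch \eqref{gap_c_1} of \cref{thm:upper_bound} with $\alpha=1$, $B_1=1$, $\bar{K}^*\le k$, and $C_1^*\le\sum_{i=1}^k\prod_{j=1}^{i-1}(1-\mu_j)\big/\sum_{j=1}^k q_{ij}(1-\mu_1)^{j-1}$, then specialize to the uniform case. The only (immaterial) difference is in the final algebra for $C_1^*\le m\mu_1/\mu_k$: the paper bounds each term by $\prod_{j=1}^{i-1}(1-\mu_j)\le(1-\mu_k)^{i-1}$ and sums the geometric series to get $\bigl(1-(1-\mu_k)^k\bigr)/\mu_k$ before comparing with $1-(1-\mu_1)^k$, whereas you reach the same cancellation through the exact first-success identity $1-\prod_{i=1}^k(1-\mu_i)=\sum_{i=1}^k\mu_i\prod_{j=1}^{i-1}(1-\mu_j)$ and $\mu_i\ge\mu_k$; both steps are valid.
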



\subsection{Offline Learning for LLM Cache}\label{app:LLM_cache}
The LLM cache is a system designed to store and retrieve outputs of Large Language Models (LLMs), aiming to enhance efficiency and reduce redundant computations during inference \cite{Pope2022EfficientlyST,Bang2023GPTCacheAO,zhu2023optimal,dai2025multi}. 

In the LLM cache bandit \cite{zhu2023optimal}, which is a $T$-round sequential learning problem, we consider a finite set of $m$ distinct queries $\cQ=\{q_1, ..., q_m\}$. Each query $q\in \cQ$ is associated with an unknown expected cost $c(q) \in [0,1]$ and unknown probability $p(q)\in [0,1]$, for a total of $2m$ base arms.
When query $q$ is input to the LLM system, the LLM processes it and returns a corresponding response (i.e., answer) $r(q)$.
Every round $t$ when the LLM processes $q$, it will incur a random cost $C_t(q)$ with mean $c(q)$, representing floating point operations (FLOPs) or the price for API calls \cite{zhu2023optimal}.
We assume that $C_t(q) = c(q)  + \epsilon_{t}(q)$, 
where $\epsilon_t(q)$ is a sub-Gaussian noise that captures the uncertainties in the cost, with $
\mathbb{E}[\epsilon_t(q)] 
= 0$.

The goal of LLM cache bandit is to find the optimal cache $\cM^*$ storing the query-response pairs that are both likely to be reused and associated with high costs. 

\textbf{Expected cost function and cache feedback.} In each round $t$, a user comes to the system with query $q_t$, which is sampled from $\mathcal{Q}$ according to a fixed unknown distribution $\{p(q)\}_{q \in \cQ}$ with $\sum_{q\in \cQ}p(q)=1$.
To save the cost of repeatedly processing the queries, the LLM system maintains a cache $\mathcal{M}_t \subseteq \cQ$, storing a small subset of queries of size $k \ge 0$ with their corresponding results. 
After $q_t$ is sampled, the agent will first check the current cache $\mathcal{M}_t$. If the query $q_t$ is found in the cache, i.e., $q_t \in \mathcal{M}_t$, we say the query \textit{hits} the cache. In this case, the result of $q_t$ is directly returned without further processing by the LLM. The cost of processing this query is $0$ and will save a potential cost $C_t(q_t)$, which is \textit{unobserved} to the agent. If query $q_t$ does not hit the cache, the system processes the query, incurring a cost $C_t(q_t)$ which is observed by the learner, and returns the result $r(q_t)$.
Let us denote $\bc=(c(q))_{q \in \cQ}$ and $\bp = (p(q))_{q \in \cQ}$ for convenience. 
Given any cache $\cM$ and the query $q$, the random cost saved in round $t$ is $C_t(\cM, q)=\I\{q \notin \cM\}C_t(q)$.
Thus the expected cost incurred is 
\begin{align}
    c(\cM; \bc, \bp)=\E\left[\sum_{q\in \cQ} C_t(\cM, q)\right]= \sum_{q\notin \cM}p(q) c(q).
\end{align}

From our CMAB-T point of view, selecting any cache $\cM$ of size $k$ can be regarded as selecting the super arm $S=\cQ-\cM$ with size $m-k$, which is the complement of $\cM$. Thus, our super arms represent the queries not entered into the cache. In this context, the expected cost function can be rewritten as: $ c(S; \bc, \bp)=\sum_{q \in S}p(q)c(q)$. The goal of LLM cache bandit is to find the optimal cache $\cM^*$ storing the query-response pairs that are both likely to be reused and associated with high costs, i.e., to find out the optimal super arm $S^*=\argmin_{S\subseteq \cQ: |S|\ge m-k} c(S;\bc,\bp)$.

We separately consider the cache feedback for $\bc$ and $\bp$.
For unknown costs $\bc$, we can see that $\tau_{t,c}=q_t$ if $q_t \in \cM_t$ and $\tau_{t,c}=\emptyset$ otherwise.
For unknown probability distribution $\bp$, we observe full feedback $\tau_{t,p}=\cQ$ since $q_t$ means $q_t$ arrives and all other queries do not arrive. 
For the triggering probability, we have that, for any $S \in \cS$, the triggering probability for unknown costs $p_{q,c}^{\D_{\text{arm}}, S}=p(q)$ for $q \in S$ and $0$ otherwise. The triggering probability for unknown arrival probability $p_{q,p}^{\D_{\text{arm}}, S}=1$ for all $q \in \cQ$. 

\textbf{Learning from the offline dataset.}
We consider the offline learning setting for the LLM cache, where we are given a pre-collected dataset $\cD=(\cM_t, q_t, C_t)_{t=1}^n$ consisting of the selected cache $\cM_t$, the arrived query $q_t$, and their cost feedback $C_t=C_t(q_t)$ if $q_t \notin \cM_t$ or $C_t=\emptyset$ is unobserved otherwise, where each $\cM_t$ is sampled from the data collecting distribution $\D_{\cS}$. 
Let $\nu(q)=\Pr_{\cM \sim \D_{\cS}} \left[ q \notin \cM\right]$ be the probability that $q$ is not sampled in the experimenter's cache.
Then we can derive that the 1-norm data coverage coefficient in \cref{cond:1_norm_cov} is $C_1^{*}= \sum_{q \in S^*} \frac{1}{\nu(q)} + m$. 

\textbf{Algorithm and result.} 
This application fits into the CMAB-T framework, satisfying the 1-norm TPM smoothness condition with coefficient \( B_1 = 1 \) (see \cref{apdx_sec:app_LLM_cache_std} for the detailed proof). Since we are minimizing the cost rather than maximizing the reward, we use the UCB  $\bar{p}(q) \bar{c}(q)$. The oracle is essentially to find the top-$k$ queries regarding UCB $\bar{p}(q) \bar{c}(q)$, which minimizes $c(S; \bar{\bc}, \bar{\bp})$ in $O(m\log k)$ time complexity using the max-heap. The detailed algorithm and its result is provided in \cref{alg:CLCB-LLM_std}.

Since the arrival probabilities $\bp$ are full-feedback categorical random variables, we can further improve our result by a factor of $\sqrt{m}$ by directly using the empirical mean of $\hat{\bp}$ instead of UCB $\bar{\bp}$. The improved algorithm is shown in \cref{alg:CLCB-LLM} with its theoretical suboptimality guarantee:

\begin{algorithm}[!t]
    \caption{\texttt{CLCB-LLM-C}: Combinatorial Lower Confidence Bound Algorithm for LLM Cache}
    \label{alg:CLCB-LLM}
    \begin{algorithmic}[1]
        \STATE \textbf{Input:} Dataset $\cD=\left\{ \left( \cM_t, q_t, C_t \right) \right\}_{t=1}^{n}$, queries $\cQ$, solver \texttt{Top-k}, probability $\delta$.
		\FOR{query $q \in \cQ$}
            \STATE Calculate counters $N(q)=\sum_{t=1}^n \I\{q = q_t\}$ and $N_c(q)=\sum_{t=1}^n \I\{q= q_t \text{ and } q_t \notin \cM_t\}$.\label{line:LLM_counter}
            \STATE Calculate empirical means $\hat{p}(q)=N(q)/n$ and $\,\hat{c}(q){=}\sum_{t \in [n]} \I\{q= q_t \text{ and } q_t \notin \cM_t\}C_t/N_c(q)$.\label{line:LLM_empirical_mean}
            \STATE Calculate UCB $\bar{c}(q)=\hat{c}(q)+\sqrt{ \frac{2\log (\frac{4mn}{\delta})} {N_c(q)} }$.\label{line:LLM_UCB}
        \ENDFOR
        \STATE Call $\hat{\cM}= \texttt{Top-k} \left( \hat{p}(q_1) \bar{c}(q_1), ..., \hat{p}(q_m) \bar{c}(q_m) \right).$\label{line:LLM_oracle}
        \STATE \textbf{Return:} $\hat{\cM}$.
	\end{algorithmic}
\end{algorithm}

\begin{theorem}\label{thm:LLM_cache_main}
For LLM cache bandit with a dataset $\cD$ of $n$ data samples, suppose $n \ge \frac{8 \log (\frac{1}{\delta})} {\min_{q \in \cQ - \cM^*} p(q) \nu(q)}$, where $\nu(q)$ is the probability that query $q$ is not included in each offline sampled cache. Letting $\hat{M}$ be the cache returned by algorithm \cref{alg:CLCB-LLM_std}, then with probability at least $1-\delta$, 
\begin{align}\label{eq:app_llm_improve_gap}
       &c(\cM^*; \bc, \bp) - c \left(\hat{\cM};\bc, \bp \right)\\\notag&\le 2\sqrt{\frac{2 \sum_{q\in \cQ - \cM^*}{\frac{1}{\nu(q)}} \log(\frac{6mn}{\delta})} { n } } + 2\sqrt{\frac{2m\log(\frac{3}{\delta})}{n}},  
    \end{align}
If the experimenter samples empty cache $\cM_t =\emptyset$ in each round as in \cite{zhu2023optimal} so that $\nu(q)=1$, it holds that 
    \begin{align}\label{eq:app_llm_improve_gap_2}
    c(\cM^*;\bc, \bp) - c \left(\hat{\cM}; \bc, \bp \right)\le 4 \sqrt { \frac{2m \log (\frac { 6mn } { \delta })}{n} }.  
    \end{align}
\end{theorem}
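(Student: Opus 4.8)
The plan is to follow the three-step template of \cref{thm:upper_bound}, but to split the per-arm error of the product $p(q)c(q)$ into a \emph{cost} part and an \emph{arrival-probability} part and handle them with different tools. Throughout I work with the super-arm encoding $S=\cQ\setminus\cM$, so the incurred cost is $c(S;\bc,\bp)=\sum_{q\in S}p(q)c(q)$ and the quantity to bound is the suboptimality gap $\sum_{q\in\hat S}p(q)c(q)-\sum_{q\in S^*}p(q)c(q)$, where $\hat S=\cQ\setminus\hat\cM$ and $S^*=\cQ\setminus\cM^*$. First I would set up a high-probability event $\cE$ by allocating $\delta/3$ to each of three sub-events: (i) every cost UCB is valid and tight, $c(q)\le\bar c(q)\le c(q)+2\sqrt{2\log(6mn/\delta)/N_c(q)}$ for all $q$, from sub-Gaussian concentration of $\hat c(q)$ plus a union bound over the $m$ queries and the $n$ possible values of the random counter $N_c(q)$; (ii) a multiplicative Chernoff lower bound $N_c(q)\ge\tfrac12\,n\,p(q)\nu(q)$ for every $q$ with $p(q)\nu(q)>0$, which is exactly where the sample-size assumption is used (note $p(q)\nu(q)=p^{\D_{\text{arm}},\D_\cS}_{q,c}$ is the data-triggering probability of the cost arm); and (iii) an $\ell_1$ concentration for the empirical categorical distribution, $\sum_{q\in\cQ}|\hat p(q)-p(q)|\le\sqrt{2m\log(3/\delta)/n}$, from a multinomial $\ell_1$ deviation inequality that exploits $\sum_q p(q)=1$ to get the $\sqrt m$ scaling.

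Next I would carry out the pessimism decomposition. Since the \texttt{Top-k} oracle returns $\hat\cM$ maximizing $\sum_{q\in\cM}\hat p(q)\bar c(q)$, equivalently $\hat S$ minimizes $\sum_{q\in S}\hat p(q)\bar c(q)$, so $\sum_{q\in\hat S}\hat p(q)\bar c(q)\le\sum_{q\in S^*}\hat p(q)\bar c(q)$. Inserting this surrogate and telescoping gives
$$\sum_{q\in\hat S}p(q)c(q)-\sum_{q\in S^*}p(q)c(q)\le \underbrace{\sum_{q\in\hat S}\big(p(q)c(q)-\hat p(q)\bar c(q)\big)}_{A}+\underbrace{\sum_{q\in S^*}\big(\hat p(q)\bar c(q)-p(q)c(q)\big)}_{C}.$$
The whole point of using the UCB $\bar c(q)$ (rather than a two-sided estimate) for the cost is that on $\cE$ we have $\hat p(q)\bar c(q)\ge\hat p(q)c(q)$, so the cost overestimation in $A$ carries the favorable sign and $A\le\sum_{q\in\hat S}c(q)\big(p(q)-\hat p(q)\big)\le\sum_{q\in\cQ}|\hat p(q)-p(q)|$. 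Thus the cost-confidence width never appears over the uncontrolled set $\hat S$, and $A$ is absorbed entirely by the benign $\ell_1$ probability error of Step~(iii).

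Finally I would bound $C$ and assemble the two terms. On the controlled set $S^*$ I split $\hat p\bar c-pc=\bar c\,(\hat p-p)+p\,(\bar c-c)$: the first piece contributes another $\sum_{q\in S^*}|\hat p(q)-p(q)|\le\sum_{q\in\cQ}|\hat p(q)-p(q)|$, while the second is the genuine cost-estimation error $\sum_{q\in S^*}p(q)\big(\bar c(q)-c(q)\big)$. Plugging the width bound from (i), the counter bound $N_c(q)\ge\tfrac12 np(q)\nu(q)$ from (ii), and then applying Cauchy--Schwarz with $\sum_{q\in S^*}p(q)\le1$ converts this into $O\big(\sqrt{\sum_{q\in S^*}\nu(q)^{-1}\,\log(6mn/\delta)/n}\big)$, the first term of \cref{eq:app_llm_improve_gap}; the probability pieces gathered from $A$ and $C$ give the second term $2\sqrt{2m\log(3/\delta)/n}$. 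The special case $\nu(q)\equiv1$ in \cref{eq:app_llm_improve_gap_2} then follows from $\sum_{q\in S^*}\nu(q)^{-1}=|S^*|\le m$. I expect the main obstacle to be precisely this product-error decomposition: because $\hat p$ is a plug-in empirical mean and not a confidence bound, the standard domination $\text{surrogate}\ge\text{truth}$ fails for $p\cdot c$, so the argument must be arranged so that the cost UCB confines the cost-width error to $S^*$ (where the data coverage $\nu(q)$ applies) while the unavoidable two-sided probability error is aggregated into a single $\ell_1$ quantity whose $\sqrt m$ rather than $m$ size hinges on $\sum_q p(q)=1$ — which is exactly the step delivering the claimed $\sqrt m$ improvement over the generic \texttt{CLCB} bound.
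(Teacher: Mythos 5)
Your proposal is correct and follows essentially the same route as the paper's own proof: the same three concentration events (cost UCB width, counter lower bound via multiplicative Chernoff, and $\ell_1$/multinomial concentration of $\hat{\bp}$ exploiting the full-feedback categorical structure), the same oracle/pessimism/uncertainty decomposition in which the cost UCB's one-sided sign confines the cost-width error to $S^*$ where the coverage $\nu(q)$ applies, and the same Cauchy--Schwarz step with $\sum_{q\in S^*}p(q)\le 1$ yielding the $\sqrt{\sum_{q\in S^*}\nu(q)^{-1}}$ term plus the $\sqrt{m}$-scaled probability term. The only cosmetic difference is that you carry out the algebra directly on the linear cost form, whereas the paper routes the identical steps through its generic CMAB-T smoothness/monotonicity machinery with $B_1=1$, $\bar{K}^*\le 1$, and $C_1^*=\sum_{q\in S^*}\nu(q)^{-1}$.
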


\begin{remark}[Discussion of \cref{thm:LLM_cache_main}]
    Looking at \cref{eq:app_llm_improve_gap_2}, our result improves upon the state-of-the-art result \cite{zhu2023optimal} $\tilde{O}(k\sqrt{ \frac{m}{C_1 n} })$  by a factor of $\tilde{O}(\sqrt{ \frac{ k^2}{ C_1} })$, where $C_1>0$ is assumed to be an lower bound of $c(q)$ for $q \in \cQ$. 
    This improvement comes from our tight analysis to deal with the triggering probability (the $1/C_1$ can be thought of as minimum triggering probability in their analysis) and from the way that we deal with full-feedback arm $\bp$.
    Furthermore, since we use the UCB $\bar{\bc}$ while they use LCB $\ubar{\bc}$, our algorithm in principle can be generalized to more complex distributions as long as the optimal queries are sufficiently covered.
    As a by-product, we also consider the online streaming LLM cache bandit setting as in \citep{zhu2023optimal} from our CMAB-T point of view, which improves their result $\tilde{O}(\frac{km\sqrt{T}}{C_1})$ by a factor of $\tilde{O}(\frac{k\sqrt{m}}{C_1})$. We defer the details to \cref{apdx_sec:app_LLM_cache_online}.
\end{remark}

\subsection{Offline Learning for Influence Maximization with Extension to Node-level Feedback}\label{app:OIM_node}

Influence maximization (IM) is the task of selecting a small number of seed nodes $S$ in a social network $G(\cV,\cE,p)$ to maximize the influence spread $\sigma(S;G)$ from these nodes. 
IM has been intensively studied over the past two decades under various diffusion models, such as the independent cascade (IC) model \cite{Kempe2003MaximizingTS}, the linear threshold (LT) model \cite{chen2010scalable},
as well as different feedback such as edge-level and node-level feedback models.
For the edge-level feedback model, IM smoothly fits into our framework by viewing each edge weight $p_{uv}$ for $(u,v) \in \cE$ as the base arm.
In this section, we consider a more realistic yet challenging setting where we can only obverse the node-level feedback \cite{chen2020optimization,chen2021network}, showing that our framework still applies as long as we can construct a high probability lower bound (LCB) for each base arm (edge). Due to space constraints, we present only our main result here. The detailed setting, algorithm, and theoretical analysis can be found in \cref{apdx_app:OIM_node}.


\begin{theorem}\label{thm:IM}
    Under \cref{as:seed_activation_prob}, suppose the number of data $n \ge  \frac{392 \log ( \frac{12nE}{\delta} )}{\eta \cdot \gamma }$. Letting $\hat{S}$ be the seed set returned by \cref{alg:CLCB-IM}, then it holds with probability at least $1-\delta$,
    \begin{align}
       &\alpha \sigma(S^*;G) - \sigma \left(\hat{S};G \right)  \\\notag&\le 48\sqrt{6} \sqrt {\frac { V^2 d^2_{\max} \sigma^2(S^*;G) \cdot \log (\frac { 12n E } { \delta }) } {  \eta \cdot \gamma^3  \cdot n  } },  
    \end{align}
    where $d_{\max}$ is the maximum out-degree, $\gamma, \eta$ are lower bounds for seed sampling probability and the activation probability, respectively, as in \cref{as:seed_activation_prob}.
\end{theorem}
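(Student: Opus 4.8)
The plan is to realize influence maximization with node-level feedback as an instance of the general Off-CMAB framework, with the edges $(u,v)\in\cE$ playing the role of base arms and their propagation probabilities $p_{uv}$ playing the role of the unknown means $\bmu$. The first step is to verify that the influence-spread reward $\sigma(S;G)$ satisfies monotonicity (\cref{cond:mono}) and the 1-norm TPM bounded smoothness condition (\cref{cond:TPM}) with coefficient $B_1=V$; both are standard for the triggering-based IM model, since activating one additional edge can change the spread by at most $V$ nodes, modulated by the probability that the edge's tail node is itself reached. Given valid lower confidence bounds $\ubar{p}_{uv}$ for every edge, the remainder would then be a direct invocation of \cref{thm:upper_bound}, so essentially all the difficulty is pushed into constructing such LCBs from node-level observations and into controlling the resulting data-coverage quantities.

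The crux of the argument is that, unlike the edge-level feedback setting, the node-level dataset never reveals whether a particular edge $(u,v)$ fired; it only records which nodes became active in each diffusion. I would therefore first build \emph{intermediate} node-level estimators---upper and lower confidence bounds on the conditional activation probability of each node $v$ given the set of its already-active in-neighbors---and then invert these to extract an edge-level LCB $\ubar{p}_{uv}$ for each $p_{uv}$. The disentanglement is the subtle point: when several in-neighbors of $v$ are simultaneously active, the event that $v$ activates confounds the contributions of all incident edges, so I would isolate the rounds in which $u$ is the relevant newly active in-neighbor and apply a one-step independence argument, paying a $d_{\max}$ factor when I union-bound over the out-edges of each activated node. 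To keep the confidence radius tight I would use variance-adaptive (Bernstein-type) intervals rather than Hoeffding; summing the per-edge variances over the edges relevant to $S^*$ through the smoothness decomposition is what ultimately produces the $\sigma^2(S^*;G)$ scaling rather than a cruder worst-case count. A union bound over all edges supplies the $\log(12nE/\delta)$ term.

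The final step is to control the coverage coefficient under \cref{as:seed_activation_prob}. The relevant triggering probability $p_{(u,v)}^{\D_{\text{arm}},S^*}$ is the chance that the tail node $u$ is activated under seed set $S^*$, while the data-generation rate $p_{(u,v)}^{\D_{\text{arm}},\D_{\cS}}$ is lower bounded through the seed-sampling floor $\gamma$ and the activation floor $\eta$. Each inversion from a node-level estimator to an edge-level LCB costs an additional factor of $\gamma$, which is the source of the $\gamma^3$ in the denominator, whereas $\eta$ enters once through the sample-size threshold $n\ge 392\log(12nE/\delta)/(\eta\gamma)$ that guarantees every relevant edge is observed often enough for its LCB to be valid. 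Collecting the $B_1=V$ factor, the $d_{\max}$ from the disentanglement, the $\sigma(S^*;G)$ from the Bernstein variance sum, and the $1/(\eta\gamma^3)$ coverage penalty into the template bound of \cref{thm:upper_bound}, and carefully tracking the numerical constants through the nested union bounds, yields the claimed gap.

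The main obstacle I expect is precisely the edge-LCB extraction from node-level feedback: showing that the inverted estimator is a genuine high-probability lower bound on $p_{uv}$ despite the confounding of simultaneous in-neighbor activations, and doing so with a variance-adaptive radius tight enough to recover the $\sigma^2(S^*;G)$ dependence, is where the real work lies; everything downstream is a controlled substitution into the already-established general theorem.
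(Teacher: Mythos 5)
Your high-level skeleton matches the paper: edges are the base arms, monotonicity and 1-norm TPM smoothness hold with $B_1=V$ (Lemma 2 of \citet{wang2017improving}), and once valid edge-level LCBs $\ubar{p}_{uv}$ exist the gap bound follows from the general decomposition of \cref{thm:upper_bound}. However, the crux of the proof --- extracting edge LCBs from node-level feedback --- is exactly where your plan has a genuine gap. You propose to ``isolate the rounds in which $u$ is the relevant newly active in-neighbor,'' but node-level feedback never identifies which in-neighbor caused an activation; that attribution is precisely the confounding you yourself point out, so there is no implementable estimator behind this step. The paper does not isolate any rounds. It restricts to the \emph{first} diffusion step and exploits that $\D_{\cS}$ is a product distribution over seeds: the event that $u$ (as a seed) activates $v$ is then independent of $v$ being activated by the rest of the graph, giving the factorization $p_G(\bar{v}) = (1-q_u p_{uv})\, p_G(\bar{v}\mid\bar{u})$ and hence the closed-form inversion $p_{uv} = \frac{1}{q_u}\bigl(1 - p_G(\bar{v})/p_G(\bar{v}\mid\bar{u})\bigr)$. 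All three quantities on the right are directly estimable from $(S_{t,0},S_{t,1})$, and an LCB for $p_{uv}$ is obtained by monotone substitution of a UCB for $q_u$, a UCB for $p_G(\bar{v})$, and an LCB for $p_G(\bar{v}\mid\bar{u})$. Without this identity (or an equivalent one), the ``real work'' you defer has no concrete content.

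Separately, your accounting of where the factors in the bound come from is incorrect. The $\sigma^2(S^*;G)$ under the root (i.e., $\sigma(S^*;G)$ as a multiplicative factor) and the $d_{\max}$ do \emph{not} arise from summing Bernstein variances or from a union bound over out-edges: union bounds only produce the $\log(12nE/\delta)$ term. In the paper the per-edge LCB error is \emph{uniform}, $p_{uv}-\ubar{p}_{uv} \le 48\sqrt{6}\sqrt{\log(12nE/\delta)/(\eta\gamma^3 n)}$, and both factors come from the purely structural inequality
\begin{align}
\sum_{(u,v)\in\cE} p_{uv}^{\D_{\text{arm}},S^*} \;\le\; d_{\max}\,\sigma(S^*;G),
\end{align}
which holds because the triggering probability of edge $(u,v)$ under $S^*$ equals the activation probability of its tail $u$, counted at most $d_{\max}$ times. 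This sum is what the TPM smoothness weights collapse to in the decomposition $\alpha B_1\sum_{(u,v)}p_{uv}^{\D_{\text{arm}},S^*}(p_{uv}-\ubar{p}_{uv})$. The variance-adaptive intervals play a different role: they tighten the $\eta,\gamma$ dependence of the uniform per-edge error (the $\gamma^3$ arises from $1/\bar{q}_u \le 1/\gamma$, contributing $\gamma^2$ under the root, plus the counter bound $n_{0,\bar{u}}\gtrsim n\gamma$, contributing one more). So even granting you a valid LCB construction, your proposed bookkeeping would not assemble into the stated bound.
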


\begin{remark}[Discussion]
    To find out an action $\widehat{S}$ such that  $\sigma(\widehat{S};G) \ge (\alpha-\epsilon)\sigma(S^*;G)$, our algorithm requires that $n \ge \tilde{O} \left(\frac { V^2 d_{\max}^2 } { \epsilon^2 \eta \gamma^3} \right)$, which improves the existing result by a factor of $\tilde{O} \left( \frac{V^4}{k^2 d^2_{\max} \eta} \right)$, owing to our variance-adaptive LCB construction and the tight CMAB-T analysis. We also relax the assumption of \citet{chen2021network} for the seed sampling probability and activation probability, owing to the usage of LCBs, rather than directly using the empirical mean of edge weight $p_{uv}$. See \cref{apdx_app:OIM_node} for the detailed discussion.
\end{remark}

\section{Experiments}
\label{sec:simulations}
\begin{figure}[!th]
    \centering
    \begin{subfigure}[t]{0.48\linewidth}
    \includegraphics[width=\linewidth]{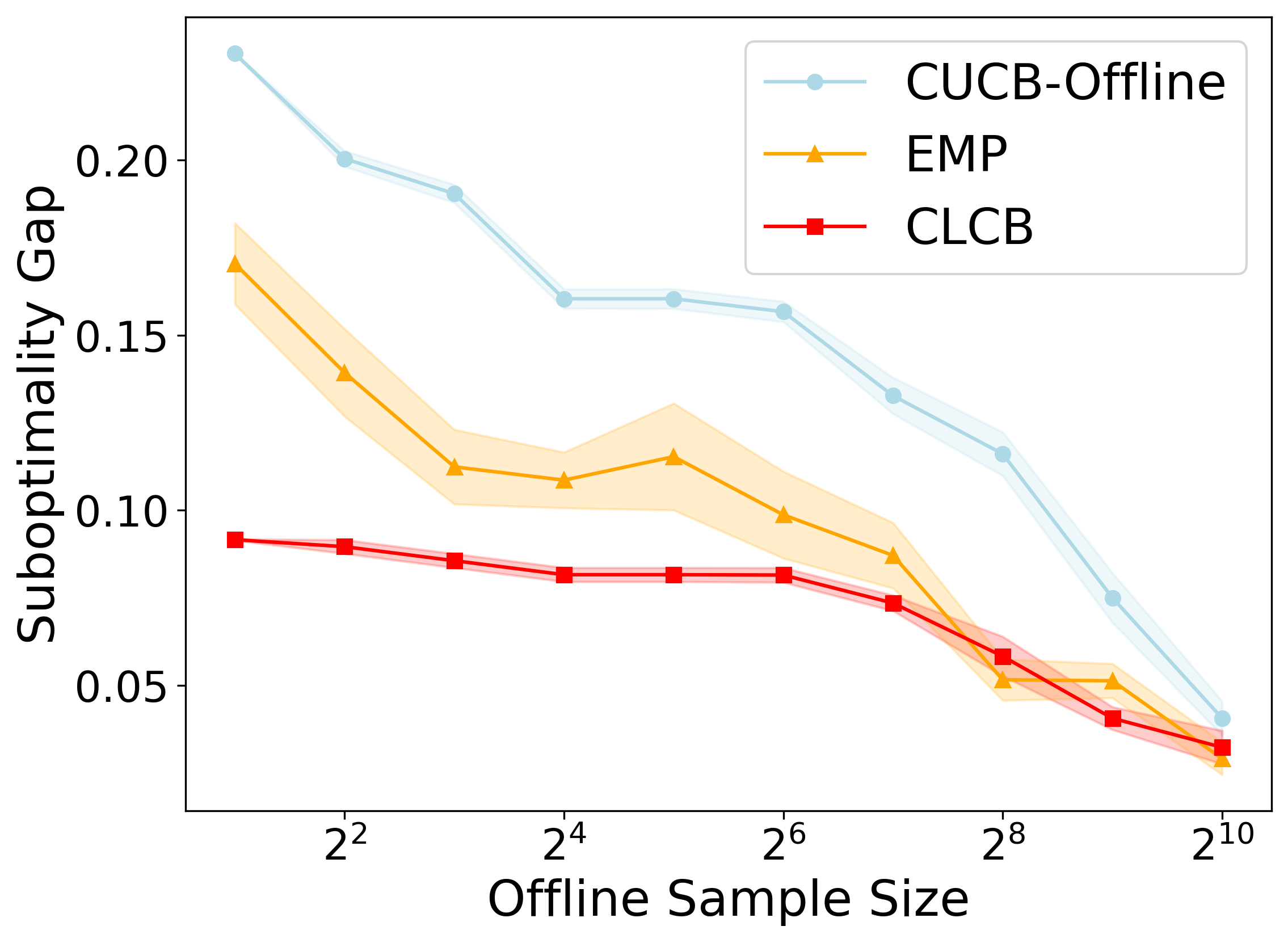}
    \caption{Synthetic Dataset}
    \label{fig:cascading_synthetic}
    \end{subfigure}
    \begin{subfigure}[t]{0.48\linewidth}
    \includegraphics[width=\linewidth]{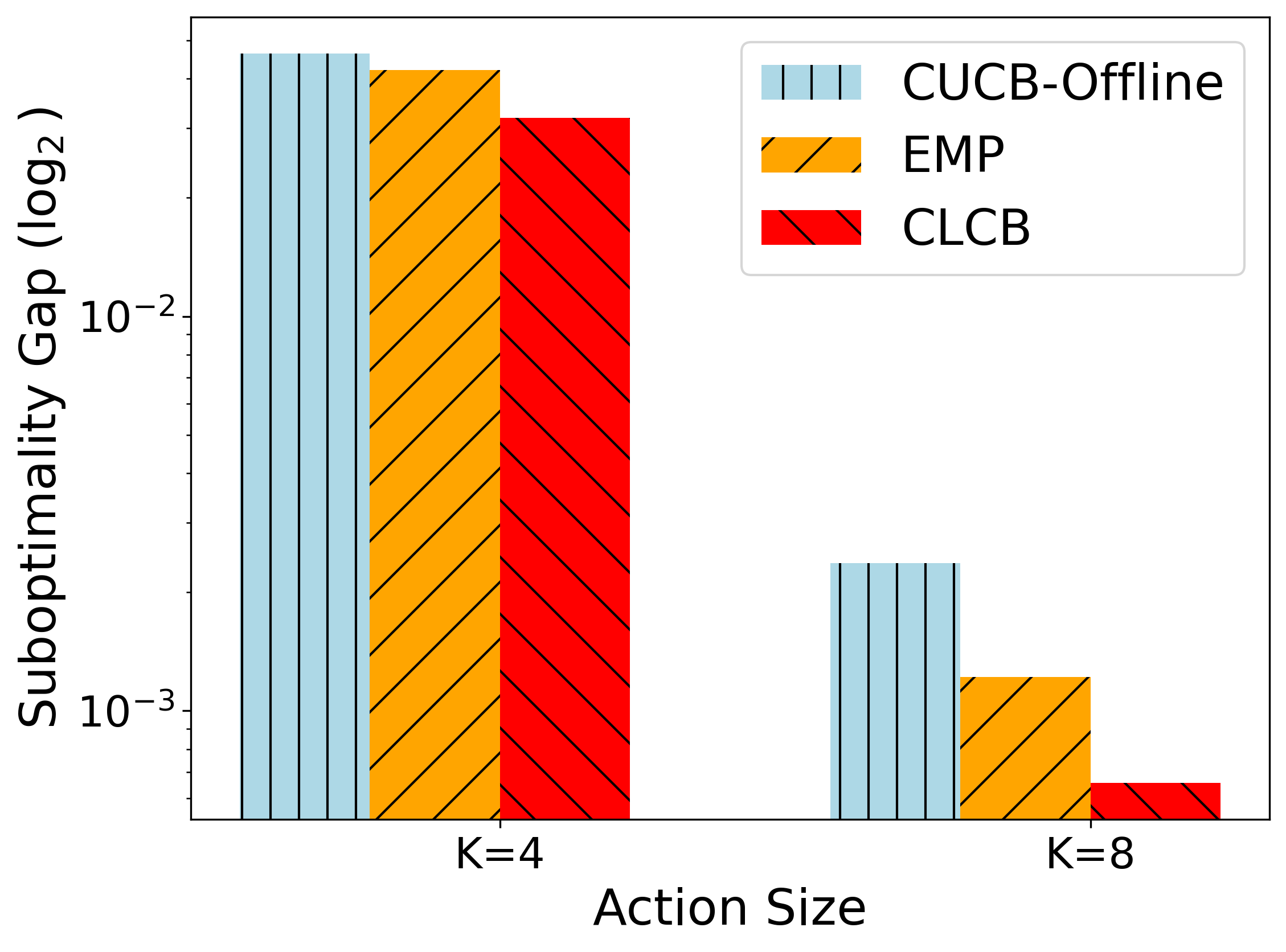}
    \caption{Real-world Dataset}
    \label{fig:cascading_real}
    \end{subfigure}
    \caption{Suboptimality gaps for cascading bandit application.}
    \label{fig:cascading}
\end{figure}

\begin{figure}[htbp]
    \centering
        \includegraphics[width=0.48\textwidth]{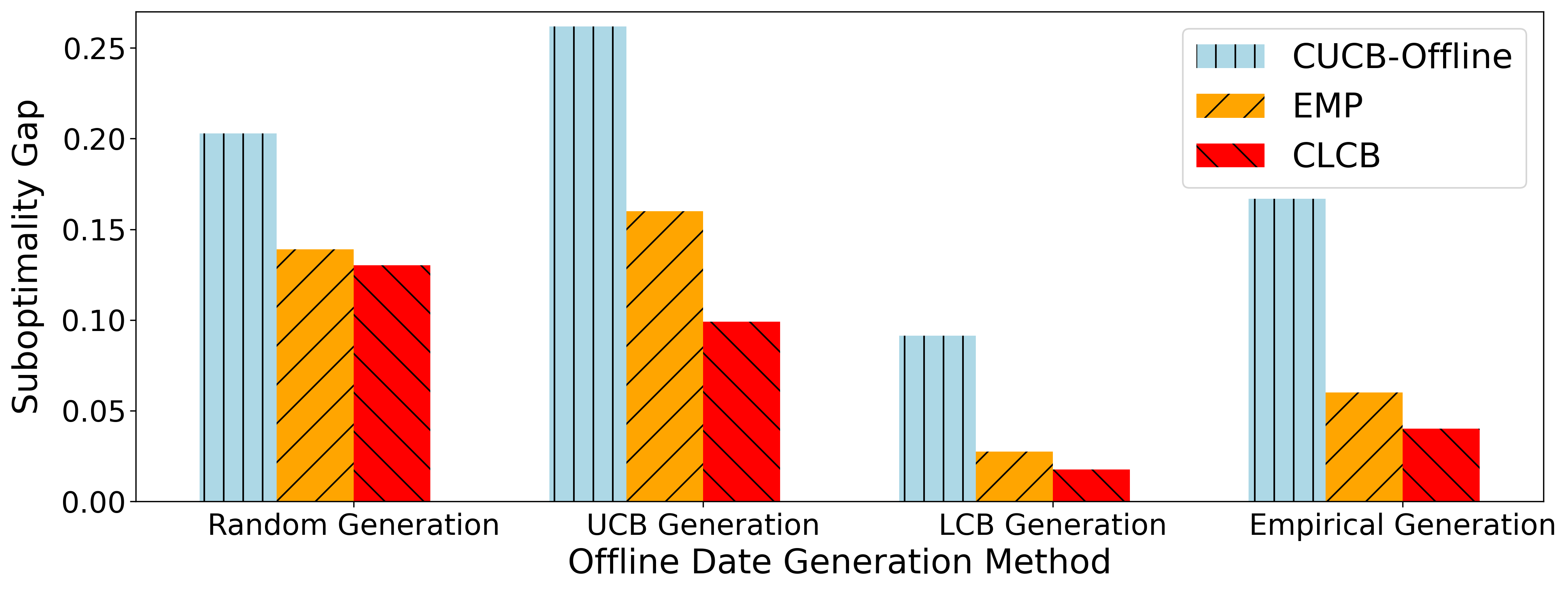}
        \caption{Comparison of different offline data generation methods.}
        \label{fig:multiple}
\end{figure}

\begin{figure}[t]
    \centering
    \begin{subfigure}[t]{0.48\linewidth}
    \includegraphics[width=\linewidth]{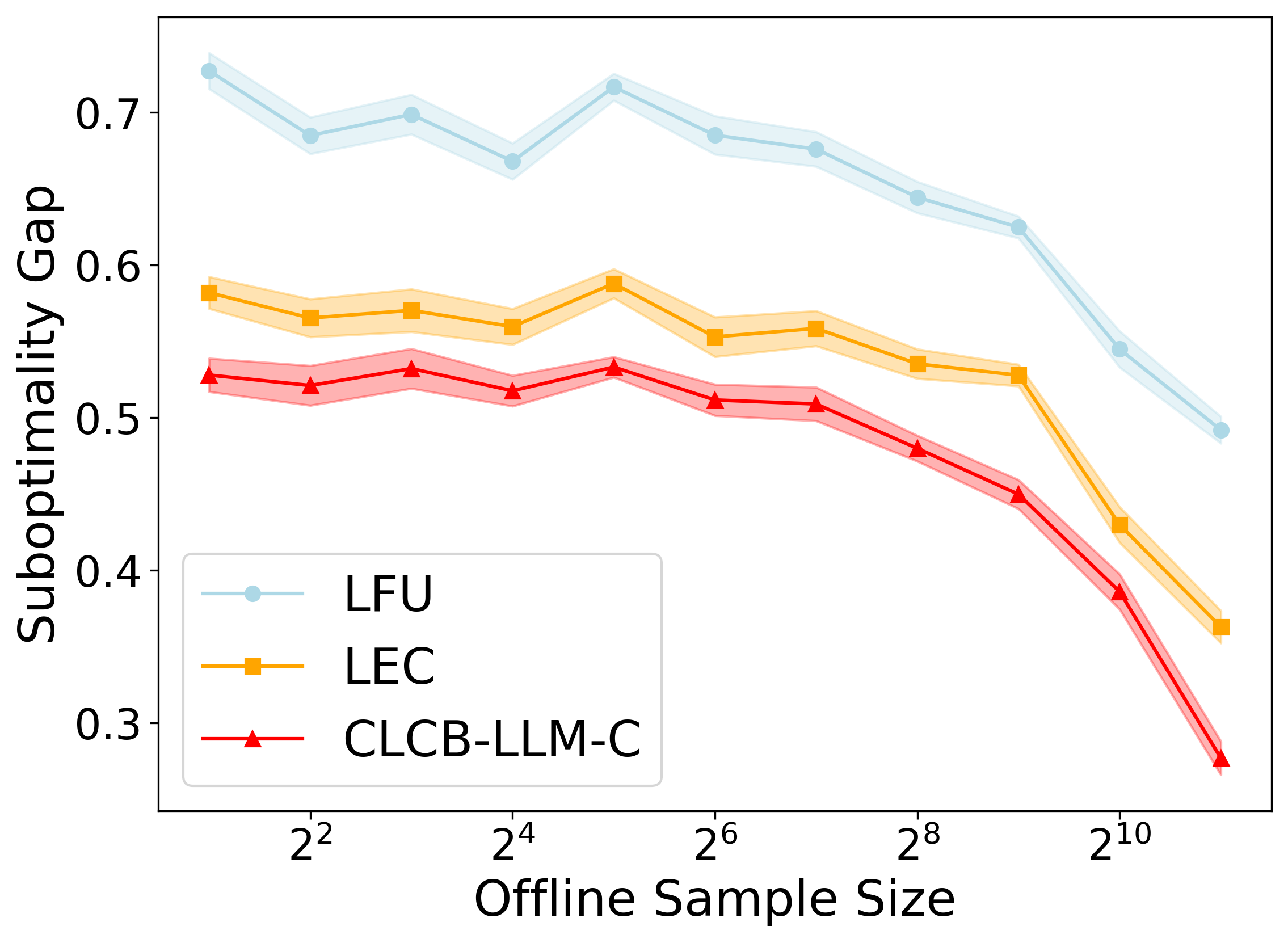}
    \caption{Synthetic Dataset}
    \label{fig:LLM_synthetic}
    \end{subfigure}
    \begin{subfigure}[t]{0.48\linewidth}
    \includegraphics[width=\linewidth]{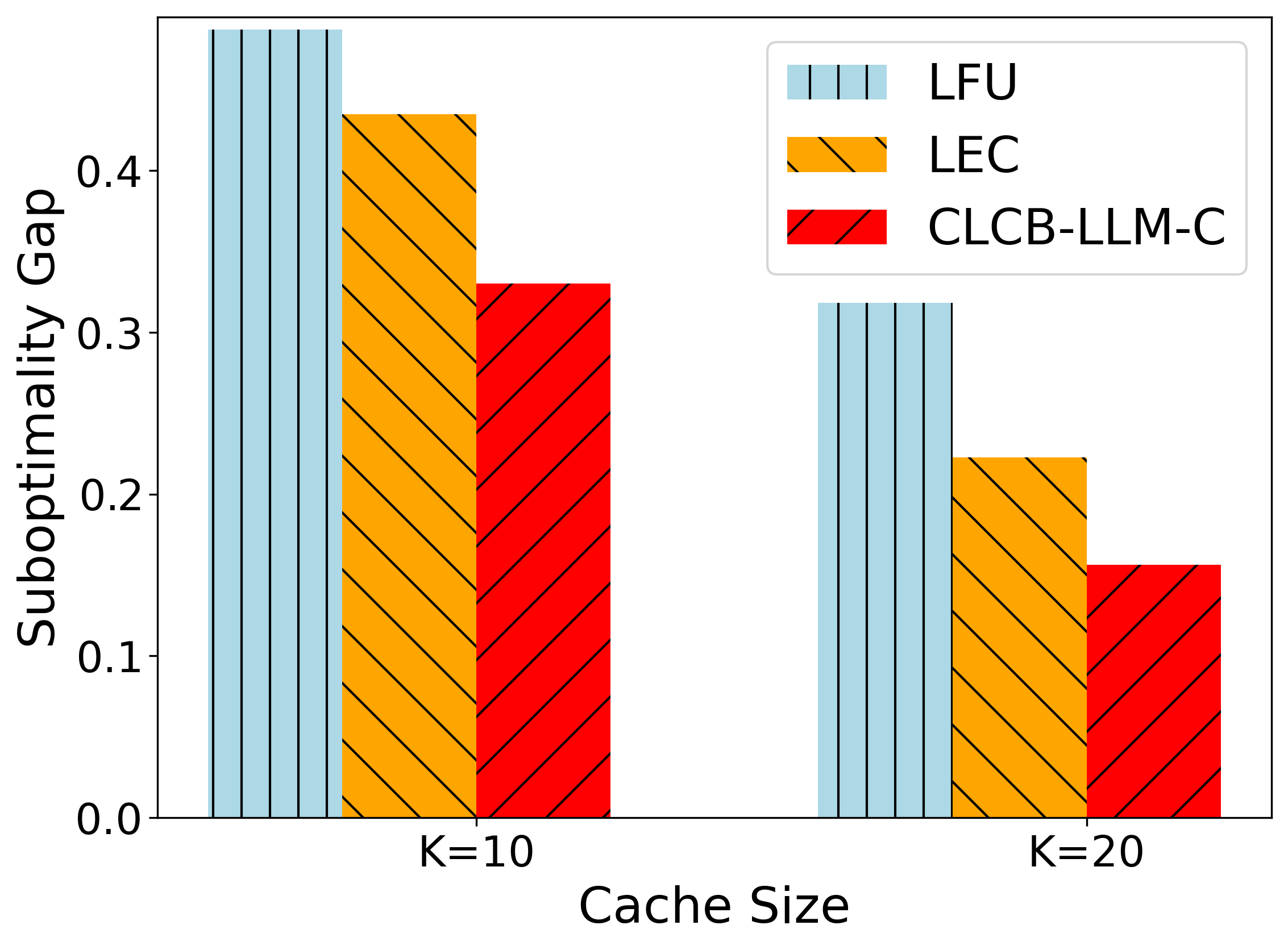}
    \caption{Real-world Dataset}
    \label{fig:LLM_real}
    \end{subfigure}
    \caption{Suboptimality gaps for LLM cache application.}
    \label{fig:LLM}
\end{figure}

We now present experimental results on both synthetic and real-world datasets. Each experiment was conducted over 20 independent trials. 
For cascading bandits on the application of learning to rank, in the synthetic setting, item parameters \(\mu_i\) are drawn from \(U[0,1]\), and in each round \(t\), a ranked list \(S_t\) of \(K\) items is randomly sampled. Fig.~\ref{fig:cascading_synthetic} shows that compared to CUCB-Offline \cite{chen2016combinatorial}, which is an offline adaptation of CUCB for our setting, and  EMP~\cite{liu2021multi}, which always selects the action based on the empirical mean of rewards, CLCB (\cref{alg:CLCB}) reduces suboptimality gaps by 47.75\% and 20.02\%, respectively.  
For real-world evaluation, we use the Yelp dataset\footnote{\url{https://www.yelp.com/dataset}}, where users rate businesses \cite{dai2024online}. We randomly select $m=200$ rated items per user and recommend up to \(K\) items to maximize the probability of user engagement. The unknown probability \(\mu_i\) is derived from Yelp, and cascading feedback is collected. Fig.~\ref{fig:cascading_real} compares suboptimality gaps over \(n=100\) rounds for \(K=4, 8\) , with a logarithmic scale on the y-axis. Note that as \(K\) increases, the expected reward also changes, thus reducing suboptimality gaps. CLCB consistently achieves the lowest suboptimality gap.

Moreover, we generate offline datasets $\cD$ with $n=100$ in four different ways: random sampling, UCB-based generation, LCB-based generation, and empirical-based generation. For the UCB-based, LCB-based, and empirical-based data generation methods, we select the top $K=5$ arms with the largest UCB, LCB, and empirical reward means, respectively. It can be observed in \cref{fig:multiple} that our method consistently maintains the smallest suboptimality gap. When using the UCB data generation method, our algorithm performs significantly better than the CUCB-Offline and EMP baselines, which aligns with our theoretical results.

Similarly, we conduct experiments in the LLM cache setting. In the synthetic setup, we simulate 100 distinct queries with a cache size of 40, following a power-law frequency distribution ($\alpha=0.9$) as in \cite{zhu2023optimal}.  As shown in Fig.~\ref{fig:LLM_synthetic}, our CLCB-LLM-C algorithm outperforms LFU (which evicts the least frequently accessed items to optimize cache usage) \cite{zhu2023optimal} and LEC (which minimizes inference cost by evicting items with the lowest estimated expected cost) \cite{zhu2023optimal}, achieving at least 1.32$\times$ improvement.
For real-world evaluation, we use the SciQ dataset \cite{welbl2017crowdsourcing}. We evaluate GPT-4-o with the ``o200k\_base'' encoding with cache sizes $K=10$ and  $K=20$, where cost is defined by OpenAI's API pricing with the \texttt{tiktoken} library \cite{OpenAIAPI}. Fig.~\ref{fig:LLM_real} shows that CLCB-LLM-C (\cref{alg:CLCB-LLM}) reduces costs by up to 36.01\% and 20.70\%, compared to LFU and LEC. Moreover, a larger $K$ shows a lower suboptimality gap, which is consistent with \cref{thm:LLM_cache_main}.
Further details on experimental setups, results, and additional comparisons can be found in Appendix~\ref{app:extended}.

\section{Conclusion and Future Directions}

In this paper, we introduce Off-CMAB, the first offline learning framework for CMAB. We propose two novel data coverage conditions and develop a provably sample-efficient CLCB algorithm, matching the lower bound up to logarithmic factors.
We show the practical usefulness of our framework via three diverse applications—learning to rank, LLM caching, and social influence maximization—achieving new or improved theoretical results. These results are further validated through extensive experiments on both synthetic and real-world datasets.
Looking ahead, an exciting direction is to develop variance-adaptive algorithms to further improve our theoretical guarantees. Additionally, extending our framework to offline RL with combinatorial action spaces is another promising direction for future research.

\section*{Acknowledgement}
The work is supported by NSF CNS-2103024 and the Office of Naval Research under grant N000142412073. The work of John C.S. Lui was supported in part by the RGC GRF-14202923. The work of Jinhang Zuo was supported by CityUHK 9610706.

\section*{Impact Statement}
This paper presents a theoretical study on multi-armed bandits and reinforcement learning. There are many potential societal consequences of our work, none of which we feel must be specifically highlighted here.
\bibliography{references}

\begin{thebibliography}{134}
\providecommand{\natexlab}[1]{#1}
\providecommand{\url}[1]{\texttt{#1}}
\expandafter\ifx\csname urlstyle\endcsname\relax
  \providecommand{\doi}[1]{doi: #1}\else
  \providecommand{\doi}{doi: \begingroup \urlstyle{rm}\Url}\fi

\bibitem[Agarwal et~al.(2019)Agarwal, Schuurmans, and Norouzi]{Agarwal2019AnOP}
Agarwal, R., Schuurmans, D., and Norouzi, M.
\newblock An optimistic perspective on offline reinforcement learning.
\newblock In \emph{International Conference on Machine Learning}, 2019.
\newblock URL \url{https://api.semanticscholar.org/CorpusID:212628904}.

\bibitem[Agrawal et~al.(2019)Agrawal, Avadhanula, Goyal, and Zeevi]{agrawal2019mnl}
Agrawal, S., Avadhanula, V., Goyal, V., and Zeevi, A.
\newblock Mnl-bandit: A dynamic learning approach to assortment selection.
\newblock \emph{Operations Research}, 67\penalty0 (5):\penalty0 1453--1485, 2019.

\bibitem[Audibert et~al.(2014)Audibert, Bubeck, and Lugosi]{audibert2014regret}
Audibert, J.-Y., Bubeck, S., and Lugosi, G.
\newblock Regret in online combinatorial optimization.
\newblock \emph{Mathematics of Operations Research}, 39\penalty0 (1):\penalty0 31--45, 2014.

\bibitem[Bai et~al.(2022)Bai, Wang, Yang, Deng, Garg, Liu, and Wang]{bai2022pessimisticbootstrappinguncertaintydrivenoffline}
Bai, C., Wang, L., Yang, Z., Deng, Z., Garg, A., Liu, P., and Wang, Z.
\newblock Pessimistic bootstrapping for uncertainty-driven offline reinforcement learning, 2022.
\newblock URL \url{https://arxiv.org/abs/2202.11566}.

\bibitem[Balkanski et~al.(2015)Balkanski, Rubinstein, and Singer]{Balkanski2015TheLO}
Balkanski, E., Rubinstein, A., and Singer, Y.
\newblock The limitations of optimization from samples.
\newblock \emph{Proceedings of the 49th Annual ACM SIGACT Symposium on Theory of Computing}, 2015.
\newblock URL \url{https://api.semanticscholar.org/CorpusID:742580}.

\bibitem[Balkanski et~al.(2016)Balkanski, Rubinstein, and Singer]{Balkanski2016ThePO}
Balkanski, E., Rubinstein, A., and Singer, Y.
\newblock The power of optimization from samples.
\newblock In \emph{Neural Information Processing Systems}, 2016.
\newblock URL \url{https://api.semanticscholar.org/CorpusID:15394546}.

\bibitem[Bang(2023)]{Bang2023GPTCacheAO}
Bang, F.
\newblock Gptcache: An open-source semantic cache for llm applications enabling faster answers and cost savings.
\newblock \emph{Proceedings of the 3rd Workshop for Natural Language Processing Open Source Software (NLP-OSS 2023)}, 2023.
\newblock URL \url{https://api.semanticscholar.org/CorpusID:265607979}.

\bibitem[Bouneffouf et~al.(2020)Bouneffouf, Rish, and Aggarwal]{bouneffouf2020survey}
Bouneffouf, D., Rish, I., and Aggarwal, C.
\newblock Survey on applications of multi-armed and contextual bandits.
\newblock In \emph{2020 IEEE Congress on Evolutionary Computation (CEC)}, pp.\  1--8. IEEE, 2020.

\bibitem[Bubeck et~al.(2012)Bubeck, Cesa-Bianchi, and Kakade]{bubeck2012towards}
Bubeck, S., Cesa-Bianchi, N., and Kakade, S.~M.
\newblock Towards minimax policies for online linear optimization with bandit feedback.
\newblock In \emph{Conference on Learning Theory}, pp.\  41--1. JMLR Workshop and Conference Proceedings, 2012.

\bibitem[Casper et~al.(2023)Casper, Davies, Shi, Gilbert, Scheurer, Rando, Freedman, Korbak, Lindner, Freire, Wang, Marks, S{\'e}gerie, Carroll, Peng, Christoffersen, Damani, Slocum, Anwar, Siththaranjan, Nadeau, Michaud, Pfau, Krasheninnikov, Chen, di~Langosco, Hase, Biyik, Dragan, Krueger, Sadigh, and Hadfield-Menell]{Casper2023OpenPA}
Casper, S., Davies, X., Shi, C., Gilbert, T.~K., Scheurer, J., Rando, J., Freedman, R., Korbak, T., Lindner, D., Freire, P., Wang, T., Marks, S., S{\'e}gerie, C.-R., Carroll, M., Peng, A., Christoffersen, P. J.~K., Damani, M., Slocum, S., Anwar, U., Siththaranjan, A., Nadeau, M., Michaud, E.~J., Pfau, J., Krasheninnikov, D., Chen, X., di~Langosco, L.~L., Hase, P., Biyik, E., Dragan, A.~D., Krueger, D., Sadigh, D., and Hadfield-Menell, D.
\newblock Open problems and fundamental limitations of reinforcement learning from human feedback.
\newblock \emph{ArXiv}, abs/2307.15217, 2023.
\newblock URL \url{https://api.semanticscholar.org/CorpusID:260316010}.

\bibitem[Cesa-Bianchi \& Lugosi(2012)Cesa-Bianchi and Lugosi]{cesa2012combinatorial}
Cesa-Bianchi, N. and Lugosi, G.
\newblock Combinatorial bandits.
\newblock \emph{Journal of Computer and System Sciences}, 78\penalty0 (5):\penalty0 1404--1422, 2012.

\bibitem[Chang et~al.(2021)Chang, Uehara, Sreenivas, Kidambi, and Sun]{Chang2021MitigatingCS}
Chang, J.~D., Uehara, M., Sreenivas, D., Kidambi, R., and Sun, W.
\newblock Mitigating covariate shift in imitation learning via offline data with partial coverage.
\newblock In \emph{Neural Information Processing Systems}, 2021.
\newblock URL \url{https://api.semanticscholar.org/CorpusID:248498378}.

\bibitem[Chen \& Jiang(2019{\natexlab{a}})Chen and Jiang]{Chen2019InformationTheoreticCI}
Chen, J. and Jiang, N.
\newblock Information-theoretic considerations in batch reinforcement learning.
\newblock In \emph{International Conference on Machine Learning}, 2019{\natexlab{a}}.
\newblock URL \url{https://api.semanticscholar.org/CorpusID:141460093}.

\bibitem[Chen \& Jiang(2019{\natexlab{b}})Chen and Jiang]{chen2019information}
Chen, J. and Jiang, N.
\newblock Information-theoretic considerations in batch reinforcement learning.
\newblock In \emph{International Conference on Machine Learning}, pp.\  1042--1051. PMLR, 2019{\natexlab{b}}.

\bibitem[Chen et~al.(2018)Chen, Xu, and Lu]{chen2018contextual}
Chen, L., Xu, J., and Lu, Z.
\newblock Contextual combinatorial multi-armed bandits with volatile arms and submodular reward.
\newblock \emph{Advances in Neural Information Processing Systems}, 31, 2018.

\bibitem[Chen et~al.(2009)Chen, Wang, and Yang]{Chen2009EfficientIM}
Chen, W., Wang, Y., and Yang, S.
\newblock Efficient influence maximization in social networks.
\newblock In \emph{Knowledge Discovery and Data Mining}, 2009.
\newblock URL \url{https://api.semanticscholar.org/CorpusID:10417256}.

\bibitem[Chen et~al.(2010)Chen, Yuan, and Zhang]{chen2010scalable}
Chen, W., Yuan, Y., and Zhang, L.
\newblock Scalable influence maximization in social networks under the linear threshold model.
\newblock In \emph{2010 IEEE international conference on data mining}, pp.\  88--97. IEEE, 2010.

\bibitem[Chen et~al.(2013)Chen, Wang, and Yuan]{chen2013combinatorial}
Chen, W., Wang, Y., and Yuan, Y.
\newblock Combinatorial multi-armed bandit: General framework and applications.
\newblock In \emph{International Conference on Machine Learning}, pp.\  151--159. PMLR, 2013.

\bibitem[Chen et~al.(2016)Chen, Wang, Yuan, and Wang]{chen2016combinatorial}
Chen, W., Wang, Y., Yuan, Y., and Wang, Q.
\newblock Combinatorial multi-armed bandit and its extension to probabilistically triggered arms.
\newblock \emph{The Journal of Machine Learning Research}, 17\penalty0 (1):\penalty0 1746--1778, 2016.

\bibitem[Chen et~al.(2020)Chen, Sun, Zhang, and Zhang]{chen2020optimization}
Chen, W., Sun, X., Zhang, J., and Zhang, Z.
\newblock Optimization from structured samples for coverage functions.
\newblock In \emph{International Conference on Machine Learning}, pp.\  1715--1724. PMLR, 2020.

\bibitem[Chen et~al.(2021)Chen, Sun, Zhang, and Zhang]{chen2021network}
Chen, W., Sun, X., Zhang, J., and Zhang, Z.
\newblock Network inference and influence maximization from samples.
\newblock In \emph{International Conference on Machine Learning}, pp.\  1707--1716. PMLR, 2021.

\bibitem[Chen et~al.(2019)Chen, Zhou, Wang, Wang, Wu, Deng, and Ross]{Chen2019BAILBI}
Chen, X., Zhou, Z., Wang, Z., Wang, C., Wu, Y., Deng, Q., and Ross, K.~W.
\newblock Bail: Best-action imitation learning for batch deep reinforcement learning.
\newblock \emph{ArXiv}, abs/1910.12179, 2019.
\newblock URL \url{https://api.semanticscholar.org/CorpusID:204907199}.

\bibitem[Chen et~al.(2023)Chen, Wang, McAuley, Jannach, and Yao]{Chen2023OnTO}
Chen, X., Wang, S., McAuley, J., Jannach, D., and Yao, L.
\newblock On the opportunities and challenges of offline reinforcement learning for recommender systems.
\newblock \emph{ACM Transactions on Information Systems}, 2023.
\newblock URL \url{https://api.semanticscholar.org/CorpusID:261065303}.

\bibitem[Choi et~al.(2024)Choi, Udwani, and Oh]{choi2024cascading}
Choi, H.-j., Udwani, R., and Oh, M.-h.
\newblock Cascading contextual assortment bandits.
\newblock \emph{Advances in Neural Information Processing Systems}, 36, 2024.

\bibitem[Combes et~al.(2015)Combes, Talebi Mazraeh~Shahi, Proutiere, et~al.]{combes2015combinatorial}
Combes, R., Talebi Mazraeh~Shahi, M.~S., Proutiere, A., et~al.
\newblock Combinatorial bandits revisited.
\newblock \emph{Advances in neural information processing systems}, 28, 2015.

\bibitem[Craswell et~al.(2008)Craswell, Zoeter, Taylor, and Ramsey]{craswell2008experimental}
Craswell, N., Zoeter, O., Taylor, M., and Ramsey, B.
\newblock An experimental comparison of click position-bias models.
\newblock In \emph{Proceedings of the 2008 international conference on web search and data mining}, pp.\  87--94, 2008.

\bibitem[Dai et~al.(2024{\natexlab{a}})Dai, Li, Liu, Yu, and Lui]{Dai2024CostEffectiveOM}
Dai, X., Li, J., Liu, X., Yu, A., and Lui, J. C.~S.
\newblock Cost-effective online multi-llm selection with versatile reward models.
\newblock \emph{ArXiv}, abs/2405.16587, 2024{\natexlab{a}}.
\newblock URL \url{https://api.semanticscholar.org/CorpusID:270063595}.

\bibitem[Dai et~al.(2024{\natexlab{b}})Dai, Wang, Xie, Liu, and Lui]{dai2024conversational}
Dai, X., Wang, Z., Xie, J., Liu, X., and Lui, J.~C.
\newblock Conversational recommendation with online learning and clustering on misspecified users.
\newblock \emph{IEEE Transactions on Knowledge and Data Engineering}, 36\penalty0 (12):\penalty0 7825--7838, 2024{\natexlab{b}}.

\bibitem[Dai et~al.(2024{\natexlab{c}})Dai, Wang, Xie, Yu, and Lui]{dai2024online}
Dai, X., Wang, Z., Xie, J., Yu, T., and Lui, J.~C.
\newblock Online learning and detecting corrupted users for conversational recommendation systems.
\newblock \emph{IEEE Transactions on Knowledge and Data Engineering}, 36\penalty0 (12):\penalty0 8939--8953, 2024{\natexlab{c}}.

\bibitem[Dai et~al.(2025{\natexlab{a}})Dai, Liu, Zuo, Xie, Joe-Wong, and Lui]{dai2025variance}
Dai, X., Liu, X., Zuo, J., Xie, H., Joe-Wong, C., and Lui, J. C.~S.
\newblock Variance-aware bandit framework for dynamic probabilistic maximum coverage problem with triggered or self-reliant arms.
\newblock \emph{IEEE Transactions on Networking}, pp.\  1--12, 2025{\natexlab{a}}.

\bibitem[Dai et~al.(2025{\natexlab{b}})Dai, Xie, Liu, Wang, Li, Wang, and Lui]{dai2025multi}
Dai, X., Xie, Y., Liu, M., Wang, X., Li, Z., Wang, H., and Lui, J.
\newblock Multi-agent conversational online learning for adaptive llm response identification.
\newblock \emph{arXiv preprint arXiv:2501.01849}, 2025{\natexlab{b}}.

\bibitem[Ernst et~al.(2005)Ernst, Geurts, and Wehenkel]{ernst2005tree}
Ernst, D., Geurts, P., and Wehenkel, L.
\newblock Tree-based batch mode reinforcement learning.
\newblock \emph{Journal of Machine Learning Research}, 6, 2005.

\bibitem[Feng et~al.(2024)Feng, Shen, and You]{Feng2024GraphRouterAG}
Feng, T., Shen, Y., and You, J.
\newblock Graphrouter: A graph-based router for llm selections.
\newblock \emph{ArXiv}, abs/2410.03834, 2024.
\newblock URL \url{https://api.semanticscholar.org/CorpusID:273185502}.

\bibitem[Fourati et~al.(2023)Fourati, Aggarwal, Quinn, and Alouini]{fourati2023randomized}
Fourati, F., Aggarwal, V., Quinn, C., and Alouini, M.-S.
\newblock Randomized greedy learning for non-monotone stochastic submodular maximization under full-bandit feedback.
\newblock In \emph{International Conference on Artificial Intelligence and Statistics}, pp.\  7455--7471. PMLR, 2023.

\bibitem[Fourati et~al.(2024{\natexlab{a}})Fourati, Alouini, and Aggarwal]{fourati2024federated}
Fourati, F., Alouini, M.-S., and Aggarwal, V.
\newblock Federated combinatorial multi-agent multi-armed bandits.
\newblock \emph{arXiv preprint arXiv:2405.05950}, 2024{\natexlab{a}}.

\bibitem[Fourati et~al.(2024{\natexlab{b}})Fourati, Quinn, Alouini, and Aggarwal]{fourati2024combinatorial}
Fourati, F., Quinn, C.~J., Alouini, M.-S., and Aggarwal, V.
\newblock Combinatorial stochastic-greedy bandit.
\newblock In \emph{Proceedings of the AAAI Conference on Artificial Intelligence}, volume~38, pp.\  12052--12060, 2024{\natexlab{b}}.

\bibitem[Fujimoto \& Gu(2021)Fujimoto and Gu]{Fujimoto2021AMA}
Fujimoto, S. and Gu, S.~S.
\newblock A minimalist approach to offline reinforcement learning.
\newblock \emph{ArXiv}, abs/2106.06860, 2021.
\newblock URL \url{https://api.semanticscholar.org/CorpusID:235422620}.

\bibitem[Fujimoto et~al.(2018)Fujimoto, Meger, and Precup]{Fujimoto2018OffPolicyDR}
Fujimoto, S., Meger, D., and Precup, D.
\newblock Off-policy deep reinforcement learning without exploration.
\newblock In \emph{International Conference on Machine Learning}, 2018.
\newblock URL \url{https://api.semanticscholar.org/CorpusID:54457299}.

\bibitem[Gai et~al.(2010)Gai, Krishnamachari, and Jain]{gai2010learning}
Gai, Y., Krishnamachari, B., and Jain, R.
\newblock Learning multiuser channel allocations in cognitive radio networks: A combinatorial multi-armed bandit formulation.
\newblock In \emph{2010 IEEE Symposium on New Frontiers in Dynamic Spectrum (DySPAN)}, pp.\  1--9. IEEE, 2010.

\bibitem[Gai et~al.(2012)Gai, Krishnamachari, and Jain]{gai2012combinatorial}
Gai, Y., Krishnamachari, B., and Jain, R.
\newblock Combinatorial network optimization with unknown variables: Multi-armed bandits with linear rewards and individual observations.
\newblock \emph{IEEE/ACM Transactions on Networking (TON)}, 20\penalty0 (5):\penalty0 1466--1478, 2012.

\bibitem[Gim et~al.(2023)Gim, Chen, seob Lee, Sarda, Khandelwal, and Zhong]{Gim2023PromptCM}
Gim, I., Chen, G., seob Lee, S., Sarda, N., Khandelwal, A., and Zhong, L.
\newblock Prompt cache: Modular attention reuse for low-latency inference.
\newblock \emph{ArXiv}, abs/2311.04934, 2023.
\newblock URL \url{https://api.semanticscholar.org/CorpusID:265067391}.

\bibitem[Gy{\"o}rgy et~al.(2007)Gy{\"o}rgy, Linder, Lugosi, and Ottucs{\'a}k]{gyorgy2007line}
Gy{\"o}rgy, A., Linder, T., Lugosi, G., and Ottucs{\'a}k, G.
\newblock The on-line shortest path problem under partial monitoring.
\newblock \emph{Journal of Machine Learning Research}, 8\penalty0 (10), 2007.

\bibitem[Haarnoja et~al.(2018)Haarnoja, Zhou, Abbeel, and Levine]{Haarnoja2018SoftAO}
Haarnoja, T., Zhou, A., Abbeel, P., and Levine, S.
\newblock Soft actor-critic: Off-policy maximum entropy deep reinforcement learning with a stochastic actor.
\newblock \emph{ArXiv}, abs/1801.01290, 2018.
\newblock URL \url{https://api.semanticscholar.org/CorpusID:28202810}.

\bibitem[Han et~al.(2021)Han, Wang, and Chen]{han2021adversarial}
Han, Y., Wang, Y., and Chen, X.
\newblock Adversarial combinatorial bandits with general non-linear reward functions.
\newblock In \emph{International Conference on Machine Learning}, pp.\  4030--4039. PMLR, 2021.

\bibitem[Hwang et~al.(2023)Hwang, Chai, and Oh]{hwang2023combinatorial}
Hwang, T., Chai, K., and Oh, M.-h.
\newblock Combinatorial neural bandits.
\newblock In \emph{International Conference on Machine Learning}, pp.\  14203--14236. PMLR, 2023.

\bibitem[Ito(2021)]{ito2021hybrid}
Ito, S.
\newblock Hybrid regret bounds for combinatorial semi-bandits and adversarial linear bandits.
\newblock \emph{Advances in Neural Information Processing Systems}, 34:\penalty0 2654--2667, 2021.

\bibitem[Jiang(2019)]{jiang2019value}
Jiang, N.
\newblock On value functions and the agent-environment boundary.
\newblock \emph{arXiv preprint arXiv:1905.13341}, 2019.

\bibitem[Jiang \& Li(2015)Jiang and Li]{Jiang2015DoublyRO}
Jiang, N. and Li, L.
\newblock Doubly robust off-policy value evaluation for reinforcement learning.
\newblock In \emph{International Conference on Machine Learning}, 2015.
\newblock URL \url{https://api.semanticscholar.org/CorpusID:5806691}.

\bibitem[Jin et~al.(2020{\natexlab{a}})Jin, Yang, Wang, and Jordan]{jin2020provably}
Jin, C., Yang, Z., Wang, Z., and Jordan, M.~I.
\newblock Provably efficient reinforcement learning with linear function approximation.
\newblock In \emph{Conference on Learning Theory}, pp.\  2137--2143. PMLR, 2020{\natexlab{a}}.

\bibitem[Jin et~al.(2020{\natexlab{b}})Jin, Yang, and Wang]{Jin2020IsPP}
Jin, Y., Yang, Z., and Wang, Z.
\newblock Is pessimism provably efficient for offline rl?
\newblock In \emph{International Conference on Machine Learning}, 2020{\natexlab{b}}.
\newblock URL \url{https://api.semanticscholar.org/CorpusID:229923558}.

\bibitem[Joachims(2002)]{Joachims2002OptimizingSE}
Joachims, T.
\newblock Optimizing search engines using clickthrough data.
\newblock \emph{Proceedings of the eighth ACM SIGKDD international conference on Knowledge discovery and data mining}, 2002.
\newblock URL \url{https://api.semanticscholar.org/CorpusID:207605508}.

\bibitem[Keane \& O'Brien(2006)Keane and O'Brien]{Keane2006ModelingRS}
Keane, M.~T. and O'Brien, M.
\newblock Modeling result-list searching in the world wide web: The role of relevance topologies and trust bias.
\newblock 2006.
\newblock URL \url{https://api.semanticscholar.org/CorpusID:18100844}.

\bibitem[Kempe et~al.(2003{\natexlab{a}})Kempe, Kleinberg, and Tardos]{kempe2003maximizing}
Kempe, D., Kleinberg, J., and Tardos, {\'E}.
\newblock Maximizing the spread of influence through a social network.
\newblock In \emph{Proceedings of the ninth ACM SIGKDD international conference on Knowledge discovery and data mining}, pp.\  137--146, 2003{\natexlab{a}}.

\bibitem[Kempe et~al.(2003{\natexlab{b}})Kempe, Kleinberg, and Tardos]{Kempe2003MaximizingTS}
Kempe, D., Kleinberg, J.~M., and Tardos, {\'E}.
\newblock Maximizing the spread of influence through a social network.
\newblock \emph{Theory Comput.}, 11:\penalty0 105--147, 2003{\natexlab{b}}.
\newblock URL \url{https://api.semanticscholar.org/CorpusID:7214363}.

\bibitem[Kidambi et~al.(2020)Kidambi, Rajeswaran, Netrapalli, and Joachims]{Kidambi2020MOReLM}
Kidambi, R., Rajeswaran, A., Netrapalli, P., and Joachims, T.
\newblock Morel : Model-based offline reinforcement learning.
\newblock \emph{ArXiv}, abs/2005.05951, 2020.
\newblock URL \url{https://api.semanticscholar.org/CorpusID:218595964}.

\bibitem[Kiran et~al.(2020)Kiran, Sobh, Talpaert, Mannion, Sallab, Yogamani, and P'erez]{Kiran2020DeepRL}
Kiran, B.~R., Sobh, I., Talpaert, V., Mannion, P., Sallab, A. A.~A., Yogamani, S.~K., and P'erez, P.
\newblock Deep reinforcement learning for autonomous driving: A survey.
\newblock \emph{IEEE Transactions on Intelligent Transportation Systems}, 23:\penalty0 4909--4926, 2020.
\newblock URL \url{https://api.semanticscholar.org/CorpusID:211011033}.

\bibitem[Kumar et~al.(2019)Kumar, Fu, Tucker, and Levine]{Kumar2019StabilizingOQ}
Kumar, A., Fu, J., Tucker, G., and Levine, S.
\newblock Stabilizing off-policy q-learning via bootstrapping error reduction.
\newblock In \emph{Neural Information Processing Systems}, 2019.
\newblock URL \url{https://api.semanticscholar.org/CorpusID:173990380}.

\bibitem[Kumar et~al.(2020)Kumar, Zhou, Tucker, and Levine]{Kumar2020ConservativeQF}
Kumar, A., Zhou, A., Tucker, G., and Levine, S.
\newblock Conservative q-learning for offline reinforcement learning.
\newblock \emph{ArXiv}, abs/2006.04779, 2020.
\newblock URL \url{https://api.semanticscholar.org/CorpusID:219530894}.

\bibitem[Kveton et~al.(2015{\natexlab{a}})Kveton, Szepesvari, Wen, and Ashkan]{kveton2015cascading}
Kveton, B., Szepesvari, C., Wen, Z., and Ashkan, A.
\newblock Cascading bandits: Learning to rank in the cascade model.
\newblock In \emph{International Conference on Machine Learning}, pp.\  767--776. PMLR, 2015{\natexlab{a}}.

\bibitem[Kveton et~al.(2015{\natexlab{b}})Kveton, Wen, Ashkan, and Szepesv{\'a}ri]{kveton2015combinatorial}
Kveton, B., Wen, Z., Ashkan, A., and Szepesv{\'a}ri, C.
\newblock Combinatorial cascading bandits.
\newblock In \emph{Proceedings of the 28th International Conference on Neural Information Processing Systems-Volume 1}, pp.\  1450--1458, 2015{\natexlab{b}}.

\bibitem[Kveton et~al.(2015{\natexlab{c}})Kveton, Wen, Ashkan, and Szepesvari]{kveton2015tight}
Kveton, B., Wen, Z., Ashkan, A., and Szepesvari, C.
\newblock Tight regret bounds for stochastic combinatorial semi-bandits.
\newblock In \emph{AISTATS}, 2015{\natexlab{c}}.

\bibitem[Kwon et~al.(2023)Kwon, Li, Zhuang, Sheng, Zheng, Yu, Gonzalez, Zhang, and Stoica]{Kwon2023EfficientMM}
Kwon, W., Li, Z., Zhuang, S., Sheng, Y., Zheng, L., Yu, C.~H., Gonzalez, J.~E., Zhang, H., and Stoica, I.
\newblock Efficient memory management for large language model serving with pagedattention.
\newblock \emph{Proceedings of the 29th Symposium on Operating Systems Principles}, 2023.
\newblock URL \url{https://api.semanticscholar.org/CorpusID:261697361}.

\bibitem[Lange et~al.(2012)Lange, Gabel, and Riedmiller]{lange2012batch}
Lange, S., Gabel, T., and Riedmiller, M.
\newblock Batch reinforcement learning.
\newblock In \emph{Reinforcement learning: State-of-the-art}, pp.\  45--73. Springer, 2012.

\bibitem[Lattimore \& Szepesv{\'a}ri(2020)Lattimore and Szepesv{\'a}ri]{lattimore2020bandit}
Lattimore, T. and Szepesv{\'a}ri, C.
\newblock \emph{Bandit algorithms}.
\newblock Cambridge University Press, 2020.

\bibitem[Lattimore et~al.(2018)Lattimore, Kveton, Li, and Szepesvari]{lattimore2018toprank}
Lattimore, T., Kveton, B., Li, S., and Szepesvari, C.
\newblock Toprank: A practical algorithm for online stochastic ranking.
\newblock \emph{Advances in Neural Information Processing Systems}, 31, 2018.

\bibitem[Le~Cam(2012)]{le2012asymptotic}
Le~Cam, L.
\newblock \emph{Asymptotic methods in statistical decision theory}.
\newblock Springer Science \& Business Media, 2012.

\bibitem[Lee et~al.(2022)Lee, Seo, Lee, Abbeel, and Shin]{lee2022offline}
Lee, S., Seo, Y., Lee, K., Abbeel, P., and Shin, J.
\newblock Offline-to-online reinforcement learning via balanced replay and pessimistic q-ensemble.
\newblock In \emph{Conference on Robot Learning}, pp.\  1702--1712. PMLR, 2022.

\bibitem[Levine et~al.(2020)Levine, Kumar, Tucker, and Fu]{levine2020offline}
Levine, S., Kumar, A., Tucker, G., and Fu, J.
\newblock Offline reinforcement learning: Tutorial, review, and perspectives on open problems.
\newblock \emph{arXiv preprint arXiv:2005.01643}, 2020.

\bibitem[Li et~al.(2019)Li, Liu, and Ji]{li2019combinatorial}
Li, F., Liu, J., and Ji, B.
\newblock Combinatorial sleeping bandits with fairness constraints.
\newblock \emph{IEEE Transactions on Network Science and Engineering}, 7\penalty0 (3):\penalty0 1799--1813, 2019.

\bibitem[Li et~al.(2022{\natexlab{a}})Li, Ma, and Srebro]{li2022pessimism}
Li, G., Ma, C., and Srebro, N.
\newblock Pessimism for offline linear contextual bandits using $ ell\_p $ confidence sets.
\newblock \emph{Advances in Neural Information Processing Systems}, 35:\penalty0 20974--20987, 2022{\natexlab{a}}.

\bibitem[Li et~al.(2022{\natexlab{b}})Li, Shi, Chen, Chi, and Wei]{Li2022SettlingTS}
Li, G., Shi, L., Chen, Y., Chi, Y., and Wei, Y.
\newblock Settling the sample complexity of model-based offline reinforcement learning.
\newblock \emph{ArXiv}, abs/2204.05275, 2022{\natexlab{b}}.
\newblock URL \url{https://api.semanticscholar.org/CorpusID:248085509}.

\bibitem[Li et~al.(2016)Li, Wang, Zhang, and Chen]{li2016contextual}
Li, S., Wang, B., Zhang, S., and Chen, W.
\newblock Contextual combinatorial cascading bandits.
\newblock In \emph{International conference on machine learning}, pp.\  1245--1253. PMLR, 2016.

\bibitem[Lin \& Bouneffouf(2022)Lin and Bouneffouf]{lin2022optimal}
Lin, B. and Bouneffouf, D.
\newblock Optimal epidemic control as a contextual combinatorial bandit with budget.
\newblock In \emph{2022 IEEE International Conference on Fuzzy Systems (FUZZ-IEEE)}, pp.\  1--8. IEEE, 2022.

\bibitem[Liu et~al.(2020)Liu, See, Ngiam, Celi, Sun, and Feng]{Liu2020ReinforcementLF}
Liu, S., See, K.~C., Ngiam, K.~Y., Celi, L.~A., Sun, X., and Feng, M.
\newblock Reinforcement learning for clinical decision support in critical care: Comprehensive review.
\newblock \emph{Journal of Medical Internet Research}, 22, 2020.
\newblock URL \url{https://api.semanticscholar.org/CorpusID:219676905}.

\bibitem[Liu et~al.(2009)]{liu2009learning}
Liu, T.-Y. et~al.
\newblock Learning to rank for information retrieval.
\newblock \emph{Foundations and Trends{\textregistered} in Information Retrieval}, 3\penalty0 (3):\penalty0 225--331, 2009.

\bibitem[Liu et~al.(2021)Liu, Zuo, Chen, Chen, and Lui]{liu2021multi}
Liu, X., Zuo, J., Chen, X., Chen, W., and Lui, J.~C.
\newblock Multi-layered network exploration via random walks: From offline optimization to online learning.
\newblock In \emph{International Conference on Machine Learning}, pp.\  7057--7066. PMLR, 2021.

\bibitem[Liu et~al.(2022)Liu, Zuo, Wang, Joe-Wong, Lui, and Chen]{liu2022batch}
Liu, X., Zuo, J., Wang, S., Joe-Wong, C., Lui, J., and Chen, W.
\newblock Batch-size independent regret bounds for combinatorial semi-bandits with probabilistically triggered arms or independent arms.
\newblock In \emph{Advances in Neural Information Processing Systems}, 2022.

\bibitem[Liu et~al.(2023{\natexlab{a}})Liu, Zuo, Wang, Lui, Hajiesmaili, Wierman, and Chen]{liu2023contextual}
Liu, X., Zuo, J., Wang, S., Lui, J.~C., Hajiesmaili, M., Wierman, A., and Chen, W.
\newblock Contextual combinatorial bandits with probabilistically triggered arms.
\newblock In \emph{International Conference on Machine Learning}, pp.\  22559--22593. PMLR, 2023{\natexlab{a}}.

\bibitem[Liu et~al.(2023{\natexlab{b}})Liu, Zuo, Xie, Joe-Wong, and Lui]{liu2023variance}
Liu, X., Zuo, J., Xie, H., Joe-Wong, C., and Lui, J.~C.
\newblock Variance-adaptive algorithm for probabilistic maximum coverage bandits with general feedback.
\newblock In \emph{IEEE INFOCOM 2023-IEEE Conference on Computer Communications}, pp.\  1--10. IEEE, 2023{\natexlab{b}}.

\bibitem[Liu et~al.(2024{\natexlab{a}})Liu, Dai, Wang, Hajiesmaili, and Lui]{liu2024combinatorial1}
Liu, X., Dai, X., Wang, X., Hajiesmaili, M., and Lui, J.
\newblock Combinatorial logistic bandits.
\newblock \emph{arXiv preprint arXiv:2410.17075}, 2024{\natexlab{a}}.

\bibitem[Liu et~al.(2024{\natexlab{b}})Liu, Wang, Zuo, Zhong, Wang, Wang, Li, Hajiesmaili, Lui, and Chen]{liu2024combinatorial}
Liu, X., Wang, S., Zuo, J., Zhong, H., Wang, X., Wang, Z., Li, S., Hajiesmaili, M., Lui, J., and Chen, W.
\newblock Combinatorial multivariant multi-armed bandits with applications to episodic reinforcement learning and beyond.
\newblock \emph{arXiv preprint arXiv:2406.01386}, 2024{\natexlab{b}}.

\bibitem[Merlis \& Mannor(2019)Merlis and Mannor]{merlis2019batch}
Merlis, N. and Mannor, S.
\newblock Batch-size independent regret bounds for the combinatorial multi-armed bandit problem.
\newblock In \emph{Conference on Learning Theory}, pp.\  2465--2489. PMLR, 2019.

\bibitem[Mnih et~al.(2013)Mnih, Kavukcuoglu, Silver, Graves, Antonoglou, Wierstra, and Riedmiller]{mnih2013playing}
Mnih, V., Kavukcuoglu, K., Silver, D., Graves, A., Antonoglou, I., Wierstra, D., and Riedmiller, M.
\newblock Playing atari with deep reinforcement learning.
\newblock \emph{arXiv preprint arXiv:1312.5602}, 2013.

\bibitem[Nachum et~al.(2019)Nachum, Dai, Kostrikov, Chow, Li, and Schuurmans]{Nachum2019AlgaeDICEPG}
Nachum, O., Dai, B., Kostrikov, I., Chow, Y., Li, L., and Schuurmans, D.
\newblock Algaedice: Policy gradient from arbitrary experience.
\newblock \emph{ArXiv}, abs/1912.02074, 2019.
\newblock URL \url{https://api.semanticscholar.org/CorpusID:208617840}.

\bibitem[Narasimhan et~al.(2015)Narasimhan, Parkes, and Singer]{narasimhan2015learnability}
Narasimhan, H., Parkes, D.~C., and Singer, Y.
\newblock Learnability of influence in networks.
\newblock \emph{Advances in Neural Information Processing Systems}, 28, 2015.

\bibitem[Neu(2015)]{neu2015first}
Neu, G.
\newblock First-order regret bounds for combinatorial semi-bandits.
\newblock In \emph{Conference on Learning Theory}, pp.\  1360--1375. PMLR, 2015.

\bibitem[Nguyen-Tang et~al.(2021{\natexlab{a}})Nguyen-Tang, Gupta, Nguyen, and Venkatesh]{nguyen2021offline}
Nguyen-Tang, T., Gupta, S., Nguyen, A.~T., and Venkatesh, S.
\newblock Offline neural contextual bandits: Pessimism, optimization and generalization.
\newblock \emph{arXiv preprint arXiv:2111.13807}, 2021{\natexlab{a}}.

\bibitem[Nguyen-Tang et~al.(2021{\natexlab{b}})Nguyen-Tang, Gupta, Tran-The, and Venkatesh]{nguyen2021sample}
Nguyen-Tang, T., Gupta, S., Tran-The, H., and Venkatesh, S.
\newblock Sample complexity of offline reinforcement learning with deep relu networks.
\newblock \emph{arXiv preprint arXiv:2103.06671}, 2021{\natexlab{b}}.

\bibitem[Niazadeh et~al.(2021)Niazadeh, Golrezaei, Wang, Susan, and Badanidiyuru]{niazadeh2021online}
Niazadeh, R., Golrezaei, N., Wang, J.~R., Susan, F., and Badanidiyuru, A.
\newblock Online learning via offline greedy algorithms: Applications in market design and optimization.
\newblock In \emph{Proceedings of the 22nd ACM Conference on Economics and Computation}, pp.\  737--738, 2021.

\bibitem[Nie et~al.(2023)Nie, Nadew, Zhu, Aggarwal, and Quinn]{nie2023framework}
Nie, G., Nadew, Y.~Y., Zhu, Y., Aggarwal, V., and Quinn, C.~J.
\newblock A framework for adapting offline algorithms to solve combinatorial multi-armed bandit problems with bandit feedback.
\newblock In \emph{International Conference on Machine Learning}, pp.\  26166--26198. PMLR, 2023.

\bibitem[Nika et~al.(2020)Nika, Elahi, and Tekin]{nika2020contextual}
Nika, A., Elahi, S., and Tekin, C.
\newblock Contextual combinatorial volatile multi-armed bandit with adaptive discretization.
\newblock In \emph{International Conference on Artificial Intelligence and Statistics}, pp.\  1486--1496. PMLR, 2020.

\bibitem[OpenAI(2025)]{OpenAIAPI}
OpenAI.
\newblock {OpenAI LLM API}.
\newblock \url{https://platform.openai.com/}, 2025.

\bibitem[Pope et~al.(2022)Pope, Douglas, Chowdhery, Devlin, Bradbury, Levskaya, Heek, Xiao, Agrawal, and Dean]{Pope2022EfficientlyST}
Pope, R., Douglas, S., Chowdhery, A., Devlin, J., Bradbury, J., Levskaya, A., Heek, J., Xiao, K., Agrawal, S., and Dean, J.
\newblock Efficiently scaling transformer inference.
\newblock \emph{ArXiv}, abs/2211.05102, 2022.
\newblock URL \url{https://api.semanticscholar.org/CorpusID:253420623}.

\bibitem[Qin et~al.(2014)Qin, Chen, and Zhu]{qin2014contextual}
Qin, L., Chen, S., and Zhu, X.
\newblock Contextual combinatorial bandit and its application on diversified online recommendation.
\newblock In \emph{Proceedings of the 2014 SIAM International Conference on Data Mining}, pp.\  461--469. SIAM, 2014.

\bibitem[Qu et~al.(2024)Qu, Chen, Wei, Lin, Chen, and Huang]{Qu2024MobileEI}
Qu, G., Chen, Q., Wei, W., Lin, Z., Chen, X., and Huang, K.
\newblock Mobile edge intelligence for large language models: A contemporary survey.
\newblock \emph{ArXiv}, abs/2407.18921, 2024.
\newblock URL \url{https://api.semanticscholar.org/CorpusID:271534421}.

\bibitem[Rashidinejad et~al.(2021)Rashidinejad, Zhu, Ma, Jiao, and Russell]{Rashidinejad2021BridgingOR}
Rashidinejad, P., Zhu, B., Ma, C., Jiao, J., and Russell, S.~J.
\newblock Bridging offline reinforcement learning and imitation learning: A tale of pessimism.
\newblock \emph{IEEE Transactions on Information Theory}, 68:\penalty0 8156--8196, 2021.

\bibitem[Rashidinejad et~al.(2022)Rashidinejad, Zhu, Yang, Russell, and Jiao]{Rashidinejad2022OptimalCO}
Rashidinejad, P., Zhu, H., Yang, K., Russell, S.~J., and Jiao, J.
\newblock Optimal conservative offline rl with general function approximation via augmented lagrangian.
\newblock \emph{ArXiv}, abs/2211.00716, 2022.
\newblock URL \url{https://api.semanticscholar.org/CorpusID:253255046}.

\bibitem[Richardson \& Domingos(2002)Richardson and Domingos]{Richardson2002MiningKS}
Richardson, M. and Domingos, P.~M.
\newblock Mining knowledge-sharing sites for viral marketing.
\newblock \emph{Proceedings of the eighth ACM SIGKDD international conference on Knowledge discovery and data mining}, 2002.
\newblock URL \url{https://api.semanticscholar.org/CorpusID:5785954}.

\bibitem[Riedmiller(2005)]{riedmiller2005neural}
Riedmiller, M.
\newblock Neural fitted q iteration--first experiences with a data efficient neural reinforcement learning method.
\newblock In \emph{Machine learning: ECML 2005: 16th European conference on machine learning, Porto, Portugal, October 3-7, 2005. proceedings 16}, pp.\  317--328. Springer, 2005.

\bibitem[Saha \& Gopalan(2019)Saha and Gopalan]{saha2019combinatorial}
Saha, A. and Gopalan, A.
\newblock Combinatorial bandits with relative feedback.
\newblock \emph{Advances in Neural Information Processing Systems}, 32, 2019.

\bibitem[Sheng et~al.(2023)Sheng, Zheng, Yuan, Li, Ryabinin, Fu, Xie, Chen, Barrett, Gonzalez, Liang, R{\'e}, Stoica, and Zhang]{Sheng2023HighthroughputGI}
Sheng, Y., Zheng, L., Yuan, B., Li, Z., Ryabinin, M., Fu, D.~Y., Xie, Z., Chen, B., Barrett, C.~W., Gonzalez, J., Liang, P., R{\'e}, C., Stoica, I., and Zhang, C.
\newblock High-throughput generative inference of large language models with a single gpu.
\newblock In \emph{International Conference on Machine Learning}, 2023.
\newblock URL \url{https://api.semanticscholar.org/CorpusID:257495837}.

\bibitem[Shi et~al.(2022)Shi, Li, Wei, Chen, and Chi]{Shi2022PessimisticQF}
Shi, L., Li, G., Wei, Y., Chen, Y., and Chi, Y.
\newblock Pessimistic q-learning for offline reinforcement learning: Towards optimal sample complexity.
\newblock \emph{ArXiv}, abs/2202.13890, 2022.
\newblock URL \url{https://api.semanticscholar.org/CorpusID:247159013}.

\bibitem[Singh et~al.(2021)Singh, Kumar, and Singh]{Singh2021ReinforcementLI}
Singh, B., Kumar, R., and Singh, V.~P.
\newblock Reinforcement learning in robotic applications: a comprehensive survey.
\newblock \emph{Artificial Intelligence Review}, 55:\penalty0 945 -- 990, 2021.
\newblock URL \url{https://api.semanticscholar.org/CorpusID:234826156}.

\bibitem[Sun et~al.(2025)Sun, Guo, Han, and Zhang]{sun2025greedy}
Sun, X., Guo, T., Han, C., and Zhang, H.
\newblock Greedy algorithms for stochastic monotone k-submodular maximization under full-bandit feedback.
\newblock \emph{Journal of Combinatorial Optimization}, 49\penalty0 (1):\penalty0 1--25, 2025.

\bibitem[Szepesvari \& Munos(2005)Szepesvari and Munos]{Szepesvari2005FiniteTB}
Szepesvari, C. and Munos, R.
\newblock Finite time bounds for sampling based fitted value iteration.
\newblock \emph{Proceedings of the 22nd international conference on Machine learning}, 2005.
\newblock URL \url{https://api.semanticscholar.org/CorpusID:8617488}.

\bibitem[Takemura et~al.(2021)Takemura, Ito, Hatano, Sumita, Fukunaga, Kakimura, and Kawarabayashi]{takemura2021near}
Takemura, K., Ito, S., Hatano, D., Sumita, H., Fukunaga, T., Kakimura, N., and Kawarabayashi, K.-i.
\newblock Near-optimal regret bounds for combinatorial semi-bandits with linear payoff functions.
\newblock In \emph{Proceedings of the AAAI Conference on Artificial Intelligence}, pp.\  9791--9798, 2021.

\bibitem[Tsuchiya et~al.(2023)Tsuchiya, Ito, and Honda]{tsuchiya2023further}
Tsuchiya, T., Ito, S., and Honda, J.
\newblock Further adaptive best-of-both-worlds algorithm for combinatorial semi-bandits.
\newblock In \emph{International Conference on Artificial Intelligence and Statistics}, pp.\  8117--8144. PMLR, 2023.

\bibitem[Uchiya et~al.(2010)Uchiya, Nakamura, and Kudo]{uchiya2010algorithms}
Uchiya, T., Nakamura, A., and Kudo, M.
\newblock Algorithms for adversarial bandit problems with multiple plays.
\newblock In \emph{International Conference on Algorithmic Learning Theory}, pp.\  375--389. Springer, 2010.

\bibitem[Vaswani et~al.(2015)Vaswani, Lakshmanan, Schmidt, et~al.]{vaswani2015influence}
Vaswani, S., Lakshmanan, L., Schmidt, M., et~al.
\newblock Influence maximization with bandits.
\newblock \emph{arXiv preprint arXiv:1503.00024}, 2015.

\bibitem[Vaswani et~al.(2017{\natexlab{a}})Vaswani, Kveton, Wen, Ghavamzadeh, Lakshmanan, and Schmidt]{vaswani2017diffusion}
Vaswani, S., Kveton, B., Wen, Z., Ghavamzadeh, M., Lakshmanan, L., and Schmidt, M.
\newblock Diffusion independent semi-bandit influence maximization.
\newblock In \emph{Proceedings of the 34th International Conference on Machine Learning (ICML)}, 2017{\natexlab{a}}.

\bibitem[Vaswani et~al.(2017{\natexlab{b}})Vaswani, Kveton, Wen, Ghavamzadeh, Lakshmanan, and Schmidt]{Vaswani2017ModelIndependentOL}
Vaswani, S., Kveton, B., Wen, Z., Ghavamzadeh, M., Lakshmanan, L. V.~S., and Schmidt, M.~W.
\newblock Model-independent online learning for influence maximization.
\newblock In \emph{International Conference on Machine Learning}, 2017{\natexlab{b}}.
\newblock URL \url{https://api.semanticscholar.org/CorpusID:32455974}.

\bibitem[Verma et~al.(2023)Verma, Mate, Wang, Madhiwalla, Hegde, Taneja, and Tambe]{verma2023restless}
Verma, S., Mate, A., Wang, K., Madhiwalla, N., Hegde, A., Taneja, A., and Tambe, M.
\newblock Restless multi-armed bandits for maternal and child health: Results from decision-focused learning.
\newblock In \emph{AAMAS}, pp.\  1312--1320, 2023.

\bibitem[Vial et~al.(2022)Vial, Shakkottai, and Srikant]{vial2022minimax}
Vial, D., Shakkottai, S., and Srikant, R.
\newblock Minimax regret for cascading bandits.
\newblock In \emph{Advances in Neural Information Processing Systems}, 2022.

\bibitem[Wang et~al.(2024)Wang, Cao, Zhang, and Qi]{wang2024cascading}
Wang, D., Cao, J., Zhang, Y., and Qi, W.
\newblock Cascading bandits: optimizing recommendation frequency in delayed feedback environments.
\newblock \emph{Advances in Neural Information Processing Systems}, 36, 2024.

\bibitem[Wang et~al.(2019)Wang, Cai, Yang, and Wang]{Wang2019NeuralPG}
Wang, L., Cai, Q., Yang, Z., and Wang, Z.
\newblock Neural policy gradient methods: Global optimality and rates of convergence.
\newblock \emph{ArXiv}, abs/1909.01150, 2019.
\newblock URL \url{https://api.semanticscholar.org/CorpusID:202121359}.

\bibitem[Wang \& Chen(2017)Wang and Chen]{wang2017improving}
Wang, Q. and Chen, W.
\newblock Improving regret bounds for combinatorial semi-bandits with probabilistically triggered arms and its applications.
\newblock In \emph{Advances in Neural Information Processing Systems}, pp.\  1161--1171, 2017.

\bibitem[Wang \& Chen(2018)Wang and Chen]{wang2018thompson}
Wang, S. and Chen, W.
\newblock Thompson sampling for combinatorial semi-bandits.
\newblock In \emph{International Conference on Machine Learning}, pp.\  5114--5122, 2018.

\bibitem[Wang et~al.(2016)Wang, Bendersky, Metzler, and Najork]{Wang2016LearningTR}
Wang, X., Bendersky, M., Metzler, D., and Najork, M.
\newblock Learning to rank with selection bias in personal search.
\newblock \emph{Proceedings of the 39th International ACM SIGIR conference on Research and Development in Information Retrieval}, 2016.
\newblock URL \url{https://api.semanticscholar.org/CorpusID:15989814}.

\bibitem[Wang et~al.(2018)Wang, Golbandi, Bendersky, Metzler, and Najork]{Wang2018PositionBE}
Wang, X., Golbandi, N., Bendersky, M., Metzler, D., and Najork, M.
\newblock Position bias estimation for unbiased learning to rank in personal search.
\newblock \emph{Proceedings of the Eleventh ACM International Conference on Web Search and Data Mining}, 2018.
\newblock URL \url{https://api.semanticscholar.org/CorpusID:21054674}.

\bibitem[Wang et~al.(2023)Wang, Chen, and Vojnovi{\'c}]{wang2023combinatorial}
Wang, Y., Chen, W., and Vojnovi{\'c}, M.
\newblock Combinatorial bandits for maximum value reward function under max value-index feedback.
\newblock \emph{arXiv preprint arXiv:2305.16074}, 2023.

\bibitem[Welbl et~al.(2017)Welbl, Liu, and Gardner]{welbl2017crowdsourcing}
Welbl, J., Liu, N.~F., and Gardner, M.
\newblock Crowdsourcing multiple choice science questions.
\newblock \emph{arXiv preprint arXiv:1707.06209}, 2017.

\bibitem[Wen et~al.(2017)Wen, Kveton, Valko, and Vaswani]{wen2017online}
Wen, Z., Kveton, B., Valko, M., and Vaswani, S.
\newblock Online influence maximization under independent cascade model with semi-bandit feedback.
\newblock \emph{Advances in neural information processing systems}, 30, 2017.

\bibitem[Wu et~al.(2019)Wu, Li, Wang, Chen, and Wang]{Wu2019FactorizationBF}
Wu, Q., Li, Z., Wang, H., Chen, W., and Wang, H.
\newblock Factorization bandits for online influence maximization.
\newblock \emph{Proceedings of the 25th ACM SIGKDD International Conference on Knowledge Discovery \& Data Mining}, 2019.
\newblock URL \url{https://api.semanticscholar.org/CorpusID:182952558}.

\bibitem[Xie et~al.(2021{\natexlab{a}})Xie, Cheng, Jiang, Mineiro, and Agarwal]{Xie2021BellmanconsistentPF}
Xie, T., Cheng, C.-A., Jiang, N., Mineiro, P., and Agarwal, A.
\newblock Bellman-consistent pessimism for offline reinforcement learning.
\newblock In \emph{Neural Information Processing Systems}, 2021{\natexlab{a}}.
\newblock URL \url{https://api.semanticscholar.org/CorpusID:235422048}.

\bibitem[Xie et~al.(2021{\natexlab{b}})Xie, Cheng, Jiang, Mineiro, and Agarwal]{xie2021bellman}
Xie, T., Cheng, C.-A., Jiang, N., Mineiro, P., and Agarwal, A.
\newblock Bellman-consistent pessimism for offline reinforcement learning.
\newblock \emph{Advances in neural information processing systems}, 34:\penalty0 6683--6694, 2021{\natexlab{b}}.

\bibitem[Yin et~al.(2021)Yin, Bai, and Wang]{Yin2021NearOptimalOR}
Yin, M., Bai, Y., and Wang, Y.-X.
\newblock Near-optimal offline reinforcement learning via double variance reduction.
\newblock \emph{ArXiv}, abs/2102.01748, 2021.
\newblock URL \url{https://api.semanticscholar.org/CorpusID:231786531}.

\bibitem[Yu et~al.(2021)Yu, Kumar, Rafailov, Rajeswaran, Levine, and Finn]{Yu2021COMBOCO}
Yu, T., Kumar, A., Rafailov, R., Rajeswaran, A., Levine, S., and Finn, C.
\newblock Combo: Conservative offline model-based policy optimization.
\newblock In \emph{Neural Information Processing Systems}, 2021.
\newblock URL \url{https://api.semanticscholar.org/CorpusID:231934209}.

\bibitem[Zanette \& Wainwright(2022)Zanette and Wainwright]{zanette2022bellman}
Zanette, A. and Wainwright, M.~J.
\newblock Bellman residual orthogonalization for offline reinforcement learning.
\newblock \emph{Advances in Neural Information Processing Systems}, 35:\penalty0 3137--3151, 2022.

\bibitem[Zanette et~al.(2021)Zanette, Wainwright, and Brunskill]{zanette2021provable}
Zanette, A., Wainwright, M.~J., and Brunskill, E.
\newblock Provable benefits of actor-critic methods for offline reinforcement learning.
\newblock \emph{Advances in neural information processing systems}, 34:\penalty0 13626--13640, 2021.

\bibitem[Zhang et~al.(2023)Zhang, Su, Yuan, Wu, Balasubramanian, Wu, Wang, and Wang]{Zhang2023UnifiedOL}
Zhang, Z., Su, Y.-H., Yuan, H., Wu, Y., Balasubramanian, R., Wu, Q., Wang, H., and Wang, M.
\newblock Unified off-policy learning to rank: a reinforcement learning perspective.
\newblock \emph{ArXiv}, abs/2306.07528, 2023.
\newblock URL \url{https://api.semanticscholar.org/CorpusID:259145065}.

\bibitem[Zhong et~al.(2021)Zhong, Chueng, and Tan]{zhong2021thompson}
Zhong, Z., Chueng, W.~C., and Tan, V.~Y.
\newblock Thompson sampling algorithms for cascading bandits.
\newblock \emph{Journal of Machine Learning Research}, 22\penalty0 (218):\penalty0 1--66, 2021.

\bibitem[Zhu et~al.(2023)Zhu, Sheng, Zheng, Barrett, Jordan, and Jiao]{zhu2023optimal}
Zhu, B., Sheng, Y., Zheng, L., Barrett, C., Jordan, M., and Jiao, J.
\newblock Towards optimal caching and model selection for large model inference.
\newblock 36:\penalty0 59062--59094, 2023.

\bibitem[Zimmert et~al.(2019)Zimmert, Luo, and Wei]{zimmert2019beating}
Zimmert, J., Luo, H., and Wei, C.-Y.
\newblock Beating stochastic and adversarial semi-bandits optimally and simultaneously.
\newblock In \emph{International Conference on Machine Learning}, pp.\  7683--7692. PMLR, 2019.

\bibitem[Zuo \& Joe-Wong(2021)Zuo and Joe-Wong]{zuo2021combinatorial}
Zuo, J. and Joe-Wong, C.
\newblock Combinatorial multi-armed bandits for resource allocation.
\newblock In \emph{2021 55th Annual Conference on Information Sciences and Systems (CISS)}, pp.\  1--4. IEEE, 2021.

\end{thebibliography}
\bibliographystyle{icml2025}

\newpage
\onecolumn
\appendix

\newpage
The appendix is organized as follows. 
\begin{itemize}
   \item In \cref{apdx_sec:related_work}, we discuss the extended related works on
    \begin{itemize}
        \item combinatorial multi-armed bandits,
        \item offline bandit and reinforcement learning,
        \item related offline learning applications. 
    \end{itemize}
    \item In \cref{apdx_sec:just} we give more justification for studying the combinatorial multi-armed bandits (CMAB) with only the offline dataset.
    \item In \cref{apdx_app:OIM_node}, we provide the following details for 
    \begin{itemize}
        \item the influence maximization under node-level feedback setting
        \item \texttt{CLCB-IM-N} algorithm
        \item the gap upper bound
    \end{itemize}
    \item In \cref{apdx_sec:upper_bound}, we prove the upper bound of the suboptimal gap 
    \begin{itemize}
        \item under the infinity-norm TPM data coverage condition (\cref{cond:inf_norm_cov})
        \item under 1-norm TPM data coverage condition (\cref{cond:1_norm_cov}).
    \end{itemize}
    \item In \cref{apdx_sec:lower_bound}, we prove 
    \begin{itemize}
        \item the lower bound of suboptimal gap for the $k$-path problem with semi-bandit feedback.
    \end{itemize}
    \item In \cref{apdx_sec:app_ranking}, we prove
    \begin{itemize}
        \item the gap upper bound for the offline learning problem in cascading bandits.
    \end{itemize}
    \item In \cref{apdx_sec:app_LLM_cache}, we prove 
    \begin{itemize}
        \item the standard gap upper bound of for offline learning in LLM cache
        \item the improved gap upper bound for offline learning in LLM cache
        \item the improved regret upper bound for online streaming LLM cache
    \end{itemize}
    \item In \cref{apdx_sec:app_IM_node}, we prove 
    \begin{itemize}
        \item the gap upper bound for the influence maximization under the node-level feedback.
    \end{itemize}
    \item In \cref{apdx_sec:aux}, we prove auxiliary lemmas that serve as important ingredients for our analysis.
\item \rev{In \cref{app:extended}, we provide the additional experimental results.}
\end{itemize}

\section{Extended Related Works}\label{apdx_sec:related_work}
\subsection{Combinatorial Multi-armed Bandits}
The combinatorial multi-armed bandit (CMAB) problem has been extensively studied over the past decade, covering domains such as stochastic CMAB \cite{gai2012combinatorial, kveton2015tight, combes2015combinatorial, chen2016combinatorial, wang2017improving, merlis2019batch, saha2019combinatorial, agrawal2019mnl, liu2022batch, liu2024combinatorial}, adversarial CMAB \cite{gyorgy2007line, uchiya2010algorithms, cesa2012combinatorial, bubeck2012towards, audibert2014regret, neu2015first, han2021adversarial}, and hybrid best-of-both-worlds settings \cite{zimmert2019beating, ito2021hybrid, tsuchiya2023further}. Contextual extensions with linear or nonlinear function approximation have also been explored \cite{qin2014contextual, takemura2021near, liu2023contextual, chen2018contextual, nika2020contextual, choi2024cascading, hwang2023combinatorial, liu2024combinatorial}.

Our work falls within the stochastic CMAB with semi-bandit feedback domain, first introduced by \citet{gai2012combinatorial}, with a specific focus on CMAB with probabilistically triggered arms (CMAB-T).
\citet{chen2016combinatorial} introduced the concept of arm triggering processes for applications like cascading bandits and influence maximization, proposing the CUCB algorithm with a regret bound of \(O(B_1\sqrt{mKT\log T}/p_{\min})\) regret bound under the 1-norm smoothness condition with coefficient \(B_1\). Subsequently, \citet{wang2017improving} refined this result, proposed a stronger 1-norm triggering probability modulated (TPM) \(B_1\) smoothness condition, and employed triggering group analysis to eliminate the \(1/p_{\min}\) factor from the previous regret bound. More recently, \citet{liu2022batch} leveraged the variance-adaptive principle to propose the BCUCB-T algorithm, which further reduces the regret's dependency on action-size from \(O(K)\) to \(O(\log K)\) under the new variance and triggering probability modulated (TPVM) condition. While inspired by these works, our study diverges by addressing the offline CMAB setting, where online exploration is unavailable, and the focus is on minimizing the suboptimality gap rather than regret.

Another notable line of work considers CMAB with full bandit feedback \cite{gyorgy2007line,cesa2012combinatorial,niazadeh2021online,fourati2023randomized,nie2023framework,fourati2024combinatorial,fourati2024federated,sun2025greedy}. In their setting,  the feedback only provides aggregate rewards for the entire super arm, often resulting in higher regret (e.g., $O(T^{2/3})$) and requiring fundamentally different oracle designs. 
In contrast, our setting (CMAB with semi-bandit feedback) assumes semi-bandit feedback, where the learner observes individual arm-level feedback for selected arms (i.e., components of the super arm). This enables more informative learning and allows us to construct accurate base-arm estimators for use in our oracles, leading to an $O(T^{1/2})$ regret bound. 
Moreover, prior full-bandit approaches often rely on additional structural assumptions such as submodularity to achieve these bounds. Similarly, our approach leverages smoothness assumptions on the reward function to ensure statistical efficiency in the offline regime.

\subsection{Offline Bandit and Reinforcement Learning}
Offline reinforcement learning (RL), also known as ``batch RL", focuses on learning from pre-collected datasets to make sequential decisions without online exploration. Initially studied in the early 2000s \cite{ernst2005tree,riedmiller2005neural,lange2012batch}, offline RL has gained renewed interest in recent years \cite{levine2020offline}.

From an empirical standpoint, offline RL has achieved impressive results across diverse domains, including robotics \cite{Singh2021ReinforcementLI}, healthcare \cite{Liu2020ReinforcementLF}, recommendation systems \cite{Chen2023OnTO}, autonomous driving \cite{Kiran2020DeepRL}, and large language model fine-tuning and alignment \cite{Casper2023OpenPA}. Algorithmically, offline RL approaches can be broadly categorized into policy constraint methods \cite{Fujimoto2018OffPolicyDR,Kumar2019StabilizingOQ}, pessimistic value/policy regularization \cite{Haarnoja2018SoftAO,Kumar2020ConservativeQF}, uncertainty estimation \cite{Agarwal2019AnOP}, importance sampling \cite{Jiang2015DoublyRO,Nachum2019AlgaeDICEPG}, imitation learning \cite{Fujimoto2021AMA,Chen2019BAILBI}, and model-based methods \cite{Kidambi2020MOReLM,Yu2021COMBOCO}.

Theoretically, early offline RL studies relied on strong uniform data coverage assumptions \cite{Szepesvari2005FiniteTB,Chen2019InformationTheoreticCI,Wang2019NeuralPG,Xie2021BellmanconsistentPF}. Recent works have relaxed these assumptions to partial coverage for tabular Markov Decision Processes (MDPs) \cite{Rashidinejad2021BridgingOR,Yin2021NearOptimalOR,Shi2022PessimisticQF,Li2022SettlingTS}, linear MDPs \cite{Jin2020IsPP,Chang2021MitigatingCS,bai2022pessimisticbootstrappinguncertaintydrivenoffline}, and general function approximation settings \cite{Rashidinejad2022OptimalCO,zanette2021provable,xie2021bellman,zanette2022bellman}.

Offline bandit learning has also been explored in multi-armed bandits (MAB) \cite{Rashidinejad2021BridgingOR}, contextual MABs \cite{Rashidinejad2021BridgingOR,Jin2020IsPP,li2022pessimism}, and neural contextual bandits \cite{nguyen2021sample,nguyen2021offline}.

While our work leverages the pessimism principle and focuses on partial coverage settings, none of the aforementioned offline bandit or RL studies address the combinatorial action space, which is the central focus of our work. Conversely, recent work in the CMAB framework demonstrates that episodic tabular RL can be viewed as a special case of CMAB \cite{liu2024combinatorial}. Building on this connection, our proposed framework can potentially extend to certain offline RL problems, offering a unified approach to tackle both combinatorial action spaces and offline learning.

\subsection{Related Offline Learning Applications.}
Cascading bandits, a classical online learning-to-rank framework, have been extensively studied in the literature \cite{kveton2015cascading, kveton2015combinatorial, li2016contextual, wang2018thompson, vial2022minimax, zhong2021thompson, liu2022batch,wang2023combinatorial,wang2024cascading}. Offline cascading bandits, on the other hand, focus primarily on reducing bias in learning settings \cite{Joachims2002OptimizingSE, Wang2018PositionBE, Wang2016LearningTR, Keane2006ModelingRS, Zhang2023UnifiedOL}. Unlike these prior works, our study tackles the unbiased setting where data coverage is insufficient. Moreover, we are the first to provide a theoretically guaranteed solution using a CMAB-based approach.

LLM caching is a memory management technique aimed at mitigating memory footprints and access overhead during training and inference. Previous studies have investigated LLM caching at various levels, including attention-level (KV-cache) \cite{Pope2022EfficientlyST, Kwon2023EfficientMM, Sheng2023HighthroughputGI, Bang2023GPTCacheAO}, query-level \cite{Gim2023PromptCM, zhu2023optimal}, and model/API-level \cite{Qu2024MobileEI, Dai2024CostEffectiveOM, Feng2024GraphRouterAG}. Among these, the closest related work is the LLM cache bandit framework proposed by \citet{zhu2023optimal}. However, their approach is ad hoc, whereas our CMAB-based framework systematically tackles the same problem and achieves improved results in both offline and online settings.

Influence Maximization (IM) was initially formulated as an algorithmic problem by \citet{Richardson2002MiningKS} and has since been studied using greedy approximation algorithms \cite{Kempe2003MaximizingTS, Chen2009EfficientIM}. The online IM problem has also received significant attention \cite{wen2017online, vaswani2017diffusion, Wu2019FactorizationBF, Vaswani2017ModelIndependentOL, vaswani2015influence, wang2017improving}. In the offline IM domain, our work aligns closely with the optimization-from-samples (OPS) framework \cite{Balkanski2015TheLO, Balkanski2016ThePO, chen2020optimization, chen2021network}, originally proposed by \citet{Balkanski2015TheLO}. Specifically, our work falls under the subdomain of optimization-from-structured-samples (OPSS) \cite{chen2020optimization, chen2021network}, where samples include detailed diffusion step information $(S_0, ..., S_{V-1})$ instead of only the final influence spread $\sigma(S_0; G)$ in the standard OPS. Compared to \citet{chen2021network}, which selects the best seed set using empirical means, our approach employs a variance-adaptive pessimistic LCB, improving the suboptimal gap under relaxed assumptions.

\section{More Justification of Studying Offline CMAB}\label{apdx_sec:just}

While online bandits are a natural choice when online data is readily available and inexpensive, many real-world applications restrict access to only offline data as follows, which motivates the study of offline CMAB.

For instance, consider the cascading bandit model in recommendation systems \cite{kveton2015cascading}. Online CMAB learning requires a tight feedback loop where the platform (i.e., the learner) updates its recommendation policy after every user interaction. However, in many practical scenarios, such fine-grained online feedback is unavailable as the platform cannot afford to update at such a high frequency. Instead, data is collected in batches (e.g., over a week), logged, and then used to update the policy in a single offline training phase. This workflow aligns precisely with our offline CMAB setting.

Another motivating scenario involves outsourced system design. For example, if OpenAI or Anthropic outsources the design of an LLM caching system, the consultant (i.e., the learner) typically receives only anonymized user logs. They must learn user behavior and design the system purely based on this private offline dataset and cannot reach out for direct interaction with the users, which fits naturally into the offline CMAB framework.

Moreover, our work on CMAB also mirrors the development trajectory in reinforcement learning (RL). RL began with a focus on online learning \cite{mnih2013playing}; then, around 2020, concerns over the cost and availability of online interactions led to a growing emphasis on offline RL—learning solely from logged data \cite{levine2020offline}. More recently, hybrid approaches \cite{lee2022offline} combining offline pretraining with online fine-tuning have emerged. Similarly, after establishing foundational results in online CMAB, we now focus on the offline setting as a crucial step toward enabling future hybrid CMAB approaches.

We will incorporate this discussion and examples into the final version of the paper.

\section{Offline Learning for Influence Maximization with Extension to Node-level Feedback}\label{apdx_app:OIM_node}

Influence maximization (IM) is the task of selecting a small number of seed nodes in a social network  to maximize the influence spread from these nodes, which has been applied in various important applications such as viral marketing, epidemic control, and political campaigning \cite{Richardson2002MiningKS,Kempe2003MaximizingTS,Chen2009EfficientIM}. 
IM has been intensively studied over the past two decades under various diffusion models, such as the independent cascade (IC) model \cite{Kempe2003MaximizingTS}, the linear threshold (LT) model \cite{chen2010scalable}, and the voter model \cite{narasimhan2015learnability}], as well as different feedback such as edge-level and node-level feedback models \cite{chen2020optimization}.
For the edge-level feedback model, IM smoothly fits into our framework by viewing each edge as the base arm, which can obtain the theoretical result similar to our previous two applications.
In this section, we consider a more realistic yet challenging setting where we can only obverse the node-level feedback, showing that our framework still applies as long as we can construct a high probability lower bound (LCB) for each base arm (edge).

\textbf{Influence maximization under the independent cascade diffusion model.} We consider a weighted digraph $G(\cV, \cE, p)$ to model the social network, where $\cV$ is the set of nodes and $\cE$ is the set of edges, with cardinality $V=|\cV|$ and $E=|\cE|$, respectively. For each edge $(u,v) \in \cE$, it is associated with a weight or probability $p_{uv}\in [0,1]$. We use $N(v)=N^{\text{in}}(v)$ to denote the in-neighbors of node $v\in \cV$. 

The diffusion model describes how the information propagates, which is detailed as follows.
Denote $S_0 \subseteq \cV$ as the seed nodes and $S_{h} \subseteq \cV$ as the set of active nodes at time steps $h \ge 1$. By default, we let $S_{-1}=\emptyset$.
In the IC model, at time step $h \ge 1$, for each node $ v \notin S_{h-1} $, each newly activated node in the last step, $u \in N(v) \cup (S_{h-1} \backslash S_{h-2})$, will try to active $v$ independently with probability $p_{uv}$. This indicates that $v$ will become activated with probability $1-\prod_{u \in N(v) \cup (S_{h-1} \backslash S_{h-2})}(1-p_{uv})$. Once activated, $v$ will be added into $S_h$. 
The propagation ends at step $h$ when $S_h=S_{h-1}$. It is obvious that the propagation process proceeds in at most $V-1$ time steps, so we use $(S_0, S_1, ..., S_{V-1})$ to denote the random sequence of the active nodes, which we refer to as \textit{influence cascade}.
Let $\Phi(S_0)=S_{V-1}$ be the final active node set given the seed nodes $S_0$. The influence maximization problem aims to select at most $k$ seed nodes so as to maximize the expected number of active nodes $\sigma(S_0;G) \defeq \E \left[ \abs{\Phi(S_0)} \right]$, which we often refer to as the \textit{influence spread} of $S_0$ given the graph $G$. 
Formally, the IM problem aims to solve $S^*=\argmax_{S \subseteq \cV} \sigma(S;G)$.

\textbf{Offline dataset and learning from the node-level feedback.}
We consider the offline learning setting for IM where the underlying graph $G$ is unknown.
To find the optimal seed set $S^*$, we are given a pre-collected dataset consisting of $n$ influence cascades $\cD=(S_{t,0}, S_{t,1}, ..., S_{t,V-1})_{t=1}^n$ and a probability $\delta$. 
Our goal is to output a seed node set $\hat{S}(\cD, \delta)$, whose influence spread is as large as possible with high probability $1-\delta$.

Similar to \cite{chen2021network}, we assume these $n$ influence cascades are generated independently from a seed set distribution $S_{t,0} \sim \D_{\cS}$, and given $S_{t,0}$, the cascades are generated according to the IC diffusion process.
For each node $v \in \cV$, we use $q_v=\Pr \left[ v \in S_{t,0}\right]$ to denote the probability that the node $v$ is selected by the experimenter in the seed set $S_{t,0}$. 
We use $p_G(\bar{v}) \defeq \Pr \left[ v \notin S_{t,1} \right]$ to denote the probability that the node $v$ is not activated in one time step when the graph is $G$. 
For any two nodes $u,v \in \cV$, we use $p_{G}(\bar{v} | u) \defeq \Pr \left[ v \notin S_{t,1} | u \in S_{t,0} \right]$ and $p_{G}(\bar{v} | \bar{u}) \defeq \Pr \left[ v \notin S_{t,1} | u \notin S_{t,0}\right]$ to denote the probability that the node $v$ is not activated in one time step conditioned on whether the node $u$ is in the seed set $S_{t,0}$ or not, respectively. We also assume $\D_{\cS}$ is a product distribution, i.e., each node $u \in \cV$ is selected as a seed node in $S_{t,0}$ \textit{independently}. 
Similar to \cite{chen2021network}, we also need an additional \cref{as:seed_activation_prob}. 

\begin{assumption}[Bounded seed node sampling probability and bounded activation probability]\label{as:seed_activation_prob}
Let $\tilde{\cE}(S^*) \subseteq \cE$ be the set of edges that can be triggered by the optimal seed set $S^*$. There exist parameters $\eta \in (0,1]$ and $\alpha \in (0,1/2]$ such that for any $(u,v) \in \tilde{\cE}(S^*)$, we have $q_u \in [\gamma, 1-\gamma]$ and $p_G(\bar{v}) \ge \eta$.
\end{assumption}

\textbf{Algorithm that constructs variance-adaptive LCB using the node-level feedback.}
Note that in this setting, we cannot obtain edge-level feedback about which node influences which node in the dataset. It is an extension which cannot be directly handled by \cref{alg:CLCB} since one cannot directly estimate the edge weight from the node-level feedback. 
However, as long as we can obtain a high probability LCB for each arm $(u, v) \in \cE$ and replace the line~\ref{line:LCB} of \cref{alg:CLCB} with this new LCB, we can still follow a similar analysis to bound its suboptimality gap.
Our algorithm is presented in \cref{alg:CLCB-IM}.

Inspired by \cite{chen2021network}, for each node-level feedback data $t \in [n]$, we only use the seed set $S_{t,0}$ and the active nodes in the first diffusion step $S_{t,1}$ to construct the LCB.

Since each node $u$ is independently selected in $S_{t,0}$ with probability $q_u$, and we consider only one step activation for any node $v$, the event \{$v$ is activated by $u$\} and the event \{$v$ is activated by other nodes $G-\{u\}$\} are independent. Thus, we have $p_G(\bar{v})=(1-q_u p_{uv}) \cdot  p_{G \backslash \{u\}}(\bar{v})=(1-q_u p_{uv})\cdot p_G(\bar{v} | \bar{u})$. Rearranging terms, we have:
\begin{align}\label{eq:puv}
    p_{uv} &=\frac{1}{q_u} \left( 1 -  \frac{p_G(\bar{v})}{ p_G(\bar{v} | \bar{u})}\right).
\end{align}
Let us omit the graph $G$ in the subscript of $p_G(\bar{v})$ and $p_G(\bar{v} | \bar{u})$ when the context is clear.

We can observe that $p_{uv}$ is monotonically decreasing when $q_u$ or $p(\bar{v})$ increases and when $p(\bar{v} | \bar{u})$ decreases. 
Therefore, we separately construct intermediate UCB for $q_u, p(\bar{v})$ and LCB for $p(\bar{v} | \bar{u})$ and plug into \cref{eq:puv} to construct an overall LCB $\ubar{p}_{uv}$ for each arm $p_{uv}$ as in line~\ref{line:IM_LCB}. Based on the LCB for each edge, we construct the LCB graph $\ubar{G}$ and call IM oracle over $\ubar{G}$. Also note that for each intermediate UCB/LCB, we use variance-adaptive confidence intervals to further reduce the estimation bias.

\begin{algorithm}[!t]
    \caption{\texttt{CLCB-IM-N}: Combinatorial Lower Confidence Bound Algorithm for Influence Maximization with Node-level Feedback}
    \label{alg:CLCB-IM}
    \begin{algorithmic}[1]
        \STATE \textbf{Input:} Dataset $\cD=\left\{ \left( S_{t,0}, S_{t,1},...,S_{t,V-1} \right) \right\}_{t=1}^{n}$, nodes $\cV$, edges $\cE$, cardinality $k$, influence maximization solver \texttt{IM}, probability $\delta$.
		\FOR{edge $ (u,v) \in \cE$}
            \STATE Calculate counters $n_{0,u}=\abs{\{i\in [n]: u\in S_{i,0}\}}, 
            n_{1, \bar{v}}=|\{i\in [n]: v \notin S_{i,1}\}|, n_{1, \bar{u}, \bar{v}}=|\{i\in[n]: u\notin S_{i,0} \text{ and } v \notin S_{i,1}\}|$;\label{line:IM_counter}
            \STATE Calculate empirical means $\widehat{q}_u = n_{0,u} / n,
    \widehat{p}(\bar{v}) = n_{1,\bar{v}} / n,
    \widehat{p}(\bar{v} | \bar{u}) = n_{1, \bar{u}, \bar{v}} / n_{1, \bar{v}}$;\label{line:IM_empirical_mean}
            \STATE Calculate variance-adaptive intervals $\rho_u = \sqrt{\frac{6(1-\widehat{q}_u)\widehat{q}_u\log (\frac{12nE}{\delta})}{n}}+\frac{9\log (\frac{12nE}{\delta})}{n},
    \rho(\bar{v}) = \sqrt{\frac{6(1-\widehat{p}(\bar{v})) \widehat{p}(\bar{v}) \log (\frac{12nE}{\delta})}{n}}+\frac{9\log (\frac{12nE}{\delta})}{n}, 
    \rho (\bar{v}|\bar{u}) =\sqrt{\frac{6 \left(1-\widehat{p} (\bar{v} | \bar{u}) \right) \widehat{p}(\bar{v} | \bar{u})\log (\frac{12nE}{\delta})}{n_{0, \bar{u}}}}+\frac{9\log (\frac{12nE}{\delta})}{n_{0, \bar{u}}}$;\label{line:IM_intervals}
    \STATE Compute intermediate UCB/LCB $\bar{q}_u = \min\{\widehat{q}_u + \rho_u, 1\},
    \bar{p}(\bar{v}) = \min\{ \widehat{p}(\bar{v}) +  \rho(\bar{v}), 1\},
    \ubar{p}(\bar{v} | \bar{u}) = \max \{ \widehat{p}(\bar{v} | \bar{u})- \rho (\bar{v} | \bar{u}) , 0\}$; 
    \STATE Compute edge-level LCB $\ubar{p}_{uv} =\min \left\{1, \max \left \{0, \frac{1}{\bar{q}_u} \left( 1 -  \frac{\bar{p}(\bar{v})}{ \ubar{p}(\bar{v} | \bar{u})}\right)\right\}\right\}$ for $(u,v)\in \cE$. \label{line:IM_LCB}
    \label{line:IM_inter_LCB}
        \ENDFOR
     \STATE Construct LCB graph $\ubar{G}=(\cV, \cE, \ubar{\bp})$ with edge-level LCB $\ubar{\bp}= (\ubar{p}_{uv})_{(u,v) \in \cE}$. 
        \STATE Call IM sovler $\hat{S}=\texttt{IM}(\ubar{G}, k)$.\label{line:IM_oracle}
        \STATE \textbf{Return:} $\hat{S}$.
	\end{algorithmic}
\end{algorithm}

\begin{theorem}\label{apdx_thm:IM}
    Under \cref{as:seed_activation_prob}, suppose the number of data $n \ge  \frac{392 \log ( \frac{12nE}{\delta} )}{\eta \cdot \gamma }$. Let $\hat{S}$ be the seed set returned by algorithm \cref{alg:CLCB-IM}, then it holds with probability at least $1-\delta$ that
    \begin{align}
       &\alpha \sigma(S^*;G) - \sigma \left(\hat{S};G \right) \le 48\sqrt{6} \sqrt {\frac { V^2 d^2_{\max} \sigma^2(S^*;G) \cdot \log (\frac { 12n E } { \delta }) } {  \eta \cdot \gamma^3  \cdot n  } },  
    \end{align}
    where $d_{\max}$ is the maximum out-degree of the graph $G$.
\end{theorem}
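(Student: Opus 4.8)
The plan is to reduce the node-level IM problem to the general Off-CMAB analysis of \cref{thm:upper_bound} by treating each edge $(u,v)\in\cE$ as a base arm with mean $p_{uv}$ and the influence spread $\sigma(\cdot;G)$ as the reward. Two ingredients let us invoke the framework. First, influence spread under the IC model is monotone and satisfies the 1-norm TPM bounded smoothness condition (\cref{cond:TPM}) with coefficient $B_1=V$; these are standard and I would cite them. Second, the \texttt{IM} solver in \cref{alg:CLCB-IM} is an $\alpha$-approximation oracle run on the pessimistic graph $\ubar{G}$. Provided the event $\{\ubar{p}_{uv}\le p_{uv}\ \forall (u,v)\in\cE\}$ holds, monotonicity gives $\sigma(\hat{S};\ubar{G})\le\sigma(\hat{S};G)$ and the oracle guarantee gives $\sigma(\hat{S};\ubar{G})\ge\alpha\,\sigma(S^*;\ubar{G})$, so exactly as in the proof of \cref{thm:upper_bound} the gap collapses to the one-sided uncertainty gap over the optimal seed set,
\begin{equation*}
\alpha\,\sigma(S^*;G)-\sigma(\hat{S};G)\le\alpha\bigl(\sigma(S^*;G)-\sigma(S^*;\ubar{G})\bigr).
\end{equation*}
Applying \cref{cond:TPM} with $B_1=V$ then bounds this by $\alpha V\sum_{(u,v)\in\cE}p_{(u,v)}^{\D_{\text{arm}},S^*}\,(p_{uv}-\ubar{p}_{uv})$, reducing everything to (i) verifying LCB validity and (ii) controlling the per-edge error $p_{uv}-\ubar{p}_{uv}$ on the edges triggered by $S^*$.

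For the validity I would work through the construction in lines~\ref{line:IM_intervals}--\ref{line:IM_LCB}. The three empirical quantities $\widehat{q}_u,\widehat{p}(\bar v),\widehat{p}(\bar v\mid\bar u)$ are averages of bounded Bernoulli indicators, so empirical-Bernstein concentration makes $\bar{q}_u,\bar{p}(\bar v)$ valid upper and $\ubar{p}(\bar v\mid\bar u)$ a valid lower confidence bound simultaneously for every edge after a union bound; the three-way union over $E$ edges together with the sample counts is where the $\log(12nE/\delta)$ factor is born. Feeding these into the closed form \cref{eq:puv}, $p_{uv}=\tfrac1{q_u}\bigl(1-\tfrac{p(\bar v)}{p(\bar v\mid\bar u)}\bigr)$, and using the monotonicity of this map (decreasing in $q_u,p(\bar v)$, increasing in $p(\bar v\mid\bar u)$) shows $\ubar{p}_{uv}\le p_{uv}$, which is exactly the clipping in line~\ref{line:IM_LCB}.

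The main obstacle is the error propagation: bounding $p_{uv}-\ubar{p}_{uv}$ through the nonlinear map \cref{eq:puv}. Here I would expand the difference (a first-order/mean-value argument) and control the denominators using \cref{as:seed_activation_prob}: $q_u\ge\gamma$ yields the $1/\gamma$ sensitivity to $\widehat{q}_u$, while $p_G(\bar v\mid\bar u)\ge p_G(\bar v)\ge\eta$ (which follows from $p_G(\bar v)=(1-q_up_{uv})\,p_G(\bar v\mid\bar u)$) yields the $1/\eta$ sensitivity to the ratio $p(\bar v)/p(\bar v\mid\bar u)$; the conditional count $n_{1,\bar u,\bar v}$ concentrates around $(1-q_u)n\gtrsim\gamma n$, contributing a further $1/\sqrt\gamma$. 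The variance-adaptive (rather than Hoeffding) intervals are essential here: they replace a crude range-$1$ bound by a $\sqrt{\mathrm{Var}/n}$ term, which after dividing by $\eta$ yields the $\sqrt\eta$ (not $\eta$) savings and produces the advertised per-edge error $O\bigl(\sqrt{\log(12nE/\delta)/(\eta\gamma^3 n)}\bigr)$. The hypothesis $n\ge 392\log(12nE/\delta)/(\eta\gamma)$ is precisely what makes the additive $1/n$ Bernstein remainders lower-order, so they fold into the leading $1/\sqrt n$ term.

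Finally I would assemble the pieces. The triggering probability $p_{(u,v)}^{\D_{\text{arm}},S^*}$ is at most the probability that the tail $u$ is activated under $S^*$; since each node has at most $d_{\max}$ out-edges and the activation probabilities sum to $\sigma(S^*;G)$, we get $\sum_{(u,v)\in\cE}p_{(u,v)}^{\D_{\text{arm}},S^*}\le d_{\max}\,\sigma(S^*;G)$. Multiplying the uniform per-edge error by this sum and by $\alpha V$ (using $\alpha\le1$) yields $O\bigl(V d_{\max}\sigma(S^*;G)\sqrt{\log(12nE/\delta)/(\eta\gamma^3 n)}\bigr)$, which is the claimed bound $48\sqrt6\,\sqrt{V^2 d_{\max}^2\sigma^2(S^*;G)\log(12nE/\delta)/(\eta\gamma^3 n)}$ once the Bernstein constants are tracked. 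The delicate point throughout is keeping the confidence bounds one-sided and simultaneously valid over all edges while extracting the tight $(\gamma,\eta)$ dependence from the variance-adaptive intervals.
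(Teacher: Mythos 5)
Your proposal matches the paper's proof in all essential respects: the same reduction of edges to base arms with $B_1=V$, the same pessimism decomposition collapsing the gap to $\alpha\bigl(\sigma(S^*;G)-\sigma(S^*;\ubar{G})\bigr)$, the same per-edge sensitivity analysis of the map $p_{uv}=\tfrac{1}{q_u}\bigl(1-\tfrac{p(\bar v)}{p(\bar v\mid \bar u)}\bigr)$ under \cref{as:seed_activation_prob} with variance-adaptive intervals, and the same final assembly via $\sum_{(u,v)\in\cE}p_{uv}^{\D_{\text{arm}},S^*}\le d_{\max}\sigma(S^*;G)$. The paper carries out your ``mean-value'' step as an explicit telescoping through two intermediate LCBs (changing one of $q_u$, $p(\bar v)$, $p(\bar v\mid\bar u)$ at a time), which is just the concrete version of your sketch, so the two arguments are essentially identical.
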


\begin{remark}[Discussion]
    To find out an action $\widehat{S}$ such that  $\sigma(\widehat{S};G) \ge (\alpha-\epsilon)\sigma(S^*;G)$, our algorithm requires that $n \ge \tilde{O} \left(\frac { V^2 d_{\max}^2 } { \epsilon^2 \eta \gamma^3} \right)$, which improves the existing result by at least a factor of $\tilde{O} \left( \frac{V^4}{k^2 d^2_{\max} \eta} \right)$, owing to our variance-adaptive LCB construction and the tight CMAB-T analysis. We also relax the assumption  regarding \cref{as:seed_activation_prob}, where we require bounded $q_u, p(\bar{v})$ only for $(u,v) \in \cE(s^*)$, since we use LCB $\ubar{p}_{uv}$.  \citet{chen2021network}, instead, needs bounded $q_u, p(\bar{v})$ for all $(u, v)\in \cE$ as they directly use the empirical mean of $p_{uv}$.
\end{remark}

\section{Proof for the Upper Bound Result}\label{apdx_sec:upper_bound}
\begin{proof}[Proof of \cref{thm:upper_bound}]

We first show the regret bound under the infinity-norm TPM data coverage condition (\cref{cond:inf_norm_cov}):

Let $N_i(\cD)$ be the counter for arm $i$ as defined in line~\ref{line:counter} of \cref{alg:CLCB}, given the dataset $\cD$ and the failure probability $\delta$.

Let $\hat{\bmu}(\cD)=\left( \hat{\mu}_1(\cD, \delta), ..., \hat{\mu}_m(\cD, \delta) \right)$ be the empirical mean defined in line~\ref{line:empirical_mean} of \cref{alg:CLCB}.

Let $\ubar{\bmu}(\cD, \delta)=\left( \ubar{\mu}_1(\cD, \delta), ..., \ubar{\mu}_m(\cD, \delta) \right)$ be the LCB vector defined in line~\ref{line:LCB} of \cref{alg:CLCB}.

Let $\hat{S}(\cD, \delta)$ be the action returned by \cref{alg:CLCB} in line~\ref{line:oracle}. 

Let $p_i^{\D_{\text{arm}}, \D_{\cS}}$ be the data collecting probability that for arm $i$, i.e., the probability of observing arm $i$ in each offline data.

Let $\tilde{S}^*=\{i\in [m]: p_i^{\D_\text{out}, S^*}>0\}$ be the arms that can be triggered by the optimal action $S^*$ and $p^*=\min_{i \in \tilde{S}^*} p_i^{\D_{\text{arm}}, \D_{\cS}}$ be the minimum data collection probability.

We first define the events $\cE_{\text{arm}}$ and $\cE_{\text{counter}}$ as follows.

\begin{align}\label{apdx_eq:general_event}
    \cE_{\text{arm}} &\defeq \left\{ \abs{\hat{\mu}_i(\cD) - \mu_i}\le \sqrt{\logcoef} \text{ for any } i \in [m] \right\}\\
    \cE_{\text{counter}} &\defeq \left\{N_i(\cD) \ge \frac{n \cdot p_i^{\D_{\text{arm}}, \D_{\cS}}}{2} \text{ for any } i \in \tilde{S}^* \given n \ge \frac{8 \log \frac{m} {\delta} }{p^*}\right\}
\end{align} 

When $n \ge \frac{8 \log \frac{m} {\delta} }{p^*}$ and under the events $ \cE_{\text{arm}}$ and $\cE_{\text{counter}}$, we have the following gap decomposition:
\begin{align}\label{apdx_eq:gap_decom}
    &\alpha r(S^*;\bmu) - r \left(\hat{S}(\cD, \delta);\bmu \right) \\
    & \overset{(a)}{=} \underbrace{\alpha r(S^*;\bmu) - \alpha  r \left(S^*; \ubar{\bmu}(\cD, \delta) \right)}_{\text{uncertainty gap}} 
    \\ \notag 
    &+ \underbrace{  \alpha  r \left(S^*; \ubar{\bmu}(\cD, \delta) \right) - r \left(\hat{S}(\cD, \delta);\ubar{\bmu}(\cD, \delta) \right)}_{\text{oracle gap}} + \underbrace{r \left(\hat{S}(\cD, \delta);\ubar{\bmu}(\cD, \delta) \right) - r \left(\hat{S}(\cD, \delta);\bmu \right)}_{\text{pessimism gap}}\\
    &\overset{(b)}{\le} \alpha \left( r(S^*;\bmu) -  r \left(S^*; \ubar{\bmu}(\cD, \delta) \right) \right)\\
    &\overset{(c)}{\le} \alpha B_1 \sum_{i \in [m]} p_i^{\D_{\text{arm}}, S^*} \left( \mu_i -  \ubar{\mu}_i(\cD, \delta) \right)\label{eq:regret_before_1norm}\\
    &\overset{(d)}{\le} 2 \alpha B_1 \sum_{i \in [m]} p_i^{\D_{\text{arm}}, S^*}\sqrt{\frac{ \log(\frac{2mn}{\delta})} {2N_i(\cD)} } \\
    &\overset{(e)}\le 2 \alpha B_1 \sum_{i \in [m]} p_i^{\D_{\text{arm}}, S^*}\sqrt{\frac{ \log(\frac{2mn}{\delta})} {n \cdot p_i^{\D_{\text{arm}}, \D_{\cS} }} }\\
     &\le 2 \alpha B_1 \sum_{i \in [m]} \sqrt{p_i^{\D_{\text{arm}}, S^*}} \sqrt{\frac{ \log(\frac{2mn}{\delta})\cdot p_i^{\D_{\text{arm}}, S^*}} {n \cdot p_i^{\D_{\text{arm}}, \D_{\cS} }} } \label{eq:cauchy_here}\\
     &\overset{(f)}{\le} 2  \alpha B_1 \bar{K}^*_2 \sqrt{\frac{2 \log(\frac{2mn}{\delta})\cdot C^*_{\infty}} {n} },
\end{align}
where inequality (a) is due to adding and subtracting $ \alpha  r \left(S^*; \ubar{\bmu}(\cD, \delta) \right)$ and $r \left(\hat{S}(\cD, \delta);\ubar{\bmu}(\cD, \delta) \right)$, inequality (b) is due to oracle gap $\le 0$ by \cref{eq:def_oracle} as well as pessimism gap $\le 0$ by monotonicity (\cref{cond:mono}) and \cref{apdx_lem:arm_concen}, inequality (c) is due to 1-norm TPM smoothness condition (\cref{cond:TPM}), inequality (d) is due to \cref{apdx_lem:arm_concen}, inequality (e) is due to event $\cE_{counter}$, inequality (f) is due to infinity-norm TPM data coverage condition (\cref{cond:inf_norm_cov}). 

Next, we show the regret bound under the 1-nrom TPM data coverage condition (\cref{cond:1_norm_cov}).

When $n \ge \frac{8 \log \frac{m} {\delta} }{p^*}$ and under the events $ \cE_{\text{arm}}$ and $\cE_{\text{counter}}$, we follow the proof from \cref{apdx_eq:gap_decom} to \cref{eq:cauchy_here} and proceed as:
\begin{align}
    &\alpha r(S^*;\bmu) - r \left(\hat{S}(\cD, \delta);\bmu \right)\\
     & {=} \underbrace{\alpha r(S^*;\bmu) - \alpha  r \left(S^*; \ubar{\bmu}(\cD, \delta) \right)}_{\text{uncertainty gap}} 
    \\ \notag 
    &+ \underbrace{  \alpha  r \left(S^*; \ubar{\bmu}(\cD, \delta) \right) - r \left(\hat{S}(\cD, \delta);\ubar{\bmu}(\cD, \delta) \right)}_{\text{oracle gap}} + \underbrace{r \left(\hat{S}(\cD, \delta);\ubar{\bmu}(\cD, \delta) \right) - r \left(\hat{S}(\cD, \delta);\bmu \right)}_{\text{pessimism gap}}\\
    &{\le} \alpha \left( r(S^*;\bmu) -  r \left(S^*; \ubar{\bmu}(\cD, \delta) \right) \right)\\
    &{\le} \alpha B_1 \sum_{i \in [m]} p_i^{\D_{\text{arm}}, S^*} \left( \mu_i -  \ubar{\mu}_i(\cD, \delta) \right) \label{apdx_eq:general_1_norm_start}\\
    &\le 2 \alpha B_1 \sum_{i \in [m]} \sqrt{p_i^{\D_{\text{arm}}, S^*}} \sqrt{\frac{2 \log(\frac{2mn}{\delta})\cdot p_i^{\D_{\text{arm}}, S^*}} {n \cdot p_i^{\D_{\text{arm}}, \D_{\cS} }} }\\
    &\overset{(a)}{\le} 2 \alpha B_1 \sqrt{\sum_{i \in [m]} p_i^{\D_{\text{arm}}, S^*}} \sqrt{\sum_{i \in [m]}\frac{2 \log(\frac{2mn}{\delta})\cdot p_i^{\D_{\text{arm}}, S^*}} {n \cdot p_i^{\D_{\text{arm}}, \D_{\cS} }} }\\
    &\overset{(b)}{\le} 2\alpha B_1\sqrt{\frac{2 \bar{K}^* C_1^* \log(\frac{2mn}{\delta})} { n } },\label{apdx_eq:general_1_norm_end}
\end{align}
where inequality (a) is due to Cauchy Schwarz inequality, and inequality (b) is due to 1-norm TPM data coverage condition \cref{cond:1_norm_cov}.

The final step is to show event $\cE_{ \text{arm} }$ and $\cE_{ \text{counter} }$ hold with high probability. By \cref{apdx_lem:arm_concen} and \cref{apdx_lem:arm_counter_concen}, event $\cE_{ \text{arm} }$ and $\cE_{ \text{counter} }$ both hold with probability at least with $1-\delta$. By setting $\delta'=\delta/2$ concludes the proof.

\end{proof}

\section{Proof for the Lower Bound Result}\label{apdx_sec:lower_bound}
\begin{proof}[Proof of \cref{thm:lower_bound}] 
    Let $\Delta \in [0,\frac{1}{4}]$ be a gap to be tuned later and let $C_{\infty}^* \ge 2$.
    We consider a $k$-path problem with two problem instances $\cP_1$ and $\cP_2$, where $m/k$ path's mean vectors are $\bmu_1 = (\frac{1}{2}, \frac{1}{2}-\Delta, 0, ..., 0) \in \R^{m/k}$ and $\bmu_2=(\frac{1}{2}, \frac{1}{2}+\Delta, 0, ..., 0) \in \R^{m/k}$, respectively. For the data collecting distribution, $\D_{\cS}$ follows $\bp=(\frac{1} { C_{\infty}^* }, 1- \frac{1} { C_{\infty}^* }, 0, ..., 0)$ for both  $\cP_1$ and $\cP_2$.
    We have that the optimal action $S_1^*= (1, 2, ..., k)$ for $\cP_{1}$ and $S_2^*= (k+1, k+2, ..., 2k)$ for $\cP_2$.

    For the triggering probability, $p_i^{\cP_1,S^*}=1$ for $i=1, ..., k$ and $0$ otherwise and $p_i^{\cP_2, S^*}=1$ for $i= k+1, ..., 2k$ and $0$ otherwise.
 
    We then show that both problem instances $\cP_1, \cP_2$ satisfy \cref{cond:inf_norm_cov}. For $\cP_1$ and $\cP_2$, we have 
    \begin{align}
        \max_{i \in [m]} \frac{p_i^{\cP_1,S^*}}{p_i^{\cP_1,\cD_{\cS}}} &= \frac{1}{\frac{1}{C_{\infty}^*} }=C_{\infty}^*,\\
        \max_{i \in [m]} \frac{p_i^{\cP_2,S^*}}{p_i^{\cP_2,\cD_{\cS}}} &= \frac{1}{1-\frac{1}{C_{\infty}^*} }\overset{(a)}{\le} C_{\infty}^*,
    \end{align}
where inequality $(a)$ is due to $C_{\infty}^* \ge 2$. 

    Let us define suboptimality gap of any action $\hat{S}$ as:
    \begin{align}
        g(\hat{S}; \bmu) \defeq r(S^*(\bmu);\bmu) - r(\hat{S};\bmu)
    \end{align}
    where $S^* (\bmu)$ is the optimal super arm under $\bmu$.

    For any action $\hat{S} \in \cS$, we have 
    \begin{align}
        g(\hat{S}; \bmu_1) +g(\hat{S}; \bmu_2) \ge k \Delta
    \end{align}
    Recall that $A(\cD)$ is the action returned by algorithm $A$ and we use the Le Cam's method \cite{le2012asymptotic}:
    \begin{align}
        &\inf_{A} \,\,\,\, \sup_{(\D_{\text{arm}},\D_{\cS}) \in \text{k-path}(m, k, C_{\infty}^*)} \E_{\cD \sim \D(\D_{\text{arm}},\D_{\cS}) }[r(S^*;\mu) - r(A(\cD);\mu)]\\
        & \ge  \inf_{A } \sup_{\bmu \in \bmu_1, \bmu_2} \E_{\cD}[g(A(\cD);\bmu)]\\
        & \overset{(a)}{\ge}  \inf_{A} \frac{1}{2}\left( \E_{\bp \otimes \bmu_1} [A(\cD); \bmu_1)] +  \E_{\bp \otimes \bmu_2} [g(A(\cD); \bmu_2)] \right)\\
        &\overset{(b)}{\ge} \frac{k\Delta}{4}\exp \left( - \mathrm{KL} \left( \mathbb{P}_{\bp \bigotimes \bmu_1} || \mathbb{P}_{\bp \bigotimes \bmu_2} \right)\right),
    \end{align}
    
where inequality (a) is due to $\max{a,b}\ge (a+b)/2$, and inequality (b) is due to the following derivation:

Let event $\cE= \{ g(A(\cD);\bmu_1) \le \frac{k \Delta}{2} \}$. On $\neg \cE$ it holds that $g(A(\cD);\bmu_1) \ge \frac{k \Delta}{2}$ and on $\cE$  it holds that $g(A(\cD);\bmu_2) \ge k\Delta -  g(A(\cD);\bmu_1) \ge \frac{k \Delta}{2}$. Thus, we have:
\begin{align}
     &\inf_{A} \frac{1}{2}\left( \E_{\bp \otimes \bmu_1} [g(A(\cD); \bmu_1)] +  \E_{\bp \otimes \bmu_2} [g(A(\cD); \bmu_2)] \right)\\
     &{\ge} \frac{k\Delta}{4} \left( \mathbb{P}_{\bp \otimes \bmu_1}(\neg \cE) + \mathbb{P}_{\bp \otimes \bmu_2}(\cE) \right)\\
        &\overset{(a)}{\ge}  \frac{k\Delta}{8}\exp \left( - \mathrm{KL} \left( \mathbb{P}_{\bp \bigotimes \bmu_1} || \mathbb{P}_{\bp \bigotimes \bmu_2} \right)\right)\\
         &\overset{(b)}{\ge} \frac{k}{8e}\min \left(\frac{1}{4}, \sqrt{\frac{C_{\infty}^{\star}}{20 n}}\right)\\
\end{align}
where inequality (a) is due to \cref{apdx_lem:hard_test} and inequality (b) comes from:
$\mathrm{KL}\left(\mathbb{P}_{\bp \otimes \bmu_1} \| \mathbb{P}_{\bp \otimes \bmu_2}\right) \leq \frac{n \mathrm{KL}\left(\mathbb{P}_{\bmu_1} \| \mathbb{P}_{\bmu_2}\right)}{C_{\infty}^{\star}} \leq \frac{n(2 \Delta)^2}{C_{\infty}^{\star}\left(1 / 4-\Delta^2\right)} \leq 20 n \Delta^2 / C_{\infty}^{\star}$.
Here we use the fact that each arm in the path are fully dependent Bernoulli random variables, $\mathrm{KL}\left( \mathrm{Bern}(p) || \mathrm{Bern}(q) \right) \le \frac{(p-q)^2}{q(1-q)}$ and that $\Delta \in [0,\frac{1}{4}]$.
By taking $\Delta=\min \left(\frac{1}{4}, \sqrt{\frac{C_{\infty}^{\star}}{20 n}}\right)$ concludes \cref{thm:lower_bound}.
\end{proof}

\section{Proof for the Application of Offline Learning for Cascading Bandits}\label{apdx_sec:app_ranking}
\begin{proof}[Proof of \cref{thm:app_ranking}]

For the cascading bandit application, we need to prove how it satisfies the monotonicity condition (\cref{cond:mono}), the 1-norm TPM condition (\cref{cond:TPM}), the 1-norm data coverage condition (\cref{cond:1_norm_cov}), and then settle down the corresponding smoothness factor $B_1$, data coverage coefficient $C_1^*$, action size $\bar{K}^*$.

\begin{algorithm}[!h]
    \caption{\texttt{CLCB-Cascade}: Combinatorial Lower Confidence Bound Algorithm for Cascading Bandits}
    \label{alg:CLCB_cascade}
    \begin{algorithmic}[1]
        \STATE \textbf{Input:} Dataset $\cD=\left\{ \left( S_t, \tau_{t}, (X_{t,i})_{i\in \tau_{t}} \right) \right\}_{t=1}^{n}$, cardinality $k>0$, solver \texttt{Top-k}, probability $\delta$.
		\FOR{arm $i \in [m]$}
            \STATE Calculate counter $N_i=\sum_{t=1}^n \I\{i \in \tau_t\}$;\label{line:counter_cascade}
            \STATE Calculate empirical mean $\hat{\mu}_i=\frac{\sum_{t=1}^n \I\{i \in \tau_t\}X_{t,i}}{N_i}$;\label{line:empirical_mean_cascade}
            \STATE Calculate LCB $\ubar{\mu}_i=\hat{\mu}_i-\sqrt{\frac{\log (\frac{2mn}{\delta})}{2N_i}}$.\label{line:LCB_cascade}
        \ENDFOR
        \STATE Call oracle $\hat{S}=\texttt{Top-k}(\ubar{\mu}_1, ..., \ubar{\mu}_m)$.\label{cascade_line:oracle}
        \STATE \textbf{Return:} $\hat{S}$.
	\end{algorithmic}
\end{algorithm}

For the monotonicity condition (\cref{cond:mono}), the 1-norm TPM condition (\cref{cond:TPM}), Lemma 1 in \citet{wang2017improving} yields $B_1=1$.

For the 1-norm data coverage condition (\cref{cond:1_norm_cov}), recall that we assume the arm means are in descending order $\mu_1\ge \mu_2 \ge ... \ge \mu_m$, therefore we have $S^*=(1,2,..., k)$, and
\begin{equation}\label{apdx_eq:ranking_p*}
  p_i^{\D_{\text{arm}}, S^*} = \begin{cases}
        \prod_{j=1}^{i-1} (1-\mu_j), \text{ if $i \le k$,}
        \\
        0, \text{ else if $i \ge k+1$}.
        \end{cases}
 \end{equation}

As for $p_i^{\D_{\text{arm}}, \D_{\cS}}$, we have 
\begin{align}\label{apdx_eq:ranking_p_S}
p_i^{\D_{\text{arm}}, \D_{\cS}} \ge \sum_{j=1}^k q_{ij} (1-\mu_1)^{j-1},
\end{align}
where $q_{ij}$ is the probability that arm $i$ is sampled at the $j$-th position of the random ranked list sampled by the experimenter.

By math calculation, we have 
\begin{align}\label{apdx_eq:ranking_C*}
    C_1^{*} = \sum_{i \in [m]}\frac{p_i^{\D_{\text{arm}},S^*}}{p_i^{\D_{\text{arm}}, \mathbb{D}_{\cS}}} \le \sum_{i=1}^k \frac{\prod_{j=1}^{i-1} (1-\mu_j)} {\sum_{j=1}^k q_{ij} (1-\mu_1)^{j-1}}
\end{align}
and 
\begin{align}
    \bar{K}^*=\sum_{i \in [m]} p_i^{\D_{\text{arm}},S^*} \le \sum_{i\in [k]}1 = k
\end{align}
Plugging $B_1=1, C_1^*=\sum_{i=1}^k \frac{\prod_{j=1}^{i-1} (1-\mu_j)} {\sum_{j=1}^k q_{ij} (1-\mu_1)^{j-1}}, \bar{K}^*=k$ into our general result \cref{thm:upper_bound} yields the general result of \cref{thm:app_ranking}.

When we assume that the data collecting distribution $\D_{\cS}$ follows the \textit{uniform} distribution from all possible ordered lists $\cS=\{(a_1, ..., a_k): a_i \in [m] \text{ for all } i \in [m] \text{, and } a_i \neq a_j \text{ for all } i\neq j\}$.  
Then we have $q_{i,j}= \frac{1}{m}$, and using \cref{apdx_eq:ranking_p_S} we have
\begin{align}
    p_i^{\D_{\text{arm}}, \D_{\cS}}\ge \sum_{j=1}^{k} \frac{(1-\mu_1)^{j-1}}{m}=\frac{1-(1-\mu_1)^k}{\mu_1 \cdot m}
\end{align}

We can use \cref{apdx_eq:ranking_C*} and \cref{apdx_eq:ranking_p*} to bound 
\begin{align}
    C_1^{*} &= \sum_{i \in [m]}\frac{p_i^{\D_{\text{arm}},S^*}}{p_i^{\D_{\text{arm}}, \mathbb{D}_{\cS}}} \le \sum_{i\in [m]} \frac{p_i^{\D_{\text{arm}},S^*}} {\frac{1-(1-\mu_1)^k}{\mu_1 \cdot m}}\\
    &\le \frac{\sum_{j=1}^k (1-\mu_k)^j}{\frac{1-(1-\mu_1)^k}{\mu_1 \cdot m}} \\
    &= \frac{ \frac{1-(1-\mu_k)^k}{\mu_k} } {\frac{1-(1-\mu_1)^k}{\mu_1 \cdot m}}\\
    &\le \frac{\mu_1 \cdot m}{\mu_k}.
\end{align}

Plugging $B_1=1, C_1^*=\frac{\mu_1 \cdot m}{\mu_k}, \bar{K}^*=k$ into our general result \cref{thm:upper_bound} concludes  \cref{thm:app_ranking}.
    
\end{proof}

\section{Algorithm and Proof for the LLM Cache Application}\label{apdx_sec:app_LLM_cache}
\subsection{Offline Learning for the LLM Cache under the Standard CMAB-T View}\label{apdx_sec:app_LLM_cache_std}

\begin{algorithm}[!h]
    \caption{\texttt{CLCB-LLM-C}: Combinatorial Lower Confidence Bound Algorithm for LLM Cache}
    \label{alg:CLCB-LLM_std}
    \begin{algorithmic}[1]
        \STATE \textbf{Input:} Dataset $\cD=\left\{ \left( \cM_t, q_t, c_t \right) \right\}_{t=1}^{n}$, queries $\cQ$, solver \texttt{Top-k}, probability $\delta$.
		\FOR{arm $q \in \cQ$}
            \STATE Calculate counter $N(q)=\sum_{t=1}^n \I\{q = q_t\}$ and $N_c(q)=\sum_{t=1}^n \I\{q= q_t \text{ and } q_t \notin \cM_t\}$;\label{line:LLM_std_counter}
            \STATE Calculate empirical probability $\hat{p}(q)=N(q)/n, \hat{c}(q)=\sum_{t \in [n]} \I\{q= q_t \text{ and } q_t \notin \cM_t\}c_t/N_c(q)$;\label{line:LLM_std_empirical_mean}
            \STATE Calculate UCB of the cost $\bar{c}(q)=\hat{c}(q)+\sqrt{ \frac{2\log (\frac{4mn}{\delta})} {N_c(q)} }$, and UCB of the arrival probability $\bar{p}(q)=\hat{p}(q)+\sqrt{ \frac{2\log (\frac{4mn}{\delta})} {n} }$.\label{line:LLM_std_UCB}
        \ENDFOR
        \STATE Call $\hat{\cM}= \texttt{Top-k} \left( \bar{p}(q_1) \bar{c}(q_1), ..., \bar{p}(q_m) \bar{c}(q_m) \right).$\label{line:LLM_std_oracle}
        \STATE \textbf{Return:} $\hat{\cM}$.
	\end{algorithmic}
\end{algorithm}

For this LLM cache problem, we first show the corresponding base arms, super arm, and triggering probability. Then we prove this problem satisfies the 1-norm TPM smoothness condition (\cref{cond:TPM}) and 1-norm TPM data coverage condition (\cref{cond:1_norm_cov}). Finally, we give the upper bound result by using \cref{thm:upper_bound}.

From the CMAB-T point of view, we have $2m$ base arms: the first $m$ arms correspond to the unknown costs $c(q) \in [0,1]$ for $q \in \cQ$, and the last $m$ arms corresponds to the arrival probability $p(q) \in [0,1]$ for $q \in \cQ$.

Let us denote $\bc=(c(q))_{q \in \cQ}$ and $\bp = (p(q))_{q \in \cQ}$ for convenience. 

Recall that we treat the queries $S \in \cS$ outside the cache $\cM$ as the \textit{super arm}, where $\cS=\{\cQ - \cM: \cM \subseteq \cQ, |\cM| \le k\}$.

We can write the expected cost for each super arm $S \in \cS$ as
$c(S; \bc, \bp)=\sum_{q \in S} p(q) c(q)$.

Then we know that $S^*= \argmax_{S \in \cS}c(S; \bc, \bp)$, which contains the top $m-k$ queries regarding $p(q)c(q)$. 

For the triggering probability, we have that, for any $S \in \cS$, the triggering probability for unknown costs $p_{q,c}^{\D_{\text{arm}}, S}=p(q)$ for $q \in S$ and $0$ otherwise. The triggering probability for unknown arrival probability $p_{q,p}^{\D_{\text{arm}}, S}=1$ for all $q \in \cQ$.

Now we can prove that this problem satisfies the 1-norm TPM smoothness condition (\cref{cond:TPM}) with $B_1=1$. That is, for any $S \in \cS$, any $\bp, \bp', \bc, \bc' \in [0,1]^{m}$, we have
\begin{align}
|c(S; \bc, \bp) - c(S; \bc', \bp')| &= |c(S; \bc, \bp) - c(S; \bc', \bp) + c(S; \bc', \bp) - c(S; \bc', \bp')| \\
    &\le |c(S; \bc, \bp) - c(S; \bc', \bp)| + |c(S; \bc', \bp) - c(S; \bc', \bp')|\\
    &= \left| \sum_{q \in S} p(q) c(q) - p(q) c'(q)  \right| + \left| \sum_{q \in S} p(q) c'(q) - p'(q) c'(q)  \right|\\
    &\le \sum_{q \in S} p(q) \left| c(q) - c'(q) \right| +  \sum_{q \in S} c'(q)\left| p(q) - p'(q) \right|\\
    &\le \sum_{q \in S} p(q) \left| c(q) - c'(q) \right| +  \sum_{q \in \cQ}\left| p(q) - p'(q) \right|
    \label{apdx_eq:LLM_cache_TPM_smooth}
\end{align}

Next, we prove that this problem satisfies the 1-norm TPM data coverage condition (\cref{cond:inf_norm_cov}).

Let $\nu(q)=\Pr_{\cM \sim \D_{\cS}} \left[ q \notin \cM\right]$ be the probability that $q$ is not sampled in the experimenter's cache $\cM \sim \D_{\cS}$.
Then the data collecting probability for unknown costs $p_{q,c}^{\D_{ \text{arm} }, \D_{ \cS }} = p(q) \nu(q)$ and $p_{q,p}^{\D_{ \text{arm} }, \D_{ \cS }} = 1$ for unknown arrival probability, for $q \in \cQ$.
 We can prove that the LLM cache satisfies \cref{cond:1_norm_cov} by 
\begin{align}
    \sum_{q \in \cQ} \left( \frac{p_{q,c}^{\D_{ \text{arm} } ,S^*}}{ p_{q,c}^{\D_{ \text{arm} }, \D_{ \cS }} } + \frac { p_{q,p}^{\D_{\text{arm}}, S^*} } { p_{q,p}^{\D_{ \text{arm} }, \D_{ \cS }} }\right) = \sum_{q \in \cQ} \left( \frac{p(q) \I\{ q \in S^*\}}{ p(q) \nu(q) } + 1\right)\le \sum_{q \in S^*} \frac{1}{\nu(q)} + m = C_1^*.
\end{align}

Finally, we have $\bar{K}^*= \sum_{q \in \cQ} \left(p_{q,c}^{\D_{ \text{arm} } ,S^*} +  p_{q,p}^{\D_{\text{arm}}, S^*} \right)= \sum_{q \in \cQ} p(q) \I\{q \in S^*\} + \sum_{q \in \cQ} 1 \le 1 + m$. 

Plugging into \cref{thm:upper_bound} with $B_1 = 1$, $C_1^{*}= \sum_{q \in S^*} \frac{1}{\nu(q)} + m$, $\bar{K}^*=1 + m$, we have the following suboptimality upper bound.

\begin{lemma}[Standard Upper Bound for LLM Cache]
For LLM cache bandit with a dataset $\cD$ of $n$ data samples, let $\cM^*$ be the optimal cache and suppose $n \ge \frac{8 \log (\frac{1}{\delta})} {\min_{q \in \cQ - \cM^*} p(q) \nu(q)}$, where $\nu(q)$ is the probability that query $q$ is not included in each offline sampled cache. Let $\hat{M}$ be the cache returned by algorithm \cref{alg:CLCB-LLM_std}, then it holds with probability at least $1-\delta$ that
    \begin{align}\label{eq:app_llm_std_gap}
       \mathrm{SubOpt}(\hat{\cM};\alpha, \bc, \bp) & \defeq c(\cM^*;\bc, \bp) - c \left(\hat{\cM};\bc, \bp \right)\\&\le 2 \sqrt { \frac{2 (m+1) \left(  \sum_{q \in \cQ - \cM^*} \frac{1}{\nu(q)} + m \right) \log (\frac { 4mn } { \delta })}{n} },  
    \end{align}
    if the experimenter samples empty cache in each round as in \cite{zhu2023optimal} so that $\nu(q)=1$, it holds that $C_1^*\le 2m$ and
    \begin{align}
        \mathrm{SubOpt}(\hat{\cM};\alpha, \bmu) & \defeq c(\cM^*;\bp, \bc) - c \left(\hat{\cM};\bp, \bc \right)\\&\le 2 \sqrt { \frac{4m (m+1) \log (\frac { 4mn } { \delta })}{n} }.  
    \end{align}
\end{lemma}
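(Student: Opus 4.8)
The plan is to recognize the LLM cache problem as a concrete CMAB-T instance and then read off its suboptimality gap directly from the $1$-norm data coverage bound \cref{gap_c_1} of \cref{thm:upper_bound} with $\alpha=1$ (the \texttt{Top-k} oracle is exact for this linear objective). Once the instance is set up, the entire proof reduces to computing four problem-dependent quantities: the smoothness coefficient $B_1$, the coverage coefficient $C_1^*$, the action size $\bar{K}^*$, and the number of base arms that enters the logarithm.

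First I would fix the instance. There are $2m$ base arms: $m$ carrying the costs $\bc=(c(q))_{q\in\cQ}$ and $m$ carrying the arrival probabilities $\bp=(p(q))_{q\in\cQ}$. The super arm is $S=\cQ-\cM$ with expected (cost) reward $c(S;\bc,\bp)=\sum_{q\in S}p(q)c(q)$. The two arm groups carry different feedback: a cost arm $q$ is observed only on a cache miss, so $p_{q,c}^{\D_{\text{arm}},S}=p(q)\I\{q\in S\}$ with data probability $p_{q,c}^{\D_{\text{arm}},\D_\cS}=p(q)\nu(q)$, whereas every arrival arm is always observed, so $p_{q,p}^{\D_{\text{arm}},S}=1$ and $p_{q,p}^{\D_{\text{arm}},\D_\cS}=1$. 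Keeping these two groups separate throughout is the book-keeping that makes every subsequent sum split cleanly.

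Next I would verify the structural conditions. Monotonicity (\cref{cond:mono}) is immediate since $c(S;\bc,\bp)$ is a nonnegative-weighted sum, hence coordinatewise monotone in both $\bc$ and $\bp$. For $1$-norm TPM smoothness (\cref{cond:TPM}) I would run exactly the telescoping bound already displayed in \cref{apdx_eq:LLM_cache_TPM_smooth}: add and subtract $c(S;\bc',\bp)$, apply the triangle inequality, and control the cost term by $\sum_{q\in S}p(q)|c(q)-c'(q)|$ and the arrival term by $\sum_{q\in\cQ}|p(q)-p'(q)|$ using $c'(q)\le 1$; matching these two sums against the triggering probabilities above certifies $B_1=1$. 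For data coverage (\cref{cond:1_norm_cov}), summing the per-arm ratios over the two groups gives $\sum_{q\in\cQ}\big(\tfrac{p(q)\I\{q\in S^*\}}{p(q)\nu(q)}+1\big)=\sum_{q\in S^*}\tfrac{1}{\nu(q)}+m=C_1^*$, and likewise $\bar{K}^*=\sum_{q\in\cQ}\big(p(q)\I\{q\in S^*\}+1\big)=\sum_{q\in S^*}p(q)+m\le 1+m$ since $\sum_{q}p(q)=1$.

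Finally I would substitute $\alpha=1$, $B_1=1$, $\bar{K}^*=m+1$, and $C_1^*=\sum_{q\in\cQ-\cM^*}\tfrac{1}{\nu(q)}+m$ into \cref{gap_c_1}; because the instance has $2m$ base arms rather than $m$, the generic $\log(2mn/\delta)$ becomes $\log(4mn/\delta)$, yielding the first displayed bound. The sample-size hypothesis $n\ge 8\log(1/\delta)/\min_{q\in\cQ-\cM^*}p(q)\nu(q)$ is exactly what secures the counter event of \cref{thm:upper_bound} restricted to the cost arms of $S^*$ (the arrival arms have data probability $1$ and need no such guarantee). The empty-cache corollary then follows by setting $\nu(q)=1$, so that $\sum_{q\in S^*}\tfrac{1}{\nu(q)}=|S^*|\le m$ and $C_1^*\le 2m$, collapsing the radicand to $4m(m+1)\log(4mn/\delta)/n$. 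The one step that is not a purely mechanical instantiation is the minimization-versus-maximization mirroring: this is a cost-minimization problem, so pessimism is realized through the UCBs $\bar{c}(q),\bar{p}(q)$ in line~\ref{line:LLM_std_UCB} rather than LCBs, and I would argue that reading ``reward'' as ``negated cost'' reverses the roles of $\mathrm{UCB}/\mathrm{LCB}$ and of monotone increasing/decreasing consistently, so that the three-term gap decomposition (uncertainty, oracle, and pessimism gaps) of \cref{thm:upper_bound} transfers with all signs intact and the final bound is identical in form. Confirming this mirroring, together with the clean separation of the two arm groups, is the only genuinely delicate part of the argument.
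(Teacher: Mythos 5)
Your proposal is correct and follows essentially the same route as the paper's own proof: the paper likewise sets up the $2m$-arm CMAB-T instance (cost arms triggered with probability $p(q)\I\{q\in S\}$ and data probability $p(q)\nu(q)$; arrival arms with full feedback), verifies the $1$-norm TPM smoothness with $B_1=1$ via the telescoping bound of \cref{apdx_eq:LLM_cache_TPM_smooth}, computes $C_1^*=\sum_{q\in S^*}\frac{1}{\nu(q)}+m$ and $\bar{K}^*\le m+1$, and plugs these into \cref{gap_c_1} of \cref{thm:upper_bound}. The only difference is that you explicitly flag and justify the cost-minimization/reward-maximization mirroring (UCBs playing the role of LCBs), which the paper handles implicitly in this lemma (and only spells out later for the improved bound); this is a point in your favor, not a deviation.
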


\subsection{Improved Offline Learning for the LLM Cache by Leveraging the Full-feedback Property and the Vector-valued Concentration Inequality}

\begin{theorem}[Improved Upper Bound for LLM Cache]
For LLM cache bandit with a dataset $\cD$ of $n$ data samples, let $\cM^*$ be the optimal cache and suppose $n \ge \frac{8 \log (\frac{1}{\delta})} {\min_{q \in \cQ - \cM^*} p(q) \nu(q)}$, where $\nu(q)$ is the probability that query $q$ is not included in each offline sampled cache. Let $\hat{M}$ be the cache returned by algorithm \cref{alg:CLCB-LLM_std}, then it holds with probability at least $1-\delta$ that
    \begin{align}\label{apdx_eq:app_llm_improve_gap}
       \mathrm{SubOpt}(\hat{\cM};\alpha, \bmu) & \defeq c(\cM^*;\bp, \bc) - c \left(\hat{\cM};\bp, \bc \right)\\&\le 2\sqrt{\frac{2 \sum_{q\in \cQ - \cM^*}{\frac{1}{\nu(q)}} \log(\frac{6mn}{\delta})} { n } } + 2\sqrt{\frac{2m\log(\frac{3}{\delta})}{n}},  
    \end{align}
    if the experimenter samples empty cache in each round as in \cite{zhu2023optimal} so that $\nu(q)=1$, it holds that $C_1^*\le m$ and
    \begin{align}\label{apdx_eq:app_llm_improve_gap_2}
        \mathrm{SubOpt}(\hat{\cM};\alpha, \bmu) & \defeq c(\cM^*;\bp, \bc) - c \left(\hat{\cM};\bp, \bc \right)\\&\le 4 \sqrt { \frac{2m \log (\frac { 6mn } { \delta })}{n} }.  
    \end{align}
\end{theorem}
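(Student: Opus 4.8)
The plan is to reuse the pessimism-based gap decomposition behind \cref{thm:upper_bound}, but to treat the two arm families asymmetrically: the costs $\bc$ are handled with a one-sided UCB exactly as in the general CMAB-T analysis, whereas the arrival probabilities $\bp$—being full-feedback categorical observations—are estimated by the plain empirical mean $\hat\bp$ and controlled in a \emph{single} vector-valued $\ell_1$ step. This is precisely where a factor of $\sqrt m$ is gained relative to the arm-by-arm bound of \cref{apdx_sec:app_LLM_cache_std}. Throughout I use the super-arm view $S=\cQ-\cM$ with $c(S;\bc,\bp)=\sum_{q\in S}p(q)c(q)$, and recall from \cref{apdx_eq:LLM_cache_TPM_smooth} that the problem satisfies 1-norm TPM smoothness with $B_1=1$, the cost arm $q$ carrying triggering probability $p(q)$ and the probability arm $q$ carrying triggering probability $1$.

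Write $\hat S=\cQ-\hat\cM$ and let $g(S):=c(S;\bar\bc,\hat\bp)=\sum_{q\in S}\hat p(q)\bar c(q)$ be the surrogate objective that the \texttt{Top-k} solver minimizes over $S$. I would decompose
\begin{align}
c(\hat S;\bc,\bp)-c(S^*;\bc,\bp)
&=\big[c(\hat S;\bc,\bp)-g(\hat S)\big]+\big[g(\hat S)-g(S^*)\big]+\big[g(S^*)-c(S^*;\bc,\bp)\big],
\end{align}
where the middle bracket is nonpositive by optimality of the oracle. For each remaining bracket I would split the per-query summand by adding and subtracting an intermediate product, isolating (i) a cost-error part proportional to $\bar c(q)-c(q)$ and (ii) a probability-error part proportional to $\hat p(q)-p(q)$ with a bounded cost factor ($c(q)$ or $\bar c(q)$, both $\le 1$) pulled out. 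On the cost-UCB event $\bar c(q)\ge c(q)$, the cost part of the first bracket is nonpositive and drops, while the cost part of the third bracket equals $\sum_{q\in S^*}p(q)\big(\bar c(q)-c(q)\big)$, carrying the true triggering weight $p(q)$.

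For the cost term I would bound $\bar c(q)-c(q)\le 2\sqrt{2\log(\cdot)/N_c(q)}$ on the UCB event and substitute the counter lower bound $N_c(q)\ge n\,p(q)\nu(q)/2$ from \cref{apdx_lem:arm_counter_concen} (valid under the stated sample-size condition, since $p^{\D_{\text{arm}},\D_{\cS}}_{q,c}=p(q)\nu(q)$). Following the Cauchy--Schwarz step of \cref{apdx_eq:general_1_norm_start}--\cref{apdx_eq:general_1_norm_end} together with $\sum_{q\in S^*}p(q)\le 1$, this collapses to the first summand $2\sqrt{2\sum_{q\in\cQ-\cM^*}\nu(q)^{-1}\log(\tfrac{6mn}{\delta})/n}$. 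For the probability term, the first and third brackets each contribute $\sum_{q}|p(q)-\hat p(q)|\le\|\hat\bp-\bp\|_1$, so together $2\|\hat\bp-\bp\|_1$; here I invoke a vector-valued multinomial concentration (of Bretagnolle--Huber--Carol type) to obtain $\|\hat\bp-\bp\|_1\le\sqrt{2m\log(3/\delta)/n}$ with probability $1-\delta/3$, yielding the second summand. A union bound over the cost-UCB event, the counter event, and the $\ell_1$ event (splitting $\delta$) finishes \cref{apdx_eq:app_llm_improve_gap}; setting $\nu(q)\equiv 1$ gives $C_1^*\le m$ and the clean rate \cref{apdx_eq:app_llm_improve_gap_2}.

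The main obstacle I expect is handling the two-sided empirical estimate $\hat\bp$ inside a framework built for one-sided confidence bounds: because $\hat p(q)$ is neither an upper nor a lower bound, the usual argument that the pessimism/oracle gaps vanish fails on the $\bp$-coordinates, and one must verify that the residual $\bp$-error is cleanly expressible as a \emph{global} $\ell_1$ norm (with the trivial bounded cost factor) rather than as a $p(q)$-modulated arm-wise sum—any reappearance of the $p(q)$ weights here would reintroduce the $\sqrt m$ one is trying to remove. Making the single vector-valued concentration replace $m$ scalar confidence intervals, thereby converting $\sum_q\sqrt{\log/n}=O(m/\sqrt n)$ into $O(\sqrt{m/n})$, is the technical heart of the improvement.
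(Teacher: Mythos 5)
Your proposal is correct and follows essentially the same route as the paper's proof: the identical three-bracket decomposition (pessimism/oracle/uncertainty, merely reordered), the oracle bracket dropped by optimality of \texttt{Top-k}, the cost coordinates handled by the UCB event plus the counter bound $N_c(q)\ge n\,p(q)\nu(q)/2$ and the Cauchy--Schwarz step of \cref{apdx_eq:general_1_norm_start}--\cref{apdx_eq:general_1_norm_end} with $\bar K^*\le 1$, and the $\bp$ coordinates absorbed into $2\|\hat\bp-\bp\|_1$ controlled by a single multinomial $\ell_1$ concentration (the paper's $\cE_{\text{arv}}$ event, \cref{apdx_lem:vec_concen}), followed by a union bound splitting $\delta$ into three. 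The "technical heart" you flag---keeping the $\hat\bp$-error as a global $\ell_1$ norm with the bounded cost factor pulled out, rather than a $p(q)$-modulated arm-wise sum---is exactly what the paper's monotonicity-plus-smoothness steps accomplish via \cref{apdx_eq:LLM_cache_TPM_smooth}.
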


In this section, we use an improved CMAB-T view by clustering $m$ arrival probabilities $p(q)$ as a vector-valued arm, which is fully observed in each data sample (observing $q$ means observing one hot vector $\be_q \in \{0,1\}^m$ with $1$ at the $q$-th entry and $0$ elsewhere). 

Specifically, we have $m+1$ base arms: the first $m$ arms correspond to the unknown costs $c(q)$ for $q \in \cQ$, and the last (vector-valued) arm corresponds to the arrival probability vector $(p(q))_{q \in \cQ}$.

Let us denote $\bc=(c(q))_{q \in \cQ}$ and $\bp = (p(q))_{q \in \cQ}$ for convenience. 

For the triggering probability, for any action $S$, we only consider unknown costs $p_{q,c}^{\D_{\text{arm}}, S}=p(q)$ for $q \in S$ and $0$ otherwise.

For the 1-norm TPM smoothness condition (\cref{cond:TPM}), directly following \cref{apdx_eq:LLM_cache_TPM_smooth}, we have that for any $S \in \cS$, any $\bp, \bp', \bc, \bc' \in [0,1]^{m}$,

\begin{align}\label{apdx_eq:LLM_1_norm_smooth}
|c(S; \bc, \bp) - c(S; \bc', \bp')| \le \sum_{q \in S} p(q) \left| c(q) - c'(q) \right| +  \sum_{q \in \cQ}\left| p(q) - p'(q) \right| = \sum_{q \in S} p(q) \left| c(q) - c'(q) \right| +  \norm{\bp - \bp'}_1
\end{align}

Recall that the empirical arrival probability vector is $\hat{\bp}$ and the UCB of the cost is $\bar{\bc}$.

Recall that $\hat{\cM}= \argmax_{|\cM| = k } \sum_{q \in \cM} \hat{p}(q) \bar{c}(q)$ given by line~\ref{line:LLM_oracle} in \cref{alg:CLCB-LLM}, which from the CMAB-T view, corresponds to the super arm $\hat{S}=\cQ - \hat{\cM} = \argmin_{S \in \cS} c(S; \bar{\bc}, \hat{\bp})$.

Since we treat $\bp$ as a single vector-valued base arm and consider the cost function (and minimizing the cost) instead of the reward function (and maximizing the reward), we need a slight adaptation of the proof of \cref{apdx_eq:gap_decom} as follows:

Recall that the dataset $\cD=\left\{ \left( \cM_t, q_t, c_t \right) \right\}_{t=1}^{n}$.

Recall that $N_c(q)=\sum_{t=1}^n \I\{q= q_t \text{ and } q_t \notin \cM_t\}$ is the number of times that $q$ is not in cache $\cM_t$.

Recall that $\nu(q)$ is the probability that query $q$ is not included in each offline sampled cache $\cM_t$.

Let $p^*= \min_{ q \in  \cQ - \cM^*} p(q) \nu(q)$ be the minimum data collecting probability.

First, we need a new concentration event for the vector-valued $\cE_{arv}$ and two previous events as follows.

\begin{align}
    \cE_{\text{arv}} &\defeq \left\{ \norm{\hat{\bp} - \bp}_1 \le \sqrt{\frac{2m \log (\frac{2}{\delta})} { n } } \right\}\\
    \cE_{\text{arm}} &\defeq \left\{ \abs{\hat{c}(q) - c(q)}\le \sqrt{\frac{\log ( \frac{ 2mn } { \delta } ) }{2N_c(q)} } \text{ for any } q \in \cQ \right\}\\
    \cE_{\text{counter}} &\defeq \left\{N_c(q) \ge \frac{n \cdot p_{q,c}^{\D_{\text{arm}}, \D_{\cS}}}{2} \text{ for any } q \in \cQ - \cM^* \given n \ge \frac{8 \log \frac{m} {\delta} }{p^*}\right\}
\end{align} 

Following the derivation of \cref{apdx_eq:gap_decom}, we have:
\begin{align}
    &c(\hat{S};\bc, \bp) - c \left(S^*;\bc, \bp \right) \\
    & \overset{(a)}{=} \underbrace{c(S^*; \bar{\bc}, \hat{\bp}) - c \left(S^*; \bc, \bp \right)}_{\text{uncertainty gap}} + \underbrace{  c \left(\hat{S}; \bar{\bc}, \hat{\bp} \right) - c \left(S^*;\bar{\bc}, \hat{\bp} \right)}_{\text{oracle gap}} + \underbrace{c \left(\hat{S};\bc, \bp \right) - c \left(\hat{S};\bar{\bc}, \hat{\bp} \right)}_{\text{pessimism gap}}\\
    &\overset{(b)}{\le} c(S^*; \bar{\bc}, \hat{\bp}) - c \left(S^*; \bc, \bp \right) + c \left(\hat{S};\bc, \bp \right) - c \left(\hat{S};\bar{\bc}, \hat{\bp} \right)\\
    &\overset{(c)}{\le} c(S^*; \bar{\bc}, \hat{\bp}) - c \left(S^*; \bc, \bp \right) + c \left(\hat{S};\bar{\bc}, \bp \right) - c \left(\hat{S};\bar{\bc}, \hat{\bp} \right)\\
    &\overset{(d)}{\le} c(S^*; \bar{\bc}, \hat{\bp}) - c \left(S^*; \bc, \bp \right) + \norm{\hat{\bp}-\bp}_1\\
    &\overset{(e)}{\le}  \sum_{q \in S^*} p(q) \left| \bar{c}(q) - c(q) \right| +  2\norm{\hat{\bp} - \bp}_1\\
    &\overset{(f)}{\le} \sum_{q \in S^*} p(q) \left| \bar{c}(q) - c(q) \right| + 2\sqrt{\frac{2m\log(\frac{1}{\delta})}{n}}, \label{apdx_eq:LLM_reduce_to_prev}
\end{align}
where inequality (a) is due to adding and subtracting $ c \left(S^*; \bar{\bc}, \hat{\bp} \right)$ and $c \left(\hat{S};\bar{\bc}, \hat{\bp} \right)$, inequality (b) is due to oracle gap $\le 0$ by line~\ref{line:LLM_oracle} of \cref{alg:CLCB-LLM}, inequality (c) is due to the monotonicity, inequality (d) is due to \cref{apdx_eq:LLM_cache_TPM_smooth}, inequality (e) is also due to \cref{apdx_eq:LLM_cache_TPM_smooth}, inequality (f) is due to the event $\cE_{arv}$.

Then for the first term of \cref{apdx_eq:LLM_reduce_to_prev}, we follow \cref{apdx_eq:general_1_norm_start} to \cref{apdx_eq:general_1_norm_end}:
\begin{align}
    \sum_{q \in S^*} p(q) \left| \bar{c}(q) - c(q) \right| &\le 2\alpha B_1\sqrt{\frac{2 \bar{K}^* C_1^* \log(\frac{2mn}{\delta})} { n } } + 2\sqrt{\frac{2m\log(\frac{1}{\delta})}{n}}\\
    &\le 2\alpha B_1\sqrt{\frac{2 \bar{K}^* C_1^* \log(\frac{2mn}{\delta})} { n } } \\
    &\le  2\sqrt{\frac{2 \sum_{q\in \cQ - \cM^*}{\frac{1}{\nu(q)}} \log(\frac{2mn}{\delta})} { n } } \label{apdx_eq:LLM_first_term}
\end{align}
where the last inequality is plugging in $B_1=1$, $\alpha=1$, $\bar{K}^*= \sum_{q \in \cQ} p_{q,c}^{\D_{ \text{arm} } ,S^*} = \sum_{q \in \cQ} p(q) \I\{q \in S^*\} \le 1$, and $C_1^{*}= \sum_{q \in S^*} \frac{1}{\nu(q)}$.

Putting together \cref{apdx_eq:LLM_first_term} and \cref{apdx_eq:LLM_reduce_to_prev}, we have 
\begin{align}
    c(\hat{S};\bc, \bp) - c \left(S^*;\bc, \bp \right) \le 2\sqrt{\frac{2 \sum_{q\in \cQ - \cM^*}{\frac{1}{\nu(q)}} \log(\frac{2mn}{\delta})} { n } } + 2\sqrt{\frac{2m\log(\frac{1}{\delta})}{n}}
\end{align}

Finally, by \cref{apdx_lem:arm_concen}, \cref{apdx_lem:arm_counter_concen}, and \cref{apdx_lem:vec_concen} we can show that event $\cE_{ \text{arm} }$, $\cE_{ \text{counter} }$, $\cE_{arv}$ all hold with probability at least with $1-\delta$. Setting $\delta'=\frac{1}{3\delta}$ concludes the theorem. 

\subsection{Online learning for LLM Cache}\label{apdx_sec:app_LLM_cache_online}
For the online setting, we consider a $T$-round online learning game between the environment and the learner. In each round $t$, there will be a query $q_t$ coming to the system.
Our goal is to select a cache $\cM_t$ (or equivalently the complement set $\cQ - \cM_t$ in each round $t \in [T]$) so as to minimize the regret:
\begin{align}
    \mathrm{Reg}(T) = \sum_{t=1}^T \E \left[ c(\cM_t;\bc, \bp) - c(\cM^*;\bc, \bp) \right].
\end{align}

Similar to \citet{zhu2023optimal}, we consider the streaming setting where the cache of size $k$ is the only space we can save the query's response. That is, after we receive query $q_t$ each round, if the cache misses the current cache $\cM_t$, then we can choose to update the cache $\cM_t$ by adding the current query and response to the cache, and replacing the one of the existing cached items if the cache $\cM_t$ is full.
This means that the feasible set $\cQ_{t+1}$ needs to be a subset of the $\cM_t \cup q_t$, for $t \in [T]$.

For this setting, we propose the CUCB-LLM-S algorithm (\cref{alg:CUCB-LLM_stream}).

\begin{algorithm}[!h]
    \caption{\texttt{CUCB-LLM-S}: Combinatorial Upper Confidence Bound Algorithm for Online Streaming LLM Cache}
    \label{alg:CUCB-LLM_stream}
    \begin{algorithmic}[1]
        \STATE \textbf{Input:} Queries $\cQ$, cache size $k$, probability $\delta$.
        \STATE \textbf{Initialize:} Counter, empirical mean, LCB for unknown costs $N_{c,0}(q)=0, \hat{c}_0(q)=0, \ubar{c}_0(q)=0$. Empirical mean for arrival probability $ \hat{p}_0(q)=0$, for all $q \in \cQ$. Initial cache $\cM_1=\emptyset$.
        \FOR{$t=1, 2, ..., T$}
            \STATE User $t$ arrives with query $q_t$.
            \IF {$q_t \in \cM_t$}\label{apdx_line:if_1}
            \STATE Incur cost $C_t=0$ but does not receive any feedback.
            \STATE Update $\hat{p}_{t}(q_t)=\frac{(t-1) \cdot \hat{p}_{t-1}(q_t) + 1} {t}$, and $\hat{p}_{t}(q_t)=\frac{(t-1) \cdot \hat{p}_{t-1}(q_t) + 0} {t}$ for $q \neq q_t$.
            \STATE Keep $N_{c,t}(q)=N_{c,t-1}(q)$, $\hat{c}_{t}(q)=\hat{c}_{t-1}(q)$ for $q \in \cQ$.
            \STATE Keep $\cM_{t+1}= \cM_t$.
            \ELSE
            \STATE The Cache misses and the system pay random cost $C_t$ with mean $c(q_t)$ to compute the response of $q_t$.
             \STATE Update $\hat{p}_{t}(q_t)=\frac{(t-1) \cdot \hat{p}_{t-1}(q_t) + 1} {t}$, and $\hat{p}_{t}(q_t)=\frac{(t-1) \cdot \hat{p}_{t-1}(q_t) + 0} {t}$ for $q \neq q_t$.
            \STATE Update $N_{c,t}(q_t)=N_{c,t-1}(q_t)+1$, $\hat{c}_{t}(q_t)=\frac{N_{c,t-1}(q_t) \cdot \hat{c}_{t-1}(q_t) + C_t} {N_{c,t-1}(q_t)+1}$.
            \STATE Keep $N_{c,t}(q)=N_{c,t-1}(q)$, $\hat{c}_{t}(q)=\hat{c}_{t-1}(q)$ for $q \neq q_t$.
            \STATE Compute $\ubar{c}_{t}(q) = \max \left\{\hat{c}_{t}(q) - \sqrt{\frac{ 6\log (t) } { N_{c,t}(q) } },0 \right\}$ for all $q \in \cQ$.
            \IF{$|\cM_t| < k$}\label{apdx_line:stream_proc_start}
            \STATE Add $q_t$'s response into $\cM_t$ so that $\cM_{t+1} = \cM_t \cup q_t$.
            \ELSIF{$\min_{q \in \cM_t} \hat{p}_t(q) \ubar{c}_t(q) \le \hat{p}_t(q_t) \ubar{c}_t(q_t)$}\label{apdx_line:if_3}
            \STATE Replace $q_{t, \min}$'s response with $q_t$'s, i.e., $\cM_{t+1}=\cM_t - q_{t, \min} + q_t$, where $q_{t,\min}=\argmin_{q \in \cM_t}\hat{p}_t(q) \ubar{c}_t(q)$.\label{apdx_line:replace}
            \ELSE
            \STATE Keep $\cM_{t+1} = \cM_t$.
            \ENDIF\label{apdx_line:stream_proc_end}
            \ENDIF
        \ENDFOR
	\end{algorithmic}
\end{algorithm}

The key difference from the traditional CUCB algorithm, where any super arm $S \in \cS$ can be selected, is that the feasible future cache $\cM_{t+1}$ in round $t+1$ is restricted to $\cM_{t+1} \subseteq \cM_t \bigcup q_t$, where $\cM_t$ is the current cache and the query that comes to the system.
This means that we cannot directly utilize the top-$k$ oracle as in line~\ref{line:LLM_oracle} of \cref{alg:CLCB-LLM} and other online CMAB-T works \cite{wang2017improving, liu2023variance} due to the restricted feasible action set.
To tackle this challenge, we design a new streaming procedure (lines \ref{apdx_line:stream_proc_start}-\ref{apdx_line:stream_proc_end}), which leverages the previous cache $\cM_t$ and newly coming $q_t$ to get the top-$k$ queries regarding $\hat{p}_t(q)\ubar{c}_t(q)$.

We can prove the following lemma:
\begin{lemma}[Streaming procedure yields the global top-$k$ queries]\label{apdx_lem:stream_oracle}
    Let $\cM_t$ be the cache selected by \cref{alg:CUCB-LLM_stream} in each round, then we have $\cM_t = \argmax_{\cM \subseteq \cQ: |\cM| \le k} \sum_{q \in \cM}\hat{p}_{t-1}(q)\ubar{c}_{t-1}(q)$.
\end{lemma}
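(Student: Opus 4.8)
The plan is to prove the lemma by induction on $t$, maintaining the invariant that at the start of every round the cache is the global top-$k$ set under the current score function. Writing $s_\tau(q) \defeq \hat{p}_\tau(q)\ubar{c}_\tau(q)$ for the score carried after round $\tau$, the inductive claim $H(t)$ is exactly $\cM_t = \argmax_{|\cM|\le k}\sum_{q\in\cM}s_{t-1}(q)$. The base case $t=1$ is immediate: $\cM_1=\emptyset$ and all scores $s_0(q)$ vanish, so the empty set is a valid maximizer once ties at $0$ are broken consistently. The content is the inductive step: assuming $\cM_t$ is the top-$k$ set under $s_{t-1}$, I must show $\cM_{t+1}$ is the top-$k$ set under $s_t$.

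I would split the step into two parts. First, verify that the update always returns the top-$k$ set under $s_t$ of the restricted candidate pool $P_t\defeq\cM_t\cup\{q_t\}$: on a cache hit $q_t\in\cM_t$, so $P_t=\cM_t$ and $\cM_{t+1}=\cM_t$; on a miss with $|\cM_t|<k$ the algorithm adds $q_t$ and returns all of $P_t$ (of size $\le k$); on a full-cache miss it compares $s_t(q_t)$ against $\min_{q\in\cM_t}s_t(q)$ and either evicts the minimizer $q_{t,\min}$ for $q_t$ or keeps $\cM_t$, so the retained $k$ elements are precisely the $k$ largest among the $k+1$ candidates of $P_t$. This part is a direct case check against lines~\ref{apdx_line:stream_proc_start}--\ref{apdx_line:stream_proc_end}.

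The second and main part is to show that the \emph{global} top-$k$ under $s_t$ is already contained in $P_t$, so that restricting to $P_t$ loses nothing. The key structural observation is that the only query whose rank can change between rounds $t-1$ and $t$ is $q_t$ itself. Indeed, for every $q\neq q_t$ no new cost sample is observed, so its cost LCB is carried over unchanged, while the arrival-probability update rescales each such $\hat{p}$ by the common factor $\tfrac{t-1}{t}$; hence $s_t(q)=\tfrac{t-1}{t}\,s_{t-1}(q)$ for all $q\neq q_t$, an order-preserving transformation of the scores of the non-observed queries. Therefore any $q\notin P_t$ (so $q\neq q_t$ and $q\notin\cM_t$), which by the induction hypothesis was dominated under $s_{t-1}$ by every element of $\cM_t$, stays dominated under $s_t$ by every element of $\cM_t$ — using that those elements are either rescaled by the same $\tfrac{t-1}{t}$, or, in the hit case $q_t\in\cM_t$, have a score $s_t(q_t)\ge\tfrac{t-1}{t}s_{t-1}(q_t)$ that only increases. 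Thus no query outside $P_t$ can enter the top-$k$, which gives the containment; combined with the first part this yields $H(t+1)$.

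I expect the order-preservation argument to be the crux and the main obstacle, since it is precisely what licenses a one-item streaming update to track a global top-$k$. The delicate point is the confidence radius in $\ubar{c}_t$: the reduction needs the map $s_{t-1}\mapsto s_t$ restricted to $q\neq q_t$ to be monotone, which is transparent exactly when the cost LCB of an unobserved query is carried over unchanged, so the update is the uniform scaling $\tfrac{t-1}{t}$. I would make this carried-over convention explicit, and then check separately the boundary behaviour of the $\max\{\cdot,0\}$ truncation in the LCB and the tie-breaking in the $\argmax$/$\argmin$, as these are the only places where a naive order-preservation claim could fail.
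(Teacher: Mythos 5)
Your plan follows essentially the same route as the paper's own proof: induction on $t$, a case split on hit / miss-with-room / miss-when-full, and the key observation that for every $q \neq q_t$ the score $s_t(q) \defeq \hat{p}_t(q)\ubar{c}_t(q)$ equals $\frac{t-1}{t}s_{t-1}(q)$, so the relative order of non-observed queries is preserved and only $q_t$ can change rank. Your packaging into ``the update is optimal on the restricted pool $P_t = \cM_t \cup \{q_t\}$'' plus ``a global maximizer already lies inside $P_t$'' is a clean reformulation of the paper's three cases rather than a different argument. One genuine credit: you explicitly flag the confidence-radius convention, namely that the $\sqrt{6\log(t)/N_{c,t}(q)}$ term must be carried over (not recomputed with the larger $\log t$) for unobserved queries, since otherwise the rescaling is not uniform; the paper silently assumes this when it asserts $\ubar{c}_t(q)=\ubar{c}_{t-1}(q)$ for $q\neq q_t$, even though the pseudocode recomputes the LCB for all $q\in\cQ$.

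There is, however, a gap in your Part 2 in the case $|\cM_t| < k$. Your inference is: every $q \notin P_t$ stays dominated under $s_t$ by every element of $\cM_t$, ``thus no query outside $P_t$ can enter the top-$k$.'' That implication is valid only when $|\cM_t| = k$. When the cache is not yet full, the maximizing set of size at most $k$ has room beyond $\cM_t$, so domination by all of $\cM_t$ does not exclude $q$: if some $q \notin P_t$ had $s_t(q) > 0$, then either $\cM_t \cup \{q_t, q\}$ (when $|\cM_t| \le k-2$) or $\cM_t \cup \{q\}$ (when $|\cM_t| = k-1$ and $s_t(q_t)=0$, which is typical early on since $\ubar{c}_t(q_t)$ truncates to zero after one observation) would strictly beat $\cM_{t+1} = \cM_t \cup \{q_t\}$, and the conclusion would fail. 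What closes this case is the stronger fact that every $q \notin P_t$ has score \emph{exactly zero}. The paper obtains it from an algorithmic invariant: as long as $|\cM_t| < k$ no query has ever been evicted, so every query that ever arrived lies in $\cM_t \cup \{q_t\}$, hence $\hat{p}_t(q) = 0$ for all $q \notin P_t$. Alternatively, you can extract it from your own induction hypothesis: scores are nonnegative (both $\hat{p}_t$ and the truncated LCB are $\ge 0$), so if $\cM_t$ maximizes over sets of size at most $k$ and $|\cM_t| < k$, every $q \notin \cM_t$ must satisfy $s_{t-1}(q) = 0$, and the $\frac{t-1}{t}$ rescaling keeps it zero. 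Either patch is one line, but without it your inductive step does not go through in the rounds before the cache fills (the hit case with $|\cM_t| < k$ needs the same fact).
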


\begin{proof}
We prove this lemma by induction.

Base case when $t=1$: 

Since $\ubar{c}_0(q)=\hat{p}_0(q)=0$ for any $q \in \cQ$, we have $\cM_1= \argmax_{\cM \subseteq \cQ: |\cM| \le k} \hat{p}_0(q)\ubar{c}_0(q) = \emptyset$.

For $t\ge 2$: 

Suppose $\cM_{t} = \argmax_{\cM \subseteq \cQ: |\cM| \le k}\sum_{q \in \cM}\hat{p}_{t-1}(q)\ubar{c}_{t-1}(q)$.

Then we prove that $\cM_{t+1} = \argmax_{\cM \subseteq \cQ: |\cM| \le k}\sum_{q \in \cM}\hat{p}_{t}(q)\ubar{c}_{t}(q)$ as follows:

Case 1 (line \ref{apdx_line:if_1}): If $q_t \in \cM_t$, then $\ubar{c}_{t}(q)=\ubar{c}_{t-1}(q)$ remain unchanged for $q \in \cQ$. For the arrival probability, $\hat{p}_{t}(q_t) \ge \hat{p}_{t-1}(q_t)$ is increased, and $\hat{p}_{t}(q) = \frac{(t-1) \cdot \hat{p}_{t-1}(q)}{t}$ are scaled with an equal ratio of $\frac{t-1}{t}$ for $q \neq q_t$. 
Therefore, the \textit{relative order} of queries $q \in \cQ - q_t$ remain unchanged regarding $\hat{p}_{t-1}(q) \ubar{c}_{t-1}(q) $ and $\hat{p}_{t}(q) \ubar{c}_t(q) $.
Moreover, $\hat{p}_{t-1}(q_t) \ubar{c}_{t-1}(q_t) \le \hat{p}_{t}(q_t) \ubar{c}_{t}(q_t)$ is increased while other queries are decreased, so $q_t$ remains in the top-$|\cM_t|$ queries.
Thus, $\cM_{t+1}=\cM_t$ remains the top-$|\cM_t|$ queries. 

Case 2 (line \ref{apdx_line:stream_proc_start}): If $q_t \notin \cM_t$ and $|\cM_t| < k$, then we know that all the queries $q \notin (\cM_t + q_t)$ never arrives, and $\bar{c}_{t}(q)=0$. Therefore, $\hat{p}_{t}(q_t) \ubar{c}_{t}(q_t) \ge \hat{p}_{t}(q) \ubar{c}_{t}(q) = 0$ for any $q \notin (\cM_t + q_t)$, and $\cM_t+q_t$ are top-$|\cM_t+1|$ queries.

Case 3 (line \ref{apdx_line:if_3}): If $q_t \notin \cM_t$ and $|\cM_t| = k$, then $\ubar{c}_{t}(q)=\ubar{c}_{t-1}(q)$ remain unchanged for $q \in \cQ - q_t$ and $\hat{p}_{t}(q) = \frac{(t-1) \cdot \hat{p}_{t-1}(q)}{t}$ are scaled with an equal ratio of $\frac{t-1}{t}$ for $q \neq q_t$, so the \textit{relative order} of queries $q \in \cQ - q_t$ remain unchanged regarding $\hat{p}_{t-1}(q) \ubar{c}_{t-1}(q) $ and $\hat{p}_{t}(q) \ubar{c}_t(q) $. 
The only changed query is the $q_t$, so we only need to replace the minimum query $q_{t,\min}=\argmin_{q \in \cM_t}\hat{p}_t(q) \ubar{c}_t(q)$ with $q_t$, if $\hat{p}_{t}(q_{t,\min}) \ubar{c}_t(q_{t,\min}) \le \hat{p}_{t}(q_t) \ubar{c}_t(q_t) $, which is exactly the line \ref{apdx_line:replace}. This guarantees that $\cM_{t+1}$ are top-$k$ queries regarding $\hat{p}_{t}(q) \ubar{c}_t(q)$, concluding our induction.
\end{proof}

Now we go back to the CMAB-T view by using $S_t = \cQ - \cM_t$, and by the above \cref{apdx_lem:stream_oracle}, we have $S_t = \argmin_{S \subseteq \cQ: |S| \ge k} \sum_{q \in S} \hat{p}_{t-1}(q)\ubar{c}_{t-1}(q)$. 

Then we have the following theorem.

\begin{theorem}
    For the online streaming LLM cache problem, the regret of \cref{alg:CUCB-LLM_stream} is upper bounded by $O\left( \sqrt{m T\log ( \frac{mT } { \delta } ) }\right)$ with probability at least $1-\delta$.
\end{theorem}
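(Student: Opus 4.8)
The plan is to reduce the online streaming problem to the standard CMAB-T regret analysis by working with the super arm $S_t = \cQ - \cM_t$ and the cost function $c(S;\bc,\bp) = \sum_{q\in S}p(q)c(q)$, which we now \emph{minimize}. The starting point is \cref{apdx_lem:stream_oracle}, which guarantees that despite the streaming constraint $\cM_{t+1}\subseteq \cM_t\cup\{q_t\}$, the cache $\cM_t$ is always the \emph{global} top-$k$ set with respect to $\hat{p}_{t-1}(q)\ubar{c}_{t-1}(q)$; equivalently $S_t$ minimizes $c(S;\ubar{\bc}_{t-1},\hat{\bp}_{t-1})$ over all feasible super arms. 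This lets me treat the algorithm exactly as if it called an exact top-$k$ oracle on the estimated values, so the streaming restriction no longer interferes with the regret bound.

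First I would define the favorable event on which (i) the cost LCBs are valid, i.e.\ $\ubar{c}_{t-1}(q)\le c(q)$ and $c(q)-\ubar{c}_{t-1}(q)\le 2\sqrt{6\log t/N_{c,t-1}(q)}$ for every $q,t$ (a time-uniform Hoeffding bound for the sub-Gaussian costs, in the spirit of \cref{apdx_lem:arm_concen}), (ii) the arrival-probability estimate concentrates, $\norm{\hat{\bp}_{t-1}-\bp}_1 \le \sqrt{2m\log(\tfrac{1}{\delta'})/(t-1)}$ (the vector-valued bound \cref{apdx_lem:vec_concen}), and (iii) the per-arm observation counts track their sampling rates. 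A union bound over $q\in\cQ$ and $t\in[T]$ makes this event hold with probability $1-\delta$.

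Next I decompose the per-round regret using the same three-term split as in \cref{apdx_eq:gap_decom}:
\begin{align*}
c(S_t;\bc,\bp)-c(S^*;\bc,\bp) &= \underbrace{\big[c(S_t;\bc,\bp)-c(S_t;\ubar{\bc}_{t-1},\hat{\bp}_{t-1})\big]}_{(I)} \\
&\quad + \underbrace{\big[c(S_t;\ubar{\bc}_{t-1},\hat{\bp}_{t-1})-c(S^*;\ubar{\bc}_{t-1},\hat{\bp}_{t-1})\big]}_{(II)\le 0} \\
&\quad + \underbrace{\big[c(S^*;\ubar{\bc}_{t-1},\hat{\bp}_{t-1})-c(S^*;\bc,\bp)\big]}_{(III)},
\end{align*}
where $(II)\le 0$ by the oracle optimality from \cref{apdx_lem:stream_oracle}. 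Because $\ubar{c}_{t-1}\le c$ on the good event, the cost-confidence widths drop out of $(III)$, leaving $(III)\le \norm{\hat{\bp}_{t-1}-\bp}_1$ (using $c(q)\le 1$). Invoking the $B_1=1$ smoothness of \cref{cond:TPM} (proved for this problem in \cref{apdx_eq:LLM_1_norm_smooth}), $(I)$ splits into an arrival-probability term bounded by $\norm{\hat{\bp}_{t-1}-\bp}_1$ and the genuine exploration term $\sum_{q\in S_t}\hat{p}_{t-1}(q)\,(c(q)-\ubar{c}_{t-1}(q))$, in which the cost-width of each query is modulated by (up to lower-order slack) its triggering probability $p(q)$ — and crucially these are exactly the non-cached queries that get processed and observed.

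Finally I would sum over $t$. The arrival-probability terms sum to $\sum_t \norm{\hat{\bp}_{t-1}-\bp}_1 = O(\sqrt{mT\log(mT/\delta)})$ via $\sum_t 1/\sqrt{t}=O(\sqrt{T})$. The exploration term is the crux: I convert $\sum_{t:\,q\in S_t}p(q)/\sqrt{N_{c,t-1}(q)}$ into $O(\sqrt{N_{c,T}(q)\log T})$ using the triggering-probability-modulated counting argument — a martingale/Bernoulli concentration relating the number of active rounds spent at each count level to $1/p(q)$, with the first few rounds before any observation of $q$ absorbed into a lower-order $O(m)$ term. This TPM machinery of \citet{wang2017improving} is the main technical obstacle. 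The decisive structural observation that removes the usual $\sqrt{K}$ factor is that at most one query is observed per round ($q_t$, and only on a cache miss), so $\sum_q N_{c,T}(q)\le T$; Cauchy--Schwarz then gives $\sum_q \sqrt{N_{c,T}(q)}\le \sqrt{m\sum_q N_{c,T}(q)}\le\sqrt{mT}$. Combining the two contributions yields the claimed $O\!\big(\sqrt{mT\log(mT/\delta)}\big)$ bound.
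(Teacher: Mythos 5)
Your proposal is correct and follows essentially the same route as the paper's proof: the same use of \cref{apdx_lem:stream_oracle} to neutralize the streaming constraint, the same concentration events for the cost LCBs and the $\ell_1$ arrival-probability error, the same three-term decomposition with the oracle gap vanishing, and the same TPM-style summation of the cost-confidence widths with the pre-observation rounds absorbed into an $O(m)$ term. The only difference is the final accounting of the exploration sum: you remove the usual action-size factor via the pathwise bound $\sum_{q} N_{c,T}(q)\le T$ plus Cauchy--Schwarz, whereas the paper reaches the same cancellation in expectation by noting that the summed triggering probabilities satisfy $\sum_{q\in S_t}p(q)\le 1$ (its $\bar{K}^*\le 1$ step) --- both exploit the same one-query-per-round structure.
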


\begin{proof}

We also define two high-probability events:
\begin{align}
    \cE_{\text{arv}} &\defeq \left\{ \norm{\hat{\bp}_{t} - \bp}_1 \le \sqrt{\frac{2m \log (\frac{2T}{\delta})} { t } } \text{ for any } t \in [T] \right\}\\
    \cE_{\text{arm}} &\defeq \left\{ \abs{\hat{c}_{t}(q) - c(q)}\le \sqrt{\frac{\log ( \frac{ 3mT } { \delta } ) }{2N_{c,t}(q)} } \text{ for any } q \in \cQ, t \in [T] \right\}
\end{align} 

Now we can have the following regret decomposition under $\cE_\text{arv}$ and $\cE_{\text{arm}}$: 
\begin{align}
\text{Reg}(T)&=\E \left[ \sum_{t=1}^T \left(c \left(S_t;\bc, \bp \right) - c \left(S^*;\bc, \bp \right)\right) \right]\\
& \overset{(a)}{=} \E \Bigg[ \sum_{t=1}^T  \Big(\underbrace{c \left(S_t;\bc, \bp \right) - c \left(S_t;\ubar{\bc}_{t-1}, \hat{\bp}_{t-1} \right)}_{\text{uncertainty gap}} \notag\\
&+ \underbrace{c \left(S_t; \ubar{\bc}_{t-1}, \hat{\bp}_{t-1}  \right) - c \left(S^*;\ubar{\bc}_{t-1}, \hat{\bp}_{t-1}  \right)}_{\text{oracle gap}} + \underbrace{c(S^*; \ubar{\bc}_{t-1}, \hat{\bp}_{t-1} ) - c \left(S^*; \bc, \bp \right)}_{\text{optimistic gap}} \Big) \Bigg]\\
& \overset{(b)}{\le} \E \left[ \sum_{t=1}^T  \left(c \left(S_t;\bc, \bp \right) - c \left(S_t;\ubar{\bc}_{t-1}, \hat{\bp}_{t-1} \right) + c(S^*; \ubar{\bc}_{t-1}, \hat{\bp}_{t-1} ) - c \left(S^*; \bc, \bp \right) \right) \right]\\
& \overset{(c)}{\le} \E \left[ \sum_{t=1}^T  \left(c \left(S_t;\bc, \bp \right) - c \left(S_t;\ubar{\bc}_{t-1}, \hat{\bp}_{t-1} \right) + c(S^*; \bc, \hat{\bp}_{t-1} ) - c \left(S^*; \bc, \bp \right) \right) \right]\\
 &\overset{(d)}{\le} \E \left[ \sum_{t=1}^T  \left( c \left(S_t;\bc, \bp \right) - c \left(S_t;\ubar{\bc}_{t-1}, \hat{\bp}_{t-1} \right) + \sqrt{ \frac {2 m \log (\frac{2T}{\delta}) } {t} }\right) \right]\\
 &\overset{(e)}{\le} \E \left[ \sum_{t=1}^T  \left( \sum_{q \in S_t} p(q) \abs{\ubar{c}_{t-1}(q) - c(q)} + 2\sqrt{ \frac {2 m \log (\frac{2T}{\delta}) } {t} }\right) \right]\\
  &\overset{(f)}{\le} \E \left[ \sum_{t=1}^T  \left( \sum_{q \in S_t} 2p(q)\sqrt{\frac{\log ( \frac{ 3mT } { \delta } ) }{2N_{c,t-1}(q)} }   + 2\sqrt{ \frac {2 m \log (\frac{2T}{\delta}) } {t} }\right) \right]\\
  &{\le} \E \left[ \sum_{t=1}^T   \sum_{q \in S_t} p(q)\sqrt{\frac{2\log ( \frac{ 3mT } { \delta } ) }{N_{c,t-1}(q)} }  \right] + 4\sqrt{ 2 m T \log (\frac{2T}{\delta}) }\\
  &\overset{(g)}{\le} 14 \sqrt{2m \bar{K}^* T\log ( \frac{ 3mT } { \delta } ) } + 2m + 4\sqrt{ 2 m T\log (\frac{2T}{\delta}) }\\
  &\overset{(f)}{\le} 18 \sqrt{2m T\log ( \frac{ 3mT } { \delta } ) } + 2m
\end{align}
where inequality (a) is due to adding and subtracting terms, inequality (b) is due to oracle gap $\le 0$ by $S_t = \argmin_{S \subseteq \cQ: |S| \ge k} \sum_{q \in S} \hat{p}_{t-1}(q)\ubar{c}_{t-1}(q)$ (indicated by \cref{apdx_lem:stream_oracle}), inequality (c) is due to the monotonicity, inequality (d) is due to \cref{apdx_eq:LLM_cache_TPM_smooth} and event $\cE_{arv}$, inequality (e) is also due to \cref{apdx_eq:LLM_cache_TPM_smooth}, inequality (f) is due to the event $\cE_{arm}$, and inequality (g) is by the same derivation of Appendix C.1 starting from inequality (50) by recognizing $p_i^{D,S_t}=p(q), \bar{\mu}_{t,i}= \ubar{c}_{t-1}(q), \mu_i = c(q)$, inequality (f) is due to $\bar{K}^*= \sum_{q \in \cQ} p_{q,c}^{\D_{ \text{arm} } ,S^*} = \sum_{q \in \cQ} p(q) \I\{q \in S^*\} \le 1$.
    
\end{proof}




\section{Proof for the Influence Maximization Application under the Node-level feedback}\label{apdx_sec:app_IM_node}
\begin{proof}[Proof for \cref{thm:IM}]
    Recall that the underlying graph is $G(\cV,\cE,p)$ and our offline dataset is $\cD=\left\{ \left( S_{t,0}, S_{t,1},...,S_{t,V-1} \right) \right\}_{t=1}^{n}$.

For each node-level feedback data $t \in [n]$, recall that we only use the seed set $S_{t,0}$ and the active nodes in the first diffusion step $S_{t,1}$ to construct the LCB.

We use $q_v=\Pr \left[ v \in S_{t,0}\right]$ to denote the probability that the node $v$ is selected by the experimenter in the seed set $S_{t,0}$. 

We use $p(\bar{v}) \defeq \Pr \left[ v \notin S_{t,1} \right]$ to denote the probability that the node $v$ is not activated in one time step. 

We use $p(\bar{v} | u) \defeq \Pr \left[ v \notin S_{t,1} | u \in S_{t,0} \right]$ and $p(\bar{v} | \bar{u}) \defeq \Pr \left[ v \notin S_{t,1} | u \notin S_{t,0}\right]$ to denote the probability that the node $v$ is not activated in one time step conditioned on whether the node $u$ is in the seed set $S_{t,0}$ or not, respectively.

Recall that we use the following notations to denote the set of counters, which are helpful in constructing the unbiased estimator and the high probability confidence interval of the above probabilities $q_v, p(\bar{v})$ and $p(\bar{v} | \bar{u})$:
\begin{align}
    n_{0,u}&=\abs{\{t \in [n]: u\in S_{t,0}\}},\\
    n_{0, \bar{u}}&=|\{t \in [n]: u\notin S_{t,0}\}|,\\
    n_{1, \bar{v}}&=|\{t \in [n]: v \notin S_{t,1}\}|,\\
    n_{1, \bar{u}, \bar{v}}&=|\{t \in [n]: u\notin S_{t,0} \text{ and } v \notin S_{t,1}\}|
\end{align}

Recall that for any $u,v \in \cV$ and given probability $\delta$, we construct the UCB $\bar{q}_u, \bar{p}(\bar{v})$, and LCB $ \ubar{p}(\bar{v} | \bar{u})$ as follows.
\begin{align}
    \bar{q}_u &= \min\{\widehat{q}_u + \rho_u, 1\},\\
    \bar{p}(\bar{v}) &= \min\{ \widehat{p}(\bar{v}) +  \rho(\bar{v}), 1\},\\
    \ubar{p}(\bar{v} | \bar{u}) &= \max \{ \widehat{p}(\bar{v} | \bar{u})- \rho (\bar{v} | \bar{u}) , 0\}
    \end{align}
where the unbiased estimators are:
\begin{align}
    \widehat{q}_u &= n_{0,u} / n, \\
    \widehat{p}(\bar{v}) &= n_{1,\bar{v}} / n, \\
    \widehat{p}(\bar{v} | \bar{u}) &= n_{1, \bar{u}, \bar{v}} / n_{0, \bar{u}}
\end{align}
and the variance-adaptive confidence intervals are:
\begin{align}
    \rho_u &= \sqrt{\frac{6(1-\widehat{q}_u)\widehat{q}_u\log (\frac{1}{\delta})}{n}}+\frac{9\log (\frac{1}{\delta})}{n},\\
    \rho(\bar{v}) &= \sqrt{\frac{6(1-\widehat{p}(\bar{v})) \widehat{p}(\bar{v}) \log (\frac{1}{\delta})}{n}}+\frac{9\log (\frac{1}{\delta})}{n}\\
    \rho (\bar{v}|\bar{u}) &=\sqrt{\frac{6 \left(1-\widehat{p} (\bar{v} | \bar{u}) \right) \widehat{p}(\bar{v} | \bar{u})\log (\frac{1}{\delta})}{n_{0, \bar{u}}}}+\frac{9\log (\frac{1}{\delta})}{n_{0, \bar{u}}}
\end{align}

Based on the above unbiased estimators and confidence intervals, we define the following events to bound the difference between the true parameter and their UCB/LCBs:

\begin{align}
    \cE_{arm, 1}(u) &\defeq \left\{ q_u \le \bar{q}_u \le \min\left\{q_u +  4\sqrt{3} \sqrt{\frac{q_u (1-q_u) \log ( \frac {1} {\delta} )}{n} } + 28 \cdot \frac {\log ( \frac{1} {\delta} )} {n} ,1 \right\} \right\}\\
    \cE_{arm, 2}(\bar{v}) &\defeq \left\{ p(\bar{v}) \le \bar{p}(\bar{v}) \le \min\left\{p(\bar{v}) +  4\sqrt{3} \sqrt{\frac{p(\bar{v}) (1-p(\bar{v})) \log ( \frac {1} {\delta} )}{n} } + 28 \cdot \frac {\log ( \frac{1} {\delta} )} {n} ,1 \right\}
      \right\}\\
     \cE_{arm, 3}(\bar{u},\bar{v}) &\defeq \left\{\max\left\{p(\bar{v} | \bar{u}) -  4\sqrt{3} \sqrt{\frac{p(\bar{v} | \bar{u}) (1-p(\bar{v} | \bar{u})) \log ( \frac {1} {\delta} )}{n_{0,\bar{u}}} } - 28 \cdot \frac {\log ( \frac{1} {\delta} )} {n_{0,\bar{u}}} ,0 \right\}  \le \ubar{p}(\bar{v} | \bar{u}) \le p(\bar{v} | \bar{u}) \right\}\\
     \cE_{counter}(u) &\defeq \left \{ n_{0,\bar{u}} \ge \frac{n (1-q_u)}{2} \given n \ge \frac{8 \log \frac{1} {\delta} }{ 1-q_u}  \right \}.\\
     \cE_{emp, 1}(\bar{v}) &\defeq \left \{ \hat{p}(\bar{v}) \le  2p(\bar{v}) \given n \ge \frac{8 \log \frac{1} {\delta} }{ p(\bar{v})}    \right \}\\
     \cE_{emp, 2}(\bar{u},\bar{v}) &\defeq \left \{ \hat{p}(\bar{v}|\bar{u}) \ge  p(\bar{v} | \bar{u})/2 \given n_{0,u} \ge \frac{8 \log \frac{1} {\delta} }{ p(\bar{v}|\bar{u})}    \right \}
\end{align}

Recall that the relationship between $p_{uv}$ and $q_u, p(\bar{v})$ and $p(\bar{v}|\bar{u})$ is:
\begin{align}\label{apdx_eq:relation_puv}
    p_{uv} = \frac{ 1 }{ q_u } \left( 1 - \frac{p(\bar{v})}{p(\bar{v}|\bar{u})} \right)
\end{align}
Then we construct the LCB $\ubar{p}_{uv}$ based on the above intermediate UCB $\bar{q}_u, \bar{p}(\bar{v})$, and LCB $ \ubar{p}(\bar{v} | \bar{u})$:

\begin{align}\label{eq:LCB_puv}
    \ubar{p}_{uv} &=\min \left\{1, \max \left \{0, \frac{1}{\bar{q}_u} \left( 1 -  \frac{\bar{p}(\bar{v})}{ \ubar{p}(\bar{v} | \bar{u})}\right)\right\}\right\}.
\end{align}

(1) It is obvious that $\ubar{p}_{u,v}$ is a lower bound of $ p_{uv} $, i.e., $\ubar{p}_{u,v} \le p_{uv}$, since $q_u \le \bar{q}_u, p(\bar{v}) \le \bar{p}(\bar{v}), \ubar{p}(\bar{v} | \bar{u}) \le p(\bar{v} | \bar{u})$ under event $\cE_{arm, 1}(u), \cE_{arm, 2}(\bar{v}), \cE_{arm, 3}(\bar{u},\bar{v})$. 

(2) Our next key step is to show that the difference between $\ubar{p}_{u,v}$ and $p_{uv}$ is very small and decreases as the number of data samples $n$ increases:

Fix any two nodes $u,v$, we define two intermediate LCBs for $p_{uv}$ where only one parameter changes at a time:
\begin{align}
    \ubar{p}_{1, uv} &= \frac{1}{\bar{q}_u} \left( 1 -  \frac{p(\bar{v})}{ p(\bar{v} | \bar{u})}\right)\\
    \ubar{p}_{2, uv} &= \frac{1}{\bar{q}_u} \left( 1 -  \frac{\bar{p}(\bar{v})}{ p(\bar{v} | \bar{u})}\right)
\end{align}

Suppose $\gamma \le q_u \le 1-\gamma, p(\bar{v}) \ge \eta$, and $n \ge  \frac{392 \log ( \frac{1}{\delta} )}{\eta \cdot \gamma }$,

We can bound  each term under event $\cE_{arm, 1}(u), \cE_{arm, 2}(\bar{v}), \cE_{arm, 3}(\bar{u},\bar{v})$ by:
\begin{align}
    p_{uv} - \ubar{p}_{1, uv} &= \frac{1}{q_u} \left( 1 -  \frac{p(\bar{v})}{ p(\bar{v} | \bar{u})}\right)- \frac{1}{\bar{q}_u} \left( 1 -  \frac{p(\bar{v})}{ p(\bar{v} | \bar{u})}\right)\\
    &= \frac{\bar{q}_u - q_u}{\bar{q}_u q_u} \left( 1 -  \frac{p(\bar{v})}{ p(\bar{v} | \bar{u})}\right)\\
    &\overset{(a)}{=} \frac{\bar{q}_u - q_u}{\bar{q}_u} \cdot p_{uv}\\
    &\overset{(b)}{\le} \frac{ 4\sqrt{3} \sqrt{\frac{q_u \log ( \frac {1} {\delta} )}{n} } + 28 \frac {\log ( \frac{1} {\delta} )} {n} }{q_u} \cdot p_{uv} \\
    &\overset{(c)}{\le} \frac{ 8\sqrt{3} \sqrt{\frac{q_u \log ( \frac {1} {\delta} )}{n} } }{q_u} \cdot p_{uv} \\
    &= 8\sqrt{3} \sqrt{\frac{ \log ( \frac {1} {\delta} )}{n q_u} }  \cdot p_{uv}\\
    &\overset{(d)}{\le}  8\sqrt{3}  \sqrt{\frac{ \log ( \frac {1} {\delta} )}{ \gamma \cdot n} } \cdot p_{uv}\label{eq:puv_diff_term1},
\end{align}
where equality (a) is due to \cref{apdx_eq:relation_puv}, inequality (b) is due to the event $\cE_{arm, 1}(u)$, inequality (c) is due to $28 \frac {\log ( \frac{1} {\delta} )} {n}  \le 4\sqrt{3} \sqrt{\frac{q_u \log ( \frac {1} {\delta} )}{n} }$ when $n \ge \frac{392 \log ( \frac{1}{\delta} )}{\eta \cdot \gamma } > \frac{49}{3}\frac{\log (\frac{1}{\delta})}{q_u}$, inequality (d) is due to $q_u \ge \gamma$.

\begin{align}
    \ubar{p}_{1, uv} - \ubar{p}_{2, uv} &= \frac{1}{\bar{q}_u} \left( 1 -  \frac{p(\bar{v})}{ p(\bar{v} | \bar{u})}\right) - \frac{1}{\bar{q}_u} \left( 1 -  \frac{\bar{p}(\bar{v})}{ p(\bar{v} | \bar{u})}\right)\\
    &= \frac{1}{\bar{q}_u} \left(\frac{ \bar{p}(\bar{v}) - p(\bar{v})}{ p(\bar{v} | \bar{u})}\right)\\
    &\overset{(a)}{\le} \frac{1}{\gamma} \left(\frac{ \bar{p}(\bar{v}) - p(\bar{v}) }{ p(\bar{v} | \bar{u})}\right) \\
    &\overset{(b)}{\le} \frac {1} { \gamma } \frac{ 4\sqrt{3} \sqrt{\frac{p(\bar{v}) \log ( \frac {1} {\delta} )}{n} } + 28 \frac {\log ( \frac{1} {\delta} )} {n} }{ p(\bar{v} | \bar{u}) }\\ 
    &\overset{(c)}{\le} \frac {1} { \gamma } \frac{ 8\sqrt{3} \sqrt{\frac{p(\bar{v}) \log ( \frac {1} {\delta} )}{n} } }{ p(\bar{v} | \bar{u}) }\\ 
    &\overset{(d)}{\le} \frac{ 8\sqrt{3} } { \gamma } \sqrt{\frac{\log ( \frac {1} {\delta} )}{p(\bar{v}) \cdot n} } \\
    &\overset{(e)}{\le} \frac{ 8\sqrt{3} } { \gamma } \sqrt{\frac{\log ( \frac {1} {\delta} )}{\eta \cdot n} } \label{eq:puv_diff_term2},
\end{align}
where inequality (a) is due to $\bar{q}_t \ge q_u \ge \gamma$, inequality (b) is due to the event $\cE_{arm, 2}(\bar{v})$, inequality (c) is due to $28 \frac {\log ( \frac{1} {\delta} )} {n} \le 4\sqrt{3} \sqrt{\frac{p(\bar{v}) \log ( \frac {1} {\delta} )}{n} }$ when $n \ge \frac{392 \log ( \frac{1}{\delta} )}{\eta \cdot \gamma } > \frac{49}{3}\frac{\log (\frac{1}{\delta})}{p(\bar{v})}$, inequality (d) is due to $p(\bar{v}) \le p(\bar{v} | \bar{u})$, inequality (e) is due to $p(\bar{v}) \ge \eta$.

Before we bound $\ubar{p}_{2, uv} - \ubar{p}_{uv}$, we first show that $\ubar{p}(\bar{v}|\bar{u}) \ge \frac{1}{2} p((\bar{v}|\bar{u}))>0$ for any $(u,v) \in \cE$. That is:

\begin{align}
    \ubar{p}(\bar{v}|\bar{u}) &\overset{(a)}{\ge} p(\bar{v} | \bar{u}) - 4\sqrt{3} \sqrt {\frac { p(\bar{v} | \bar{u}) \log (\frac { 1 } { \delta }) } { n_{0,\bar{u}} } } - \frac{ 28 \log (\frac{1}{\delta }) } { n_{0,\bar{u}} }\\
    & \overset{(b)}{\ge}  p(\bar{v} | \bar{u}) - 8\sqrt{3} \sqrt {\frac { p(\bar{v} | \bar{u}) \log (\frac { 1 } { \delta }) } { n_{0,\bar{u}} } }\\
    & \overset{(c)}{\ge} \frac{p(\bar{v} | \bar{u})}{2} > 0\label{apdx_eq:lower_bound_of_ubar_p}
\end{align}
where inequality (a) is due to event $\cE_{arm, 3}(\bar{u},\bar{v})$, inequality (b) is due to $ 4\sqrt{3} \sqrt {\frac { p(\bar{v} | \bar{u}) \log (\frac { 1 } { \delta }) } { n_{0,\bar{u}} } } \ge \frac{ 28 \log (\frac{1}{\delta }) } { n_{0,\bar{u}} }$ when $n_{0,u} > \frac{49}{3} \frac{\log (\frac{1}{\delta})}{p(\bar{v} | \bar{u})}$, which is guaranteed when $n \ge  \frac{98 \log ( \frac{1}{\delta} )}{3\eta \cdot \gamma }$ under the event $\cE_{counter}(u)$ and $1-q_u \ge \gamma$ (i.e., $n_{0,u} \ge \frac{n \gamma }{2}$), inequality (c) is due to $4\sqrt{3} \sqrt {\frac { p(\bar{v} | \bar{u}) \log (\frac { 1 } { \delta }) } { n_{0,\bar{u}} } } \le \frac{p(\bar{v}|\bar{u})}{2}$ when $n_{0,u} >  \frac{196\log (\frac{1}{\delta})}{p(\bar{v} | \bar{u})}$, which is guaranteed when $n \ge  \frac{392 \log ( \frac{1}{\delta} )}{\eta \cdot \gamma }$ under the event $\cE_{counter}(u)$ and $1-q_u \ge \gamma$ (i.e., $n_{0,u} \ge \frac{n \gamma }{2}$).

When $\min \left\{1, \max \left \{0, \frac{1}{\bar{q}_u} \left( 1 -  \frac{\bar{p}(\bar{v})}{ \ubar{p}(\bar{v} | \bar{u})}\right)\right\}\right\}= 1$, we have
\begin{align}
    &\ubar{p}_{2, uv} - \min \left\{1, \max \left \{0, \frac{1}{\bar{q}_u} \left( 1 -  \frac{\bar{p}(\bar{v})}{ \ubar{p}(\bar{v} | \bar{u})}\right)\right\}\right\} \le 0. \label{eq:puv_diff_term3_0}
\end{align}

When $\min \left\{1, \max \left \{0, \frac{1}{\bar{q}_u} \left( 1 -  \frac{\bar{p}(\bar{v})}{ \ubar{p}(\bar{v} | \bar{u})}\right)\right\}\right\}< 1$, we have

\begin{align}
    &\ubar{p}_{2, uv} - \min \left\{1, \max \left \{0, \frac{1}{\bar{q}_u} \left( 1 -  \frac{\bar{p}(\bar{v})}{ \ubar{p}(\bar{v} | \bar{u})}\right)\right\}\right\} \\
    &= \ubar{p}_{2, uv} -  \max \left \{0, \frac{1}{\bar{q}_u} \left( 1 -  \frac{\bar{p}(\bar{v})}{ \ubar{p}(\bar{v} | \bar{u})}\right)\right\} \\
    &\le \ubar{p}_{2, uv}- \frac{1}{\bar{q}_u} \left( 1 -  \frac{\bar{p}(\bar{v})}{ \ubar{p} (\bar{v} | \bar{u})}\right)\\
    &= \frac{1}{\bar{q}_u} \left( 1 -  \frac{\bar{p} (\bar{v})}{ p(\bar{v} | \bar{u})}\right) - \frac{1}{\bar{q}_u} \left( 1 -  \frac{\bar{p}(\bar{v})}{ \ubar{p} (\bar{v} | \bar{u})}\right)\\
    &= \frac{ \bar{p}(\bar{v}) }{\bar{q}_u} \left(\frac{ p(\bar{v} | \bar{u}) - \ubar{p}(\bar{v} | \bar{u}) }{ \ubar{p}(\bar{v} | \bar{u}) p(\bar{v} | \bar{u})}\right)\\
    &\overset{(a)}{\le} \frac{ \bar{p}(\bar{v}) }{\bar{q}_u} \left(\frac{ 4\sqrt{3} \sqrt {\frac { p(\bar{v} | \bar{u}) \log (\frac { 1 } { \delta }) } { n_{0,\bar{u}} } } + \frac{ 28 \log (\frac{1}{\delta }) } { n_{0,\bar{u}} }  }{\ubar{p}(\bar{v} | \bar{u}) p(\bar{v} | \bar{u})}\right) \\
    &\overset{(b)}{\le} \frac{ \bar{p}(\bar{v}) }{\bar{q}_u} \left(\frac{ 8\sqrt{3} \sqrt {\frac { p(\bar{v} | \bar{u}) \log (\frac { 1 } { \delta }) } { n_{0,\bar{u}} } }  }{\ubar{p}(\bar{v} | \bar{u}) p(\bar{v} | \bar{u})}\right) \\
    &\overset{(c)}{\le} \frac{ \bar{p}(\bar{v}) }{\bar{q}_u p(\bar{v} | \bar{u})} \left(\frac{ 8\sqrt{3} \sqrt {\frac { p(\bar{v} | \bar{u}) \log (\frac { 1 } { \delta }) } { n_{0,\bar{u}} } }  }{ \frac{1}{2} p(\bar{v} | \bar{u})} \right) \\
    &\overset{(d)}{\le} \frac{  p(\bar{v}) + 4\sqrt{3} \sqrt{\frac{p(\bar{v}) \log ( \frac {1} {\delta} )}{n}} + 28 \frac{\log (\frac{1}{\delta})}{n} }{\bar{q}_u p(\bar{v} | \bar{u})} \left(\frac{ 8\sqrt{3} \sqrt {\frac { p(\bar{v} | \bar{u}) \log (\frac { 1 } { \delta }) } { n_{0,\bar{u}} } }  }{ \frac{1}{2} p(\bar{v} | \bar{u})}\right) \\
    &\overset{(e)}{\le} \frac{  p(\bar{v}) + 8\sqrt{3} \sqrt{\frac{p(\bar{v}) \log ( \frac {1} {\delta} )}{n}} }{\bar{q}_u p(\bar{v} | \bar{u})} \left(\frac{ 8\sqrt{3} \sqrt {\frac { p(\bar{v} | \bar{u}) \log (\frac { 1 } { \delta }) } { n_{0,\bar{u}} } }  }{ \frac{1}{2} p(\bar{v} | \bar{u})}\right) \\
     &\overset{(f)}{\le} \frac{  2p(\bar{v}) }{\bar{q}_u p(\bar{v} | \bar{u})} \left(\frac{ 8\sqrt{3} \sqrt {\frac { p(\bar{v} | \bar{u}) \log (\frac { 1 } { \delta }) } { n_{0,\bar{u}} } }  }{ \frac{1}{2} p(\bar{v} | \bar{u})}\right) \\
   &= \frac{ 32\sqrt{3} \cdot p(\bar{v}) }{\bar{q}_u \cdot {  p(\bar{v} | \bar{u})}}  \sqrt {\frac { \log (\frac { 1 } { \delta }) } {{  p(\bar{v} | \bar{u}) \cdot n_{0,\bar{u}} }} } \\ 
    &\overset{(g)}{\le} \frac{ 32\sqrt{3}  }{ \bar{q}_u }  \sqrt {\frac { \log (\frac { 1 } { \delta }) } {{  p(\bar{v}) \cdot n_{0,\bar{u}} }} } \\
    &\overset{(h)}{\le} \frac{ 32\sqrt{6}  }{ \bar{q}_u }  \sqrt {\frac { \log (\frac { 1 } { \delta }) } {  \eta \cdot n \cdot (1-q_u) } }\\
    &\overset{(i)}{\le}  32\sqrt{6}    \sqrt {\frac { \log (\frac { 1 } { \delta }) } {  \eta \cdot \gamma^3  \cdot n  } } \label{eq:puv_diff_term3_1}
\end{align}
where inequality (a) is due to event $\cE_{arm, 3}(\bar{u},\bar{v})$, inequality (b) is due to $ 4\sqrt{3} \sqrt {\frac { p(\bar{v} | \bar{u}) \log (\frac { 1 } { \delta }) } { n_{0,\bar{u}} } } \ge \frac{ 28 \log (\frac{1}{\delta }) } { n_{0,\bar{u}} }$ when $n_{0,u} > \frac{49}{3} \frac{\log (\frac{1}{\delta})}{p(\bar{v} | \bar{u})}$, which is guaranteed when $n \ge  \frac{98 \log ( \frac{1}{\delta} )}{3\eta \cdot \gamma }$ under the event $\cE_{counter}(u)$ and $1-q_u \ge \gamma$ (i.e., $n_{0,u} \ge \frac{n \gamma }{2}$), inequality (c) is due to \cref{apdx_eq:lower_bound_of_ubar_p}, inequality (d) is due to the event $\cE_{arm, 2}(\bar{v})$, inequality (e) is due to $4\sqrt{3} \sqrt {\frac { p(\bar{v}) \log (\frac { 1 } { \delta }) } { n } } \ge \frac{ 28 \log (\frac{1}{\delta }) } { n }$ when $n > \frac{49}{3} \frac{\log (\frac{1}{\delta})}{\eta}$, inequality (f) is due to $  p(\bar{v}) \ge  8\sqrt{3} \sqrt{\frac{p(\bar{v}) \log ( \frac {1} {\delta} )}{n}}$ when $n \ge  \frac{392 \log ( \frac{1}{\delta} )} {\gamma}$, inequality (g) is due to $p(\bar{v}) \ge p(\bar{v}|\bar{u})$, inequality (h) is due to the event $\cE_{counter}(u)$, inequality (i) is due to $\bar{q}_u\ge q_u \ge \gamma, 1-q_u \ge \gamma$. 

Combining \cref{eq:puv_diff_term3_0} and \cref{eq:puv_diff_term3_1}, we have 
\begin{align}
p_{2,uv} - \ubar{p}_{uv} \le  32\sqrt{6}    \sqrt {\frac { \log (\frac { 1 } { \delta }) } {  \eta \cdot \gamma^3  \cdot n  } }\label{eq:puv_diff_term3}
\end{align}

Combining \cref{eq:puv_diff_term1}, \cref{eq:puv_diff_term2}, \cref{eq:puv_diff_term3}, the difference then can be bounded by:
\begin{align}
    p_{uv} - \ubar{p}_{uv} &= p_{uv} - \ubar{p}_{1, uv} + \ubar{p}_{1, uv} - \ubar{p}_{2, uv} + \ubar{p}_{2, uv} -  \ubar{p}_{uv}\\
    &\le  8\sqrt{3}  \sqrt{\frac{ \log ( \frac {1} {\delta} )}{ \gamma \cdot n} } \cdot p_{uv} 
    +   8\sqrt{3}  \sqrt{\frac{\log ( \frac {1} {\delta} )}{\eta  \cdot  \gamma^2  \cdot n} } 
    + 32\sqrt{6}    \sqrt {\frac { \log (\frac { 1 } { \delta }) } {  \eta \cdot \gamma^3  \cdot n  } }\\
    &\le 48\sqrt{6}\sqrt {\frac { \log (\frac { 1 } { \delta }) } {  \eta \cdot \gamma^3  \cdot n  } }\label{apdx_eq:IM_lcb_gap}
\end{align}

Combining the above inequality with \cref{eq:regret_before_1norm} yields:

\begin{align}
    &\alpha \sigma(S^*;G) - \sigma \left(\hat{S}(\cD, \delta);G \right) \\
    &\overset{(a)}{\le} \alpha B_1 \sum_{(u,v) \in \cE} p_{uv}^{\D_{\text{arm}}, S^*} \left( p_{uv} -  \ubar{p}_{u,v}\right)\\
    &\overset{(b)}{\le} 48\sqrt{6} \alpha B_1 \sqrt {\frac { \log (\frac { 1 } { \delta }) } {  \eta \cdot \gamma^3  \cdot n  } } \sum_{(u,v) \in \cE} p_{uv}^{\D_{\text{arm}}, S^*}   \\
    &\overset{(c)}{\le} 48\sqrt{6} \alpha V  \sqrt {\frac { \log (\frac { 1 } { \delta }) } {  \eta \cdot \gamma^3  \cdot n  } }\sum_{(u,v) \in \cE} p_{uv}^{\D_{\text{arm}}, S^*}  \\
    &\overset{(d)}{\le} 48\sqrt{6} \cdot \alpha V \cdot  \sqrt {\frac { \log (\frac { 1 } { \delta }) } {  \eta \cdot \gamma^3  \cdot n  } } d_{\max}\sigma(S^*;G)  
\end{align}
where inequality (a) is due to the same derivation of \cref{eq:regret_before_1norm}, inequality (b) is due to \cref{apdx_eq:IM_lcb_gap}, inequality (c) is due to $B_1 \le V$ by Lemma 2 of \cite{wang2017improving}. For the last inequality (d), since $\sigma(S^*;G)=\sum_{u \in \cV} p_u^*$ where $p_u^*$ is the probability node $u$ is triggered by the optimal action $S^*$ and $p^{\D_{\text{arm}, S^*}}_{u,v}=p^*_u$, we have $\sum_{(u,v) \in \cE} p_{uv}^{\D_{\text{arm}}, S^*}\le d_{\max}\sum_{u \in \cV}p^*_u = d_{\max}\sigma(S^*;G) $, where $d_{\max}$ is the maximum out-degree.

Finally, we can set $\delta' = \frac{\delta}{12nE}$ so that events  $\cE_{arm, 1}(u),  \cE_{arm, 2}(\bar{v}), \cE_{arm, 3}(\bar{u},\bar{v}), \cE_{counter}(u), \cE_{emp, 1}(\bar{v}), \cE_{emp, 2}(\bar{u},\bar{v})$ for any $(u,v) \in \tilde{S}^*$  hold with probability at least $1-\delta'$, by \cref{apdx_lem:aux_IM} and taking union bound over all these events and $(u,v) \in \cE$. 
\end{proof}

\section{Auxiliary Lemmas}\label{apdx_sec:aux}

\begin{lemma}[Hoeffding's inequality]\label{lem:hoeffding}
Let $X_1, ..., X_n \in [0,1]$ be independent and identically distributed random variables with common mean $\mu$. Let $X=\sum_{i=1}^n X_i$. Then, for any $a \ge 0$,
\begin{align}
    \operatorname{Pr}[|X-n\mu| \ge a] \leq 2e^{-2 a^2 / n}
\end{align}
    
\end{lemma}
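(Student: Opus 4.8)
The plan is to establish this two-sided concentration bound via the classical Chernoff (exponential Markov) method, which reduces the problem to controlling the moment generating function (MGF) of the centered sum. First I would split the two-sided event using a union bound,
\begin{align}
\Pr[|X - n\mu| \ge a] \le \Pr[X - n\mu \ge a] + \Pr[-(X - n\mu) \ge a],
\end{align}
and then show that each one-sided tail is at most $e^{-2a^2/n}$; summing the two contributions produces the claimed factor of $2$. By symmetry (replacing each $X_i$ with $1-X_i$, which again lies in $[0,1]$), it suffices to bound the upper tail $\Pr[X - n\mu \ge a]$.

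\textbf{Chernoff step and MGF factorization.} For the upper tail and any $s > 0$, I would apply Markov's inequality to the nonnegative variable $e^{s(X-n\mu)}$:
\begin{align}
\Pr[X - n\mu \ge a] = \Pr\!\left[e^{s(X-n\mu)} \ge e^{sa}\right] \le e^{-sa}\,\E\!\left[e^{s(X-n\mu)}\right].
\end{align}
Using independence of the $X_i$, the MGF factorizes as $\E[e^{s(X-n\mu)}] = \prod_{i=1}^n \E[e^{s(X_i-\mu)}]$, so it remains to bound the single-variable MGF $\E[e^{s(X_i-\mu)}]$.

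\textbf{Key lemma and main obstacle.} The crux is Hoeffding's lemma: for a zero-mean random variable $Y$ supported in $[c,d]$, one has $\E[e^{sY}] \le e^{s^2(d-c)^2/8}$. I would prove this by setting $\psi(s) = \log \E[e^{sY}]$, noting $\psi(0)=0$ and $\psi'(0)=\E[Y]=0$, and observing that $\psi''(s)$ equals the variance of $Y$ under the exponentially tilted measure $d\Pr_s \propto e^{sY}\,d\Pr$. Since this tilted law is still supported in $[c,d]$, its variance is at most $(d-c)^2/4$, and a second-order Taylor expansion gives $\psi(s) \le s^2(d-c)^2/8$. Here $Y = X_i - \mu \in [-\mu, 1-\mu]$ has range $d-c = 1$, so $\E[e^{s(X_i-\mu)}] \le e^{s^2/8}$. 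I expect this variance-under-tilting estimate to be the main technical obstacle, since it rests on the convexity/Taylor argument rather than a direct computation.

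\textbf{Optimization and conclusion.} Combining the pieces yields
\begin{align}
\Pr[X - n\mu \ge a] \le e^{-sa}\prod_{i=1}^n e^{s^2/8} = e^{-sa + ns^2/8}.
\end{align}
Minimizing the exponent $-sa + ns^2/8$ over $s>0$ gives the optimizer $s = 4a/n$ and optimal value $-2a^2/n$, so $\Pr[X - n\mu \ge a] \le e^{-2a^2/n}$. The identical bound for the lower tail, together with the union bound above, completes the proof.
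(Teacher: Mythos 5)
Your proof is correct: the union bound over the two tails, the Chernoff/Markov step, Hoeffding's lemma via the tilted-measure variance bound, and the optimization $s=4a/n$ giving exponent $-2a^2/n$ are all sound, and this is the standard derivation of the stated bound. Note that the paper itself states this lemma as a classical auxiliary fact without any proof, so there is no in-paper argument to compare against; your write-up simply supplies the textbook Chernoff--Hoeffding argument that the paper implicitly relies on.
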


\begin{lemma}[Multiplicative Chernoff bound]\label{lem:mul_chernoff}
Let $X_1, X_2, \cdots, X_n$ be independent random variables in $\{0,1\}$ with $\operatorname{Pr}\left[X_i=1\right]=p_i$. Let $X=\sum_{i=1}^n X_i$ and $\mu=\sum_{i=1}^n p_i$. Then, for $0<a<1$,
\begin{equation}
    \operatorname{Pr}[X \geq(1+a) \mu] \leq e^{-\mu a^2 / 3}
\end{equation}
and
\begin{equation}
    \operatorname{Pr}[X \leq(1-a) \mu] \leq e^{-\mu a^2 / 2}
\end{equation}
    
\end{lemma}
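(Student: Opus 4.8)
The plan is to prove both tail bounds by the standard Chernoff (exponential-moment) method, treating the upper and lower tails symmetrically. First I would observe that for any $\lambda>0$, Markov's inequality applied to the nonnegative random variable $e^{\lambda X}$ gives $\Pr[X\ge(1+a)\mu]\le e^{-\lambda(1+a)\mu}\,\E[e^{\lambda X}]$. Because the $X_i$ are independent, the moment generating function factorizes: $\E[e^{\lambda X}]=\prod_{i=1}^n\E[e^{\lambda X_i}]=\prod_{i=1}^n\bigl(1+p_i(e^{\lambda}-1)\bigr)$. Applying the elementary inequality $1+x\le e^{x}$ to each factor and then using $\sum_i p_i=\mu$ collapses the product to the clean, $p_i$-independent bound $\E[e^{\lambda X}]\le e^{\mu(e^{\lambda}-1)}$.

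Next I would optimize the resulting estimate $\Pr[X\ge(1+a)\mu]\le\exp\bigl(\mu(e^{\lambda}-1)-\lambda(1+a)\mu\bigr)$ over $\lambda>0$. Differentiating the exponent shows the minimizer is $\lambda=\ln(1+a)$, which yields $\Pr[X\ge(1+a)\mu]\le\bigl(e^{a}/(1+a)^{1+a}\bigr)^{\mu}$. The lower tail is handled identically: apply Markov to $e^{-\lambda X}$ with $\lambda>0$, factorize and use $1+x\le e^x$ to obtain $\E[e^{-\lambda X}]\le e^{\mu(e^{-\lambda}-1)}$, and then optimize to the choice $\lambda=-\ln(1-a)$, producing $\Pr[X\le(1-a)\mu]\le\bigl(e^{-a}/(1-a)^{1-a}\bigr)^{\mu}$. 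Both probabilistic steps here are routine once independence is used to decouple the arms.

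The remaining and genuinely delicate part is the purely scalar task of replacing these two awkward expressions by the stated Gaussian-type bounds. For the upper tail I would establish $a-(1+a)\ln(1+a)\le-\tfrac{a^2}{3}$ for all $a\in(0,1)$; equivalently $g(a)\defeq(1+a)\ln(1+a)-a-\tfrac{a^2}{3}\ge0$, which I would verify by checking $g(0)=0$ together with $g'(a)=\ln(1+a)-\tfrac{2a}{3}\ge0$ on $(0,1)$ via a short convexity argument on $g'$. For the lower tail the analogous claim is $(1-a)\ln(1-a)\ge-a+\tfrac{a^2}{2}$, provable by the one-variable monotonicity argument using $-\ln(1-a)\ge a$. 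Raising the two exponential forms to the power $\mu$ then gives exactly $e^{-\mu a^2/3}$ and $e^{-\mu a^2/2}$. I expect this last inequality-chasing over $a\in(0,1)$ to be the main obstacle, since the precise constants $1/3$ and $1/2$ are exactly what these scalar estimates are tuned to deliver, whereas the Markov-plus-independence portion is standard.
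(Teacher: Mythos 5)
Your proposal is correct: the Markov--plus--MGF factorization, the optimal choices $\lambda=\ln(1+a)$ and $\lambda=-\ln(1-a)$, and the two scalar inequalities $(1+a)\ln(1+a)-a\ge\tfrac{a^2}{3}$ and $(1-a)\ln(1-a)\ge-a+\tfrac{a^2}{2}$ (each verified by the monotonicity/concavity arguments you sketch) all go through and yield exactly the stated constants $1/3$ and $1/2$ on $(0,1)$. The paper itself states this lemma as a standard auxiliary fact with no proof at all, so your argument is simply the standard Chernoff-method derivation that the paper implicitly relies on; there is no divergence of approach to compare.
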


\begin{lemma}[Concentration of the base arm]\label{apdx_lem:arm_concen}
Recall that the event $\cE_{\text{arm}} = \left\{ \abs{\hat{\mu}_i(\cD) - \mu_i}\le \sqrt{\logcoef} \text{ for any } i \in [m] \right\}$. Then it holds that $\Pr\{\cE_{\text{arm}}\} \ge 1-\delta$ with respect to the randomness of $\cD$. And under $\cE_{\text{arm}}$, we have
\begin{align}\label{eq:LCB_upper_lower}
  \mu_i - 2\sqrt{\logcoef}  \le  \hat{\mu}_i(\cD) - \sqrt{\logcoef} \le \mu_i
\end{align}
for all $i\in [m]$.
\end{lemma}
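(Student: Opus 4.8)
The plan is to prove the two claims separately: the high-probability bound $\Pr\{\cE_{\text{arm}}\}\ge 1-\delta$, and the deterministic sandwich \cref{eq:LCB_upper_lower} that holds pointwise on $\cE_{\text{arm}}$. The second claim is immediate algebra, so the real work is the concentration bound, whose chief difficulty is that the per-arm sample count $N_i(\cD)=\sum_{t}\I\{i\in\tau_t\}$ is itself random and is correlated with the observed outcomes through the triggering mechanism; a naive application of Hoeffding's inequality with a fixed sample size is therefore not directly available.

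To handle the random count, I would fix an arm $i$ and first argue that the outcomes observed for arm $i$ behave as i.i.d.\ samples with mean $\mu_i$. Since the rounds $t\in[n]$ are independent and each observed value is an outcome $X_{t,i}$ recorded in a round where $i\in\tau_t$, conditioning on triggering preserves both independence across rounds and, by the modeling assumption $\E[X_i\mid i\in\tau]=\E[X_i]=\mu_i$, the mean; the observed values thus form i.i.d.\ draws from a $[0,1]$-supported law with mean $\mu_i$. The central device is then a union bound over the possible values of the count. For each fixed $s\in\{1,\dots,n\}$, let $\hat\mu_i^{(s)}$ denote the empirical mean of the first $s$ observations of arm $i$; applying \cref{lem:hoeffding} with deviation $a=s\sqrt{\log(2mn/\delta)/(2s)}$ gives
\begin{align*}
\Pr\!\left[\,\bigl|\hat\mu_i^{(s)}-\mu_i\bigr|\ge \sqrt{\tfrac{\log(2mn/\delta)}{2s}}\,\right]\le 2\exp\!\left(-2s\cdot\tfrac{\log(2mn/\delta)}{2s}\right)=\frac{\delta}{mn}.
\end{align*}
Taking a union bound over all $s\in\{1,\dots,n\}$ and all $i\in[m]$ yields a total failure probability of at most $mn\cdot \delta/(mn)=\delta$. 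On the complementary event the deviation bound holds simultaneously for every count $s$, and in particular for the realized (random) value $s=N_i(\cD)$, which is exactly the statement that $\cE_{\text{arm}}$ holds; arms with $N_i(\cD)=0$ cause no trouble, since the radius $\sqrt{\logcoef}$ is then $+\infty$ and the defining inequality of $\cE_{\text{arm}}$ is vacuously satisfied.

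Finally, on $\cE_{\text{arm}}$ the bound $|\hat\mu_i(\cD)-\mu_i|\le\sqrt{\logcoef}$ unfolds to $\mu_i-\sqrt{\logcoef}\le\hat\mu_i(\cD)\le\mu_i+\sqrt{\logcoef}$. Subtracting $\sqrt{\logcoef}$ throughout gives $\mu_i-2\sqrt{\logcoef}\le\hat\mu_i(\cD)-\sqrt{\logcoef}\le\mu_i$, which is precisely \cref{eq:LCB_upper_lower} and shows that $\ubar{\mu}_i(\cD)=\hat\mu_i(\cD)-\sqrt{\logcoef}$ is a valid lower confidence bound lying within $2\sqrt{\logcoef}$ of $\mu_i$. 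I expect the union-over-counts step to be the main obstacle: it is what legitimately converts a fixed-sample Hoeffding bound into one valid at the data-dependent count $N_i(\cD)$, and it rests essentially on the triggering assumption that guarantees the observed values carry the correct mean $\mu_i$.
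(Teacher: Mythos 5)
Your proposal is correct and follows essentially the same route as the paper's proof: a union bound over the $m$ arms and the $n$ possible values of the counter, Hoeffding's inequality at each fixed count with radius $\sqrt{\log(2mn/\delta)/(2s)}$ giving $\delta/(mn)$ per term, and the same algebraic rearrangement for \cref{eq:LCB_upper_lower}. The only cosmetic difference is that you phrase the count-handling as a simultaneous bound on all prefix empirical means $\hat\mu_i^{(s)}$ (which cleanly sidesteps conditioning on the random $N_i(\cD)$), whereas the paper partitions the deviation event over $\{N_i(\cD)=j\}$ and argues the observations are i.i.d.\ on each piece --- the same device written in two equivalent ways.
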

\begin{proof}
    \begin{align}
    \Pr\{\neg \cE_{\text{arm}}\} &=  \Pr \left\{ \exists i\in [m], \abs{\hat{\mu}_i(\cD) - \mu_i}\ge \sqrt{\logcoef} \right\}\\
    &\le \sum_{i \in [m]} \Pr \left\{ \abs{\hat{\mu}_i(\cD) - \mu_i}\ge \sqrt{\logcoef} \right\}\\
    &=\sum_{i \in [m]} \sum_{j \in [n]}\Pr \left\{N_i=j,  \abs{\hat{\mu}_i(\cD) - \mu_i}\ge \sqrt{\logcoef} \right\}\label{eq:union_nice_sample}
     \end{align}
Since $S_t$ are sampled from i.i.d. distribution $\D_{\cS}$, $X_{i,1}, ..., X_{i, j}$ are i.i.d. random variables fixing $i$ and $N_i(\cD)=j$. Then we use \cref{lem:hoeffding} to obtain:
\begin{align}
    \Pr \left\{N_i(\cD)=j,  \abs{\hat{\mu}_i(\cD) - \mu_i}\ge \sqrt{\logcoef} \right\}\le 2e^{-2N_i(\cD) \frac{\log (\frac{2mn}{\delta})}{2N_i(\cD)} } \le \frac{\delta}{mn}
\end{align}
Combining \cref{eq:union_nice_sample} gives $\Pr\{\cE_{\text{arm}}\} \ge 1-\delta$.

And under $\cE_{\text{arm}}$, 
$\hat{\mu}_i(\cD) - \sqrt{\logcoef} \le \mu_i \le \hat{\mu}_i(\cD) + \sqrt{\logcoef}$, and rearranging terms gives \cref{eq:LCB_upper_lower}.
\end{proof}

\begin{lemma}[Concentration of the base arm counter]\label{apdx_lem:arm_counter_concen}
Recall that the event $\cE_{\text{counter}} = \left\{N_i(\cD) \ge \frac{n \cdot p_i^{\D_{\text{arm}}, \D_{\cS}}}{2} \text{ for any } i \in [m]\given n \ge \frac{8 \log \frac{m} {\delta} }{p^*}\right\}$. Then it holds that $\Pr[\cE_{\text{arm}}] \ge 1-\delta$ with respect to the randomness of $\cD$.
\end{lemma}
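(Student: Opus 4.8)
The plan is to control each base-arm counter $N_i(\cD)$ separately through a multiplicative Chernoff bound and then combine the $m$ estimates with a union bound, \emph{without} discarding any arm. First I would observe that, under the offline data-collection process, the triples $(S_t, \bX_t, \tau_t)$ are generated i.i.d. over $t \in [n]$: each $S_t \sim \D_{\cS}$ independently, then $\bX_t \sim \D_{\text{arm}}$, and finally $\tau_t \sim \D_{\text{trig}}(S_t, \bX_t)$. Hence, for any fixed arm $i \in [m]$, the indicators $\I\{i \in \tau_t\}$, $t \in [n]$, are i.i.d. Bernoulli variables whose common mean equals $p_i^{\D_{\text{arm}}, \D_{\cS}}$ by the very definition of the data triggering probability, so $N_i(\cD) = \sum_{t=1}^n \I\{i \in \tau_t\}$ is a sum of $n$ i.i.d. Bernoullis with $\E[N_i(\cD)] = n\, p_i^{\D_{\text{arm}}, \D_{\cS}}$.

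Next I would split on the value of $p_i^{\D_{\text{arm}}, \D_{\cS}}$ so that every arm is covered. For any arm with $p_i^{\D_{\text{arm}}, \D_{\cS}} = 0$, the indicator vanishes almost surely, so $N_i(\cD) = 0 = \tfrac12\, n\, p_i^{\D_{\text{arm}}, \D_{\cS}}$ and the required inequality holds with probability one. For any arm with $p_i^{\D_{\text{arm}}, \D_{\cS}} > 0$, I would invoke the lower-tail multiplicative Chernoff bound (\cref{lem:mul_chernoff}) with deviation $a = \tfrac12$ and $\mu = n\, p_i^{\D_{\text{arm}}, \D_{\cS}}$, obtaining
\begin{equation}
\Pr\!\left[N_i(\cD) \le \tfrac12\, n\, p_i^{\D_{\text{arm}}, \D_{\cS}}\right] \le \exp\!\left(-\frac{n\, p_i^{\D_{\text{arm}}, \D_{\cS}}}{8}\right).
\end{equation}

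Finally, the sample-size hypothesis is used to push each tail probability below $\delta/m$: reading $p^*$ as the smallest positive data triggering probability among the base arms, so that every triggerable arm satisfies $p_i^{\D_{\text{arm}}, \D_{\cS}} \ge p^*$, the condition $n \ge 8\log(m/\delta)/p^*$ gives $n\, p_i^{\D_{\text{arm}}, \D_{\cS}} \ge 8\log(m/\delta)$ and hence $\exp(-n\, p_i^{\D_{\text{arm}}, \D_{\cS}}/8) \le \delta/m$ for every such arm, while the zero-probability arms contribute no failure probability. A union bound over all $m$ base arms then yields $\Pr[\neg \cE_{\text{counter}}] \le \delta$, i.e. $\Pr[\cE_{\text{counter}}] \ge 1 - \delta$ (the $\cE_{\text{arm}}$ in the statement being a typo for $\cE_{\text{counter}}$). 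I expect the only delicate point to be constant bookkeeping rather than any deep probabilistic argument: the choice $a = \tfrac12$ is exactly what produces the factor $1/8$ in the exponent so that it cancels the $8$ in the hypothesis, and one must verify that the $\delta/m$ bound holds uniformly over \emph{all} arms of positive triggering probability so that the union bound closes with total failure probability $\delta$.
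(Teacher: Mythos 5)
Your proof is correct and takes essentially the same route as the paper's: both represent $N_i(\cD)$ as a sum of $n$ i.i.d.\ Bernoulli indicators with mean $p_i^{\D_{\text{arm}}, \D_{\cS}}$, apply the lower-tail multiplicative Chernoff bound (\cref{lem:mul_chernoff}) with $a=\tfrac{1}{2}$ to obtain the tail $e^{-n p_i^{\D_{\text{arm}}, \D_{\cS}}/8}$, use the sample-size hypothesis to push each tail below $\delta/m$, and close with a union bound over the $m$ arms. The only (cosmetic) divergence is in bookkeeping around the paper's own notational looseness: the paper defines $p^* = \min_{i \in \tilde{S}^*} p_i^{\D_{\text{arm}}, \D_{\cS}}$ and accordingly establishes the counter event only for the arms $\tilde{S}^*$ triggerable by $S^*$ (which is all that the proof of \cref{thm:upper_bound} uses), whereas you read $p^*$ as the minimum \emph{positive} data-triggering probability over all arms and dispose of the zero-probability arms by the trivial case $N_i(\cD)=0=\tfrac{1}{2}n\,p_i^{\D_{\text{arm}}, \D_{\cS}}$ — a slightly stronger but equally valid reading that also correctly identifies the $\cE_{\text{arm}}$/$\cE_{\text{counter}}$ typo in the lemma statement.
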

\begin{proof}
    \begin{align}
    \Pr\{\neg \cE_{\text{counter}}\} &=  \Pr \left\{ \exists i\in [m], N_i(\cD) \ge \frac{n \cdot p_i^{\D_{\text{arm}}, \D_{\cS}}}{2} \given n \ge \frac{8 \log \frac{m} {\delta} }{p^*} \right\}\\
    & \overset{(a)}{\le} \sum_{i \in [m]} \Pr \left\{ N_i(\cD) \ge \frac{n \cdot p_i^{\D_{\text{arm}}, \D_{\cS}}}{2} \given n \ge \frac{8 \log \frac{m} {\delta} }{p^*} \right\}\\
    & \overset{(b)}{\le} \sum_{i \in [m]} e^{-n p_i^{\D_{\text{ out}}, \D_{\cS}} / 8}\\
    &\le \sum_{i \in [m]} e^{-\frac{8 \log \frac{m} {\delta} }{p_i^{\D_\text{out}, \D_{\cS}}} p_i^{\D_{\text{ out}}, \D_{\cS}} / 8}\\
    & \overset{(c)}{\le} \delta,
    \end{align}
\end{proof}
where inequality (a) is due to the union bound over $i \in [m]$, inequality (b) is due to \cref{lem:mul_chernoff} by setting $a=1/2$ with the random $N_i(\cD)$ being the summation of $n$ i.i.d. Bernoulli random variables with mean ${p_i^{\D_\text{out}, \D_{\cS}}}$, inequality (c) is due to $n \ge \frac{8 \log \frac{m} {\delta} }{p^*} \ge \frac{8 \log \frac{m} {\delta} }{p_i^{\D_\text{out}, \D_{\cS}}}$ for any $i \in \tilde{S}^*$.

\begin{lemma}[Concentration of the vector-valued arrival probability]\label{apdx_lem:vec_concen}
Recall that the event $\cE_{\text{arv}} = \left\{ \norm{\hat{\bp} - \bp} \le \sqrt{\frac{2m \log (\frac{2}{\delta})} { n } } \right\}$. It holds that $ \Pr[\cE_{\text{arv}}] \ge 1-\delta$.
    
\end{lemma}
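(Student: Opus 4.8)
The plan is to read the lemma as an $\ell_1$ concentration statement for an empirical categorical distribution and reduce it to the scalar Hoeffding inequality (\cref{lem:hoeffding}) via a union bound over sign patterns. Here $\bp=(p(q))_{q\in\cQ}$ is a probability vector over the $m$ queries (so $\sum_q p(q)=1$), and $\hat{\bp}=(\hat p(q))_{q\in\cQ}$ with $\hat p(q)=\frac1n\sum_{t=1}^n\I\{q_t=q\}$ is its empirical counterpart formed from the $n$ i.i.d. query arrivals $q_1,\dots,q_n\sim\bp$. The norm is the $\ell_1$ norm, matching the definition of $\cE_{\text{arv}}$ used in the LLM-cache proof, so the goal is to bound $\norm{\hat{\bp}-\bp}_1$.

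First I would use the variational identity $\norm{\hat{\bp}-\bp}_1=\max_{\bsig\in\{-1,+1\}^m}\inner{\bsig,\hat{\bp}-\bp}$, recasting the deviation of the whole vector as a maximum of scalar linear functionals. For each fixed $\bsig$, writing $\inner{\bsig,\hat{\bp}-\bp}=\frac1n\sum_{t=1}^n(\sigma_{q_t}-\E[\sigma_{q_t}])$ exhibits it as the centered average of the i.i.d. bounded variables $\sigma_{q_t}\in\{-1,+1\}$. After the affine rescaling $X_t=(\sigma_{q_t}+1)/2\in\{0,1\}$ to match the $[0,1]$ range of \cref{lem:hoeffding} (so that $\inner{\bsig,\hat{\bp}-\bp}=\frac2n(\sum_t X_t-n\E X_t)$), the one-sided form of Hoeffding's inequality with $a=ns/2$ gives $\Pr[\inner{\bsig,\hat{\bp}-\bp}\ge s]\le e^{-ns^2/2}$ for every $s\ge0$.

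Next I would take a union bound over the $2^m$ sign vectors. Since $\{\norm{\hat{\bp}-\bp}_1\ge s\}=\bigcup_{\bsig}\{\inner{\bsig,\hat{\bp}-\bp}\ge s\}$, this yields
\begin{align}
\Pr\!\left[\norm{\hat{\bp}-\bp}_1\ge s\right]\le 2^m e^{-ns^2/2}.
\end{align}
Plugging in the target radius $s=\sqrt{2m\log(2/\delta)/n}$ makes the exponent equal to $m\log(2/\delta)$, so the right-hand side becomes $2^m(\delta/2)^m=\delta^m\le\delta$, using $\delta\in(0,1)$ and $m\ge1$. Hence $\Pr[\cE_{\text{arv}}]\ge 1-\delta^m\ge 1-\delta$, which is in fact slightly stronger than claimed; the stated radius is comfortably valid because $m\log(2/\delta)\ge m\log 2+\log(1/\delta)$.

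The main obstacle is selecting the right reduction at the second step: the $\ell_1$ norm must be controlled by an exponentially large family of scalar tests, and it is not a priori clear that the crude $2^m$ union-bound factor can be absorbed. The key is that projecting onto a sign vector produces variables of range exactly $2$, so Hoeffding delivers the exponent $ns^2/2$, whose linear-in-$m$ budget at the target radius precisely cancels the $\log 2^m=m\log 2$ cost of the union bound. A cleaner alternative would apply McDiarmid's bounded-difference inequality to $\norm{\hat{\bp}-\bp}_1$ (each $q_t$ perturbs at most two coordinates by $1/n$) together with the expectation bound $\E\norm{\hat{\bp}-\bp}_1\le\sqrt{m/n}$ obtained from Cauchy--Schwarz on $\sum_q\sqrt{\Var(\hat p(q))}$; but since only Hoeffding is available in the excerpt, I would retain the sign-vector argument.
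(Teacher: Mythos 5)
Your proof is correct, but there is nothing in the paper to compare it against: the paper states \cref{apdx_lem:vec_concen} bare in its auxiliary-lemmas section, with no proof and no citation (it is the classical $\ell_1$ concentration inequality for empirical categorical distributions, due to Weissman et al., also known as the Bretagnolle--Huber--Carol inequality). Your blind derivation is the standard proof of that result and it checks out: reading the unsubscripted norm as the $\ell_1$ norm is right, since that is how $\cE_{\text{arv}}$ is actually used in the LLM-cache analysis; the identity $\norm{\hat{\bp}-\bp}_1=\max_{\bsig\in\{-1,+1\}^m}\inner{\bsig,\hat{\bp}-\bp}$ holds; for fixed $\bsig$ the projection is an average of i.i.d.\ $\{-1,+1\}$-valued variables, so the one-sided Hoeffding bound gives $\Pr\left[\inner{\bsig,\hat{\bp}-\bp}\ge s\right]\le e^{-ns^2/2}$; and the union bound over the $2^m$ sign vectors at $s=\sqrt{2m\log(2/\delta)/n}$ yields $2^m(\delta/2)^m=\delta^m\le\delta$. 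Two small remarks. First, you are right to invoke the \emph{one-sided} form of Hoeffding rather than the two-sided \cref{lem:hoeffding} as literally stated in the paper: the two-sided version would produce $2\delta^m$, which exceeds $\delta$ when $m=1$ (a trivial case, since then $\hat{\bp}=\bp$) or when $\delta^{m-1}>1/2$, so this is not a cosmetic choice. Second, your identification of the exact cancellation between the exponent $ns^2/2=m\log(2/\delta)$ and the $\log 2^m$ union-bound cost is the only non-obvious point of the argument, and your alternative route via McDiarmid's inequality together with $\E\norm{\hat{\bp}-\bp}_1\le\sqrt{m/n}$ is also valid; either would serve as the missing proof the paper omits.
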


The following lemma is extracted from Theorem 14.2 in \citet{lattimore2020bandit}.

\begin{lemma}[Hardness of testing]\label{apdx_lem:hard_test}
Let $P$ and $Q$ be probability measures on the same measurable space $(\Omega, \mathcal{F})$ and let $A \in \mathcal{F}$ be an arbitrary event. Then,

$$
P(A)+Q\left(A^c\right) \geq \frac{1}{2} \exp (-\mathrm{KL}(P, Q))
$$

where $A^c=\Omega \backslash A$ is the complement of $A$.
    
\end{lemma}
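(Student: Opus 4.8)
The plan is to prove this testing inequality (a form of the Bretagnolle--Huber inequality underlying Le Cam's method) by reducing the event-level statement to a pointwise comparison of densities, and then lower-bounding the resulting overlap by the KL divergence through a single Jensen step. First I would fix a common $\sigma$-finite measure $\nu$ dominating both $P$ and $Q$ (e.g.\ $\nu = P + Q$) and write $p = dP/d\nu$, $q = dQ/d\nu$. The starting observation is that for \emph{any} event $A \in \cF$,
\begin{align}
    P(A) + Q(A^c) = \int_A p \, d\nu + \int_{A^c} q\, d\nu \ge \int_A \min(p,q)\, d\nu + \int_{A^c}\min(p,q)\,d\nu = \int \min(p,q)\,d\nu,
\end{align}
so it suffices to lower bound the total overlap $\int \min(p,q)\,d\nu$ by $\tfrac12 \exp(-\mathrm{KL}(P,Q))$, uniformly over $A$.

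Next I would relate this overlap to the Bhattacharyya/Hellinger affinity $\int \sqrt{pq}\,d\nu$. By Cauchy--Schwarz,
\begin{align}
    \left(\int \sqrt{pq}\,d\nu\right)^2 = \left(\int \sqrt{\min(p,q)}\,\sqrt{\max(p,q)}\,d\nu\right)^2 \le \left(\int \min(p,q)\,d\nu\right)\left(\int \max(p,q)\,d\nu\right),
\end{align}
and since $\int \max(p,q)\,d\nu = \int p\, d\nu + \int q\, d\nu - \int\min(p,q)\,d\nu \le 2$, rearranging yields $\int\min(p,q)\,d\nu \ge \tfrac12\left(\int\sqrt{pq}\,d\nu\right)^2$. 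The last ingredient is a Jensen step: writing the affinity as an expectation under $P$ on the support of $p$ and using convexity of $\exp$,
\begin{align}
    \int \sqrt{pq}\,d\nu \ge \exp\left(\tfrac12\,\E_P[\log(q/p)]\right) = \exp\left(-\tfrac12\,\mathrm{KL}(P,Q)\right).
\end{align}
Chaining the three displays gives $P(A)+Q(A^c)\ge \int\min(p,q)\,d\nu \ge \tfrac12\bigl(\int\sqrt{pq}\,d\nu\bigr)^2 \ge \tfrac12\exp(-\mathrm{KL}(P,Q))$, which is exactly the claim.

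The hard part will not be any individual estimate but the careful bookkeeping of supports and the dominating measure. In particular, the Jensen step must be set up on $\{p>0\}$, with the convention that mass where $p=0$ only helps, since it contributes nonnegatively to $\int\sqrt{pq}\,d\nu$; and I would confirm that the definition $\mathrm{KL}(P,Q) = \E_P[\log(p/q)]$ used here matches the convention in the surrounding lower-bound analysis, so that $\mathrm{KL}=+\infty$ whenever $P \not\ll Q$, in which case the bound holds trivially. Everything else---the elementary $\min/\max$ identity, Cauchy--Schwarz, and the single application of Jensen's inequality---is routine.
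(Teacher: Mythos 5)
Your proof is correct: the reduction $P(A)+Q(A^c)\ge\int\min(p,q)\,d\nu$, the Cauchy--Schwarz step $\bigl(\int\sqrt{pq}\,d\nu\bigr)^2\le\int\min(p,q)\,d\nu\cdot\int\max(p,q)\,d\nu\le 2\int\min(p,q)\,d\nu$, and the Jensen lower bound $\int\sqrt{pq}\,d\nu\ge\exp\bigl(-\tfrac12\mathrm{KL}(P,Q)\bigr)$ chain together to give exactly the claimed Bretagnolle--Huber inequality, and your treatment of the support and absolute-continuity edge cases is sound. The paper itself offers no proof---it simply cites the lemma as Theorem 14.2 of Lattimore and Szepesv\'ari (2020)---and your argument is essentially the standard proof appearing in that reference, so your approach coincides with the paper's (cited) one.
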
 

\begin{lemma}[Variance-adaptive concentration of the UCBs, LCBs, and counters in the influence maximization application]\label{apdx_lem:aux_IM}
It holds that $\Pr\{\cE_{arm, 1}(u)\} \ge 1-2\delta$, $\Pr\{\cE_{arm, 2}(\bar{v}) \} \ge 1-2\delta$, $\Pr\{ \cE_{arm, 3}(\bar{u},\bar{v}) \} \ge 1-2n\delta$, $\Pr\{ \cE_{counter}(u) \} \ge 1-\delta$, $\Pr\{ \cE_{emp, 1}(\bar{v}) \} \ge 1-\delta$, $\Pr\{ \cE_{emp, 2}(\bar{u},\bar{v}) \} \ge 1-\delta$,
where
\begin{align}
    \cE_{arm, 1}(u) &\defeq \left\{ q_u \le \bar{q}_u \le \min\left\{q_u +  4\sqrt{3} \sqrt{\frac{q_u (1-q_u) \log ( \frac {1} {\delta} )}{n} } + 28 \cdot \frac {\log ( \frac{1} {\delta} )} {n} ,1 \right\} \right\}\\
    \cE_{arm, 2}(\bar{v}) &\defeq \left\{ p(\bar{v}) \le \bar{p}(\bar{v}) \le \min\left\{p(\bar{v}) +  4\sqrt{3} \sqrt{\frac{p(\bar{v}) (1-p(\bar{v})) \log ( \frac {1} {\delta} )}{n} } + 28 \cdot \frac {\log ( \frac{1} {\delta} )} {n} ,1 \right\}
      \right\}\\
     \cE_{arm, 3}(\bar{u},\bar{v}) &\defeq \left\{\max\left\{p(\bar{v} | \bar{u}) -  4\sqrt{3} \sqrt{\frac{p(\bar{v} | \bar{u}) (1-p(\bar{v} | \bar{u})) \log ( \frac {1} {\delta} )}{n_{0,\bar{u}}} } - 28 \cdot \frac {\log ( \frac{1} {\delta} )} {n_{0,\bar{u}}} ,0 \right\}  \le \ubar{p}(\bar{v} | \bar{u}) \le p(\bar{v} | \bar{u}) \right\}\\
     \cE_{counter}(u) &\defeq \left \{ n_{0,\bar{u}} \ge \frac{n (1-q_u)}{2} \given n \ge \frac{8 \log \frac{1} {\delta} }{ 1-q_u}  \right \}.\\
     \cE_{emp, 1}(\bar{v}) &\defeq \left \{ \hat{p}(\bar{v}) \le  2p(\bar{v}) \given n \ge \frac{8 \log \frac{1} {\delta} }{ p(\bar{v})}    \right \}\\
     \cE_{emp, 2}(\bar{u},\bar{v}) &\defeq \left \{ \hat{p}(\bar{v}|\bar{u}) \ge  p(\bar{v} | \bar{u})/2 \given n_{0,u} \ge \frac{8 \log \frac{1} {\delta} }{ p(\bar{v}|\bar{u})}    \right \}
\end{align}
\end{lemma}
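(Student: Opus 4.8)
The plan is to prove the six bounds in groups according to the concentration tool each requires, working with a generic failure parameter $\delta$ (the $\log(12nE/\delta)$ scaling in the algorithm is recovered later, in \cref{apdx_thm:IM}, by setting $\delta'=\delta/(12nE)$). The three estimators are all empirical means of i.i.d.\ Bernoulli indicators over the offline cascades: $\hat{q}_u$ averages $\I\{u\in S_{t,0}\}$ over all $n$ samples, $\hat{p}(\bar v)$ averages $\I\{v\notin S_{t,1}\}$ over all $n$ samples, and $\hat{p}(\bar v\mid\bar u)$ averages $\I\{v\notin S_{t,1}\}$ over the subsample with $u\notin S_{t,0}$. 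First I would dispatch the three Chernoff-type events, since these follow the template of \cref{apdx_lem:arm_counter_concen} almost verbatim. For $\cE_{counter}(u)$, the count $n_{0,\bar u}=\sum_t\I\{u\notin S_{t,0}\}$ is a sum of $n$ i.i.d.\ $\mathrm{Bernoulli}(1-q_u)$ variables, so the lower-tail multiplicative Chernoff bound (\cref{lem:mul_chernoff}) with $a=1/2$ gives failure probability $e^{-n(1-q_u)/8}\le\delta$ under $n\ge 8\log(1/\delta)/(1-q_u)$. The upper-tail bound (with $a=1$) applied to $n_{1,\bar v}$ yields $\cE_{emp,1}(\bar v)$, and conditioning on $n_{0,u}$ and applying the lower-tail bound with $a=1/2$ to the conditionally i.i.d.\ indicators of $\{v\notin S_{t,1}\}$ yields $\cE_{emp,2}(\bar u,\bar v)$, each under its stated sample-size condition, where the generous constant $8$ leaves ample slack.

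The three variance-adaptive events $\cE_{arm,1}(u)$, $\cE_{arm,2}(\bar v)$, $\cE_{arm,3}(\bar u,\bar v)$ are the substance of the lemma, and for these I would invoke the empirical Bernstein inequality. The constructed radii $\rho_u$, $\rho(\bar v)$, $\rho(\bar v\mid\bar u)$ are exactly empirical-Bernstein radii in the Bernoulli empirical variances $\hat q_u(1-\hat q_u)$, $\hat p(\bar v)(1-\hat p(\bar v))$, and $\hat p(\bar v\mid\bar u)(1-\hat p(\bar v\mid\bar u))$. The one-sided \emph{validity} halves ($q_u\le\bar q_u$, $p(\bar v)\le\bar p(\bar v)$, and $\ubar{p}(\bar v\mid\bar u)\le p(\bar v\mid\bar u)$) follow directly, since the empirical Bernstein tail bound asserts precisely that the empirical-variance radius dominates the deviation with probability $1-\delta$. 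For the \emph{tightness} halves — the upper bound on $\bar q_u$ stated in terms of the true variance $q_u(1-q_u)$, and symmetrically for the others — I would combine the opposite-tail empirical Bernstein bound on $\hat q_u-q_u$ with a standard empirical-to-true variance conversion $\hat V\le 2V+O(\log(1/\delta)/n)$; substituting this into $\rho_u$ trades the empirical variance for the true one and absorbs the residual lower-order terms into the additive $28\log(1/\delta)/n$ remainder. One checks that the resulting coefficient on $\sqrt{V\log(1/\delta)/n}$, namely $\sqrt2+2\sqrt3\approx 4.87$, is comfortably dominated by the stated $4\sqrt3\approx 6.93$.

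The one structural subtlety, and the reason $\cE_{arm,3}(\bar u,\bar v)$ carries the extra factor $n$ in its failure probability $2n\delta$ (versus $2\delta$ for $\cE_{arm,1},\cE_{arm,2}$), is that $\hat p(\bar v\mid\bar u)$ is formed from the \emph{random} subsample count $n_{0,\bar u}$. I would handle this exactly as in \cref{apdx_lem:arm_concen}: condition on $n_{0,\bar u}=j$, apply the two-sided empirical Bernstein bound to the $j$ conditionally i.i.d.\ indicators, and union-bound over the $n$ possible values $j\in\{1,\dots,n\}$, each of the two tail directions costing $\delta$. The factor $2$ in $\cE_{arm,1}$ and $\cE_{arm,2}$ similarly comes from applying the bound once per tail (validity and tightness) at the fixed sample count $n$, where no union over $j$ is needed.

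The main obstacle is this variance-adaptive step: making the empirical Bernstein constants line up with the stated $(6,9)$ radii and the $(4\sqrt3,28)$ true-variance bounds requires care in the variance-conversion inequality and in verifying that the small-variance regime, where the $\log(1/\delta)/n$ term dominates over $\sqrt{V\log(1/\delta)/n}$, is absorbed with the stated constants. By contrast, the random-sample-count bookkeeping for $\cE_{arm,3}$ and the Chernoff arguments for the remaining events are mechanically routine given the templates already established in \cref{apdx_lem:arm_concen} and \cref{apdx_lem:arm_counter_concen}.
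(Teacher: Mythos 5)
Your proposal is correct and follows essentially the same route as the paper: the three Chernoff-type events are handled exactly as you describe (the argument of \cref{apdx_lem:arm_counter_concen} via \cref{lem:mul_chernoff}, without the union bound over arms), and $\cE_{arm,3}(\bar{u},\bar{v})$ acquires its extra factor of $n$ from precisely the union bound over the possible values of the random count $n_{0,\bar{u}}$ that you propose. The only difference is that for the three variance-adaptive events the paper does not re-derive anything but simply cites Lemma~8 of \citet{liu2022batch}, which already packages the $(6,9)$ empirical radii and the $\left(4\sqrt{3},\,28\right)$ true-variance envelope; your from-scratch derivation via the empirical Bernstein inequality plus the empirical-to-true variance conversion is a self-contained version of that same argument, with the constant bookkeeping checking out as you indicate.
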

\begin{proof}
For $\Pr\{\cE_{arm, 1}(u)\} \ge 1-2\delta$, $\Pr\{\cE_{arm, 2}(\bar{v}) \} \ge 1-2\delta$ they are extracted from Lemma 8 from \citet{liu2022batch} without taking union bound on $t,u,v$ as in \citet{liu2022batch}.  
For $\Pr\{ \cE_{arm, 3}(\bar{u},\bar{v}) \} \ge 1-2n\delta$, it is extracted from Lemma 8 from \citet{liu2022batch} by only taking union bound on $n_{0,\bar{u}}$. For    $\Pr\{ \cE_{counter}(u) \} \ge 1-\delta$, $\Pr\{ \cE_{emp, 1}(\bar{v}) \} \ge 1-\delta$, $\Pr\{ \cE_{emp, 2}(\bar{u},\bar{v}) \} \ge 1-\delta$, they follow the proof of \cref{apdx_lem:arm_counter_concen} without taking union bound on $i \in [m]$.
\end{proof}

\section{Detailed Experiments} \label{app:extended}

In this section, we present experiments to assess the performance of our proposed algorithms using both synthetic and real-world datasets. Each experiment was conducted over 20 independent trials to ensure reliability. All tests were performed on a macOS system equipped with an Apple M3 Pro processor and 18 GB of RAM.

\subsection{Offline Learning for Cascading Bandits}\label{sec:Cascading}

We evaluate our algorithm (Algorithm \ref{alg:CLCB_cascade}) in the cascading bandit scenario by comparing it against the following baseline methods: 1. CUCB-Offline~\cite{chen2016combinatorial}, an offline variant of the non-parametric CUCB algorithm, adapted for our setting. We refer to this modified version as CUCB-Offline.
	2.	EMP~\cite{liu2021multi}, which always selects the action based on the empirical mean of rewards.

\textbf{Synthetic Dataset.}
We conduct experiments on cascading bandits for the online learning-to-rank application described in \cref{sec:app_ranking}, where the objective is to select \(K = 5\) items from a set of \(m = 100\) to maximize the reward \cite{dai2025variance}. 
To simulate the unknown parameter \(\mu_i\), we draw samples from a uniform distribution \(U[0,1]\) over the interval \([0,1]\). In each round \(t\) of the offline pre-collected dataset, a ranked list \(S_t = (a_{t,1}, \ldots, a_{t,K}) \subseteq [m]\) is randomly selected. The outcome \(X_{t,i}\) for each \(i \in S_t\) is generated from a Bernoulli distribution with mean \(\mu_i\).
The reward at round \(t\) is set to 1 if there exists an item \(a_{t,k}\) with index \(k\) such that \(X_{t,a_{t,k}} = 1\). In this case, the learner observes the outcomes for the first \(k\) items of \(S_t\). Otherwise, if no such item exists, the reward is 0, and the learner observes all \(K\) item outcomes as \(X_{t,i} = 0\) for \(i \in S_t\).
Fig.~\ref{fig:cascading_synthetic} presents the average suboptimality gaps of the algorithms across different ranked lists \(n\). The proposed CLCB algorithm outperforms the baseline methods, achieving average reductions in suboptimality gaps of 47.75\% and 20.02\%, compared to CUCB-Offline and EMP algorithms, respectively. These results demonstrate the superior performance of CLCB in offline environments.

\textbf{Real-World Dataset.}
We conduct experiments on a real-world recommendation dataset, the Yelp dataset\footnote{\url{https://www.yelp.com/dataset}}, which is collected by Yelp \cite{dai2024conversational}. On this platform, users contribute reviews and ratings for various businesses such as restaurants and shops. 
Our offline data collection process is as follows: we select a user and randomly draw 200 items (e.g., restaurants or shops) that the user has rated as candidates for recommendation. The agent (i.e., the recommender system) attempts to recommend at most \(K\) items to the user to maximize the probability that the user is attracted to at least one item in the recommended list.
Each item has an unknown probability \(\mu_i\), derived from the Yelp dataset, indicating whether the user finds it attractive. Regarding feedback, the agent collects cascading user feedback offline, observing a subset of the chosen \(K\) items until the first one is marked as attractive (feedback of 1). If the user finds none of the items in the recommended list \(S_t\) attractive, the feedback is 0 for all items.
Fig.~\ref{fig:cascading_real} shows the average suboptimality gaps of different algorithms over \(n=100\) rounds across two action sizes (\(K=4, 8\)). Notably, as \(K\) changes, the optimal reward also adjusts according to the expected reward 
$
r(S_t; \boldsymbol{\mu}) = 1 - \prod_{i \in S_t} (1 - \mu_{i}),
$
which explains why the suboptimality gap for smaller \(K\) tends to be larger compared to that for larger \(K\). CLCB achieves the lowest suboptimality gap compared to CUCB and EMP algorithms, demonstrating its strong performance even on real-world data.

\subsection{Offline Learning for LLM Cache} 
In the LLM Cache scenario, we compare our algorithm (\cref{alg:CLCB-LLM}) against two additional baselines: LFU (Least Frequently Used), which is a caching strategy that evicts the least frequently accessed items to optimize cache usage \cite{zhu2023optimal}, and Least Expected Cost (LEC), which is an advanced caching algorithm that minimizes inference cost by evicting items with the lowest estimated expected cost \cite{zhu2023optimal}.

\textbf{Synthetic Dataset.} 
For the LLM cache application described in \cref{app:LLM_cache}, we simulate the scenario using 100 distinct queries and set the cache size to 40. Consistent with \cite{zhu2023optimal}, the frequency distribution follows a power law with \(\alpha = 0.9\), and the ground truth cost for each query processed is drawn from a Bernoulli distribution with parameter 0.5. 
The simulation is repeated 20 times to ensure robustness, and we report the mean and standard deviation of the results across different dataset sizes $n = \{2, 4, 8, 16, 32, 64, 128, 256, 512, 1024, 2048\}$ in Fig. \ref{fig:LLM_synthetic}. Our normalized results suggest that CLCB-LLM-C significantly outperforms the baseline algorithms, LFU and LEC, achieving an average improvement of 1.32$\times$. These results highlight the effectiveness of CLCB-LLM-C in optimizing cache performance for LLM applications.

\begin{wrapfigure}{r}{0.3\textwidth} 
    \centering
    \includegraphics[width=\linewidth]{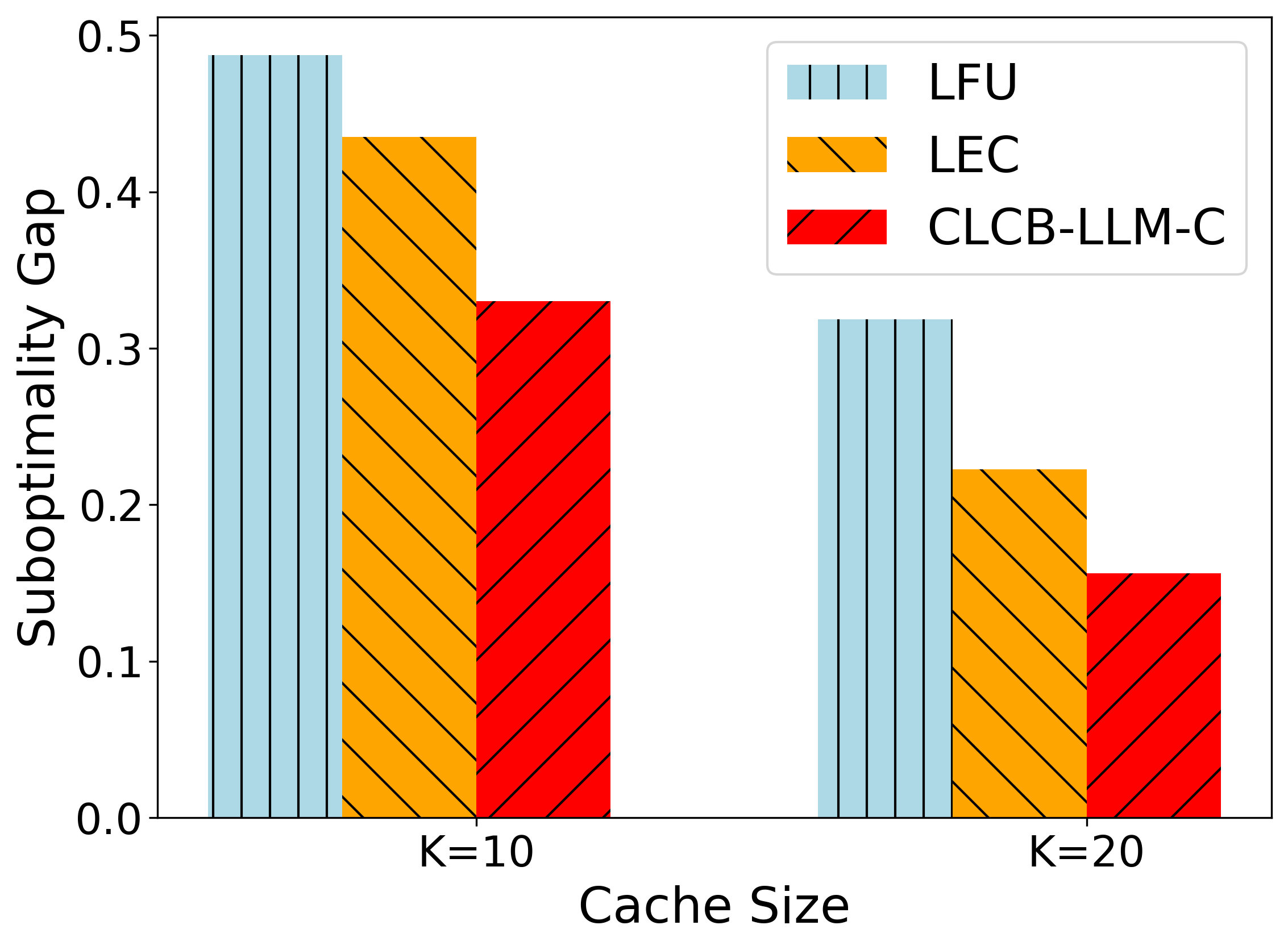}
    \vspace{-0.3in}
    \caption{Algorithms on another LLM.}
    \label{fig:anotherLLM}
    \vspace{-0.1in}
\end{wrapfigure}

\textbf{Real-World Dataset.} 
We use the SciQ dataset \cite{welbl2017crowdsourcing}, which covers a variety of topics, including physics, chemistry, and biology, to evaluate the performance of our proposed CLCB-LLM-C algorithm using OpenAI's LLMs. The cost is defined as the price for API calls, based on OpenAI's official API pricing. Since the cost heavily depends on the token count of the input text, we utilize OpenAI's \texttt{tiktoken} library, designed to tokenize text for various GPT models. 
We consider two different LLMs with distinct encoding strategies. Specifically, we use GPT-4-o with the ``o200k\_base'' encoding to present the main experimental results. Additionally, we experiment with another variant, GPT-4-turbo, which employs the ``cl100k\_base'' encoding \cite{OpenAIAPI}.
For the evaluation, we work with 100 distinct prompts from the SciQ dataset in an offline setting, performing a total of 10,000 queries with cache sizes of $K=10$ and $K=20$, respectively. Fig.~\ref{fig:LLM_real} presents the normalized suboptimality gap of cost over \(n = 100\) rounds. CLCB-LLM-C achieves 36.01\% and 20.70\%, less cost compared to LFU and LEC, respectively. Moreover, a larger $K$ shows a lower suboptimality gap, which is consistent with \cref{thm:LLM_cache_main}.
In addition to the results presented in the main text using GPT-4-o with the ``o200k\_base'' encoding, we experiment with another LLM, GPT-4-turbo with the ``cl100k\_base'' encoding. \cref{fig:anotherLLM} demonstrates the robustness of our algorithm across different LLMs.

\end{document}